\documentclass{article}

 \usepackage[preprint]{neurips_template}

\usepackage[utf8]{inputenc} % allow utf-8 input
\usepackage[T1]{fontenc}    % use 8-bit T1 fonts
\usepackage{hyperref}       % hyperlinks
\usepackage{url}            % simple URL typesetting
\usepackage{booktabs}       % professional-quality tables
\usepackage{amsfonts}       % blackboard math symbols
\usepackage{nicefrac}       % compact symbols for 1/2, etc.
\usepackage{microtype}      % microtypography
\usepackage{xcolor}         % colors

\usepackage{amsmath}
\usepackage{amssymb}
\usepackage{mathtools}
\usepackage{amsthm}
\usepackage{cases}

\usepackage{algorithm}
\usepackage{algorithmic}

\usepackage{thmtools}
\usepackage[capitalize,noabbrev]{cleveref}
\usepackage{enumitem}
\usepackage[most]{tcolorbox}
\usepackage{multirow}
\theoremstyle{plain}
\newtheorem{theorem}{Theorem}[section]
\newtheorem{proposition}[theorem]{Proposition}
\newtheorem{lemma}[theorem]{Lemma}

\theoremstyle{definition}

\newtheorem{assumption}[theorem]{Assumption}
\theoremstyle{remark}
\newtheorem{remark}[theorem]{Remark}

\crefname{definition}{Definition}{Definitions}
\crefname{assumption}{Assumption}{Assumptions}
\crefname{proposition}{Proposition}{Propositions}
\crefname{lemma}{Lemma}{Lemmas}
\crefname{corollary}{Corollary}{Corollaries}
\crefname{remark}{Remark}{Remarks}

\newcommand{\pmdmean}{\textsc{PMD-mean}}
\newcommand{\pmdpart}{\textsc{PMD-part}}
\newcommand{\KL}{\mathrm{KL}}
\newcommand{\TV}{\mathrm{TV}}
\newcommand{\E}{\mathbb{E}}
\newcommand{\Var}{\mathrm{Var}}
\newcommand{\cW}{W} 
\DeclareMathOperator*{\argmax}{argmax}
 
\newcommand{\abs}[1]{\left|#1\right|}

\title{Approximation of Log-Partition Function in Policy Mirror Descent Induces Implicit Regularization for LLM Post-Training}

\author{%
  Zhenghao Xu\textsuperscript{1}\thanks{Work done during internship at Amazon. Emails: \texttt{\{zhenghaoxu,tourzhao\}@gatech.edu}}
  ~~~
  Qin Lu\textsuperscript{2}
  ~~~
  Changlong Yu\textsuperscript{2}
  ~~~
  Tuo Zhao\textsuperscript{1} 
  \\
\textsuperscript{1}Georgia Institute of Technology\quad 
\textsuperscript{2}Amazon
}

\begin{document}

\maketitle

\begin{abstract}
Policy mirror descent (PMD) provides a principled framework for reinforcement learning (RL) by iteratively solving KL-regularized policy improvement subproblems. While this approach has been adopted in training advanced LLMs such as Kimi K1.5/K2, the ideal closed-form PMD updates require reliable partition function estimation, a significant challenge when working with limited rollouts in the vast action spaces of LLMs.

We investigate a practical algorithm, termed \pmdmean, that approximates the log-partition term with the mean reward under the sampling policy and performs regression in log-policy space. Specifically, we characterize the population solution of \pmdmean\ and demonstrate that it implicitly optimizes mirror descent subproblems with an adaptive mixed KL--$\chi^2$ regularizer. This additional $\chi^2$ regularization constrains large probability changes, producing more conservative updates when expected rewards are low and enhancing robustness against finite-sample estimation errors.

Experiments on math reasoning tasks show that \pmdmean\ achieves superior performance with improved stability and time efficiency. These findings deepen our understanding of \pmdmean\ and illuminate pathways toward principled improvements in RL algorithms for LLMs. 
Code is available at \url{https://github.com/horizon-rl/OpenKimi}.
\end{abstract}

\section{Introduction}
\label{sec:intro}

Reinforcement learning (RL) has become a standard paradigm for enhancing post-training of large language models (LLMs) on reasoning tasks and agentic objectives \citep{jaech2024openai,guo2025deepseek}. Despite diverse implementation approaches, most RL algorithms can be formalized as \emph{regularized policy improvement}, an iterative method that updates policies to maximize rewards while maintaining proximity to reference policies.

Policy mirror descent (PMD, \citealt{geist2019theory,tomarmirror}) provides a canonical formalization of this approach by iteratively solving KL-regularized improvement subproblems. In theory, these subproblems admit elegant closed-form solutions that reweight the current policy and renormalize using the partition function. In practice, however, reliably estimating this partition function and fitting the ideal target from finite rollouts presents significant challenges, particularly in the large action space of LLM post-training.

A common approach to solving KL-regularized subproblems involves applying policy gradient methods \citep{williams1992simple} directly to the regularized objective, either by incorporating regularization into the reward function or adding an explicit KL penalty. Methods such as TRPO \citep{schulman2015trust}, PPO \citep{schulman2017proximal}, RLOO \citep{ahmadian2024back}, and GRPO \citep{shao2024deepseekmath} use rollout samples to construct surrogate losses and perform optimally when rollouts are from the current policy, i.e., on-policy.

However, modern efficient RL implementations increasingly leverage large generation batches or asynchronous rollouts to prevent computational bottlenecks from long-tail generations \citep{noukhovitch2024asynchronous,fu2025areal}. 
These approaches typically incur a \emph{staleness tax}: the sampling policy predates the policy being updated, creating a fundamental training/inference mismatch. This mismatch introduces instability that practitioners attempt to mitigate through importance weighting with clipping or similar heuristics \citep{yao2025offpolicy,liu-li-2025-rl-collapse}. While partially effective, these remedial techniques substantially complicate both implementation and theoretical analysis.

This paper investigates an alternative minimalist approach popularized by Kimi K1.5/K2 \citep{team2025kimik15,team2025kimik20} that fundamentally reframes the problem. 
Rather than attempting to mitigate off-policy-ness through complex correction mechanisms and heuristics, this algorithm adopts an off-policy regression perspective on PMD.
Specifically, instead of fitting the exact partition-normalized target, the method approximates the log-partition term with the \emph{mean reward} under the sampling policy and fits a regression target directly in log-policy space. We refer to this practical algorithm as \pmdmean\ (or ``Kimi-style PMD'').

While this mean-reward approximation remains accurate under strong regularization conditions, it can diverge significantly from the partition-normalized update when using smaller regularization, typical in practice. This divergence raises a fundamental question:

\begin{center}
\emph{What does \pmdmean\ optimize exactly, and what are the algorithmic consequences of that objective?}
\end{center}

\textbf{Our results.} We address this fundamental question by deriving a closed-form characterization of the \pmdmean\ population solution. While the ideal KL-regularized PMD update produces a standard Boltzmann reweighting, our analysis reveals that \pmdmean\ generates an update involving the Lambert-$W$ function.

Furthermore, we show that this update is mathematically equivalent to performing mirror descent with a \emph{mixed} KL--$\chi^2$ regularizer, where the $\chi^2$ weight \emph{adapts} dynamically based on the mean reward under the current policy. This additional $\chi^2$ term imposes stronger penalties on probability changes compared to KL alone, and the effect is particularly pronounced when the mean reward is low. This adaptive regularization effectively moderates the convergence rate during the early phases of training, providing a principled explanation for the algorithm's empirical stability.

Our further analysis demonstrates that, compared to fitting the partition-normalized target (\pmdpart), \pmdmean\ exhibits significantly reduced sensitivity to finite-sample errors when rollouts are limited. This characteristic substantially decreases the risk of overfitting to misestimated targets. 
The finding provides a theoretical explanation for \pmdmean's enhanced stability in practical applications: The implicitly induced $\chi^2$ regularization introduces additional robustness that is valuable in the data-constrained scenarios typical of LLM post-training.

\textbf{Contributions.}
We make the following contributions:

$\bullet$ \textbf{Exact characterization of \pmdmean.} We derive the closed-form \pmdmean\ solution in policy space and establish its equivalence to a mirror-descent subproblem with an adaptive mixed KL--$\chi^2$ regularizer.

$\bullet$ \textbf{Regularization and stability mechanism.} We demonstrate that the induced $\chi^2$ term provides additional control over probability ratios, offering substantial regularization even when the nominal KL coefficient is minimal.

$\bullet$ \textbf{Convergence analysis.} Under standard assumptions, we develop an inexact-PMD style convergence analysis that distinguishes \pmdmean\ from \pmdpart\ and characterizes their separations.

$\bullet$ \textbf{Experimental validation.} Through experiments on math reasoning tasks, we empirically confirm \pmdmean's enhanced efficiency and stability while demonstrating its performance advantages over the standard GRPO.

% \textbf{Organization.}
% \Cref{sec:prelim} presents the problem setup and reviews the PMD framework.
% \Cref{sec:pmdmean} derives the closed-form solution of \pmdmean\ and develops the mixed-divergence reformulation.
% \Cref{sec:theory} analyzes the convergence properties and theoretical implications.
% \Cref{sec:experiments} presents supporting experimental results on LLMs.
% \Cref{sec:related} discusses connections to related work, and \Cref{sec:conclusion} provides concluding remarks.
% Complete proofs and detailed experimental details are provided in the Appendix.

% ======================================================================
\section{Preliminaries}
\label{sec:prelim}

For simplicity, we formulate RL for LLM post-training as a contextual bandit problem. Let $x\in\mathcal{X}$ be an input prompt (state) and $y\in\mathcal{Y}$ represent a generated response (action), with $r(x,y)\in[0,1]$ denoting a bounded reward function.
A language model policy $\pi\colon\mathcal{X}\to\Delta(\mathcal{Y})$, which maps states to distributions over actions, induces the expected reward:
\begin{align*}
    J(\pi) \coloneqq \mathbb{E}_{x\sim\mathcal{D}} \mathbb{E}_{y\sim \pi(\cdot\mid x)} \big[r(x,y)\big],
\end{align*}
where $\mathcal{D}$ is the distribution over prompts (e.g., a dataset). 
In practice, LLM policies are implemented as compositions of token-wise softmax distributions, which assign non-zero probability to every token in the vocabulary. Consequently, we can reasonably assume that $\pi$ has full support over the entire action space of possible responses, that is, $\pi(y\mid x)>0$ for all $x\in\mathcal{X}$ and $y\in\mathcal{Y}$.

\textbf{Policy mirror descent.} 
At global step $t$, KL-regularized policy mirror descent (PMD; \citealt{geist2019theory,tomarmirror}) updates policy $\pi_t$ by solving the following optimization problem for each state $x$:
\begin{align}
    \pi_{t+1}(\cdot\mid x) = 
    &\argmax_{\pi(\cdot\mid x)\in \Delta(\mathcal{Y})}
    \mathbb{E}_{y\sim \pi(\cdot\mid x)}[r(x,y)]-
    \tau\cdot\text{KL}\left(\pi(\cdot\mid x)~\|~\pi_t(\cdot\mid x)\right),
    \label{eq:pmd_kl_subproblem}
\end{align}
where $\tau>0$ is the regularization parameter that controls the strength of regularization. The KL divergence between distributions $p$ and $q$ over $\mathcal{Y}$ is $\text{KL}(p~\|~q)\coloneqq\mathbb{E}_{y\sim p}\left[\log\frac{p(y)}{q(y)}\right]$.

This KL-regularized optimization problem in \eqref{eq:pmd_kl_subproblem} admits the following unique closed-form solution:
\begin{align}
    \pi_{t+1}(y\mid x)
    =\frac{\pi_t(y\mid x)\exp(r(x,y)/\tau)}{Z_t(x)},
    \label{eq:boltzmann_update}
\end{align}
where $Z_t(x)\coloneqq \mathbb{E}_{y\sim \pi_t(\cdot\mid x)}\left[e^{r(x,y)/\tau}\right]$ is the partition function that ensures proper normalization. This update resembles a Boltzmann distribution that exponentially reweights the previous policy according to the reward signal.

\textbf{Fitting the target by regression.}
The ideal update in \eqref{eq:boltzmann_update} is computationally infeasible in large action spaces as it requires partition function evaluation and per-action probability assignment. 
% evaluating the partition function over all possible responses. 
A direct off-policy approach to approximate this update is to fit the target in log-policy space using squared regression. After collecting rollouts $y\sim \pi_t(\cdot\mid x)$ and evaluating rewards $r(x,y)$, we define the state-dependent target log-ratio:
\begin{align*}
    s^\star_{\mathrm{part}}(x,y)
    \coloneqq\log \frac{\pi_{t+1}(y\mid x)}{\pi_t(y\mid x)}
    = \frac{r(x,y)}{\tau} - \log Z_t(x),
\end{align*}
where $\pi_{t+1}$ is defined according to Equation \eqref{eq:boltzmann_update}. 
This leads to the following squared regression loss: 
\begin{align}
    &\mathcal{L}_{\mathrm{part}}(\pi)
    \coloneqq\E_{x\sim \mathcal{D}}\E_{y\sim \pi_t(\cdot\mid x)}\left[\frac{1}{2}\Big(\log \frac{\pi(y\mid x)}{\pi_t(y\mid x)} - s^\star_{\mathrm{part}}(x, y)\Big)^2\right].
    \label{eq:L_partition}
\end{align}
If the policy class can represent the target distribution exactly (i.e., is realizable) and $Z_t(x)$ is known precisely, then minimizing \eqref{eq:L_partition} recovers the optimal policy update. 
However, in practice, $Z_t(x)$ must be estimated from the same finite set of rollouts, which introduces significant estimation error. 
For large action spaces and small regularization parameters $\tau$, this estimation can be highly unstable, leading to pathological update behavior (see \Cref{sec:experiments}). 
We refer to this direct fitting approach as \pmdpart.

\textbf{\pmdmean: Log-partition approximation.}
Instead of fitting $s^\star_{\mathrm{part}}$, \citet{team2025kimik15} propose an alternative approach that approximates the log-partition function with the average reward, resulting in a modified loss function. 
Specifically, we define the advantage function $\Delta(x,y)$ under $\pi_t$ with mean reward baseline:
\begin{align*}
\Delta(x,y) \coloneqq r(x,y) - \mathbb{E}_{y^\prime\sim \pi_t(\cdot\mid x)}[r(x,y^\prime)].
\end{align*}
With this definition, the regression objective becomes:
\begin{align}
    &\mathcal{L}_{\mathrm{mean}}(\pi)
    \coloneqq\E_{x\sim \mathcal{D}}\E_{y\sim \pi_t(\cdot\mid x)}\left[\frac{1}{2}\left(\log \frac{\pi(y\mid x)}{\pi_t(y\mid x)}-\frac{\Delta(x,y)}{\tau}\right)^2\right].
    \label{eq:L_mean}
\end{align}
We refer to this practical variant as \pmdmean, which has been adopted to train advanced models such as Kimi K1.5 and K2 \citep{team2025kimik15,team2025kimik20}. 
In practice, the expected reward $\mathbb{E}_{y^\prime\sim \pi_t(\cdot\mid x)}[r(x,y^\prime)]$ can be efficiently estimated using a per-prompt Monte Carlo average over sampled responses.

\section{Implicit Regularization of \pmdmean}
\label{sec:pmdmean}

The approximation adopted by \pmdmean\ is accurate when $\tau\to\infty$. 
However, a large $\tau$ will significantly slow down convergence, and the average reward deviates significantly from the log-partition function when $\tau$ becomes small (\Cref{fig:partition_vs_mean}, left). 
Therefore, the solution of \eqref{eq:L_mean} may differ significantly from the ideal target $\pi_{t+1}$ in \eqref{eq:boltzmann_update}, and thus no longer corresponds to the solution of the KL subproblem \eqref{eq:pmd_kl_subproblem}, even with infinite samples provided (\Cref{fig:partition_vs_mean}, right).

\subsection{Exact Solution of \pmdmean}
\label{sec:closed_form}
To understand the actual objective of \pmdmean, we characterize the population minimizer of \eqref{eq:L_mean}. 
For simplicity, we omit $x$ and consider a single state, writing $\pi_t(y)$ and $\Delta_y$.
\begin{theorem}[\pmdmean\ solution]
    \label{thm:pmdmean_solution}
    Assume $\pi_t(y)>0$ for all $y\in\mathcal{Y}$.
    Let $\Delta_y \coloneqq r(y) - \E_{y^\prime\sim \pi_t}[r(y^\prime)]$ denote the mean-baseline advantage.
    Then the unique minimizer of \eqref{eq:L_mean} over the probability simplex satisfies
    \begin{align}
        \pi_{t+1}(y)
        =\pi_t(y)\exp\Big(\frac{\Delta_y}{\tau}-\cW\Big(\frac{\lambda}{\tau^2}\exp\Big(\frac{\Delta_y}{\tau}\Big)\Big)\Big),
        \label{eq:pmdmean_lambertW_form}
    \end{align}
    where $\cW(\cdot)$ is the principal branch of the Lambert-$W$ function (inverse of $f(w)=w\cdot e^w$) and $\lambda\ge 0$ is a normalization constant chosen such that $\sum_y \pi_{t+1}(y)=1$.
    Moreover, defining $A\coloneqq \E_{\pi_t}[\exp(\Delta_y/\tau)]$ and $B\coloneqq \E_{\pi_t}[\exp(2\Delta_y/\tau)]$, the normalization constant satisfies
    \begin{align}
        \tau^2\frac{A(A-1)}{B}\leq\lambda\leq\tau^2\log A.
        \label{eq:lambda_bounds}
    \end{align}
    For binary rewards $r\in\{0,1\}$ with $p=\E_{\pi_t}[r(y)]$,
    \begin{align}
        \lambda\approx\begin{cases}
            \frac{1}{2}p(1-p), & \tau\to\infty\\
            \tau p(1-p), &\tau\to 0.
        \end{cases}
        \label{eq:lambda_asymptotics}
    \end{align}
\end{theorem}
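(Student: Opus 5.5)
We treat \eqref{eq:L_mean} for a single state as a convex program in the variables $u_y \coloneqq \log(\pi(y)/\pi_t(y))$, i.e. $\pi(y)=\pi_t(y)e^{u_y}$. The objective becomes $\frac12\sum_y \pi_t(y)(u_y-\Delta_y/\tau)^2$, which is strictly convex in $u$. The simplex constraint becomes $\sum_y \pi_t(y)e^{u_y}=1$. The relaxation $\sum_y \pi_t(y)e^{u_y}\le 1$ is a convex set.

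The first step is to show the relaxed minimizer satisfies the equality. The unconstrained minimizer $u=\Delta/\tau$ gives $\sum\pi_t e^{\Delta_y/\tau}=A\ge e^{\E\Delta/\tau}=1$ by Jensen, so it is infeasible unless all $\Delta_y=0$. Hence the constraint is active, and strict convexity gives uniqueness.

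The KKT conditions with multiplier $\mu\ge0$ read $\pi_t(y)(u_y-\Delta_y/\tau)+\mu\pi_t(y)e^{u_y}=0$. Writing $v_y=\Delta_y/\tau-u_y$, this is $v_y=\mu e^{\Delta_y/\tau}e^{-v_y}$, i.e. $v_ye^{v_y}=\mu e^{\Delta_y/\tau}$. Since the right side is nonnegative, $v_y=W(\mu e^{\Delta_y/\tau})$ on the principal branch. Setting $\mu=\lambda/\tau^2$ gives \eqref{eq:pmdmean_lambertW_form}. Existence of $\lambda$ follows from monotonicity: $g(\lambda)=\E_{\pi_t}[e^{\Delta_y/\tau}e^{-W(\cdot)}]$ equals $A\ge1$ at $\lambda=0$, decreases strictly, and tends to $0$. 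Using $e^{-W(z)}=W(z)/z$, we can also write $\pi_{t+1}(y)/\pi_t(y)=\tau^2W(z_y)/\lambda$ with $z_y=\lambda e^{\Delta_y/\tau}/\tau^2$, so normalization reads $\E_{\pi_t}[W(z_y)]=\lambda/\tau^2$.

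\textbf{Bounds \eqref{eq:lambda_bounds}.} For the upper bound, by convexity of $v\mapsto e^{-v}$ and Jensen,
\[
1=\E_{\pi_t}\big[e^{\Delta/\tau}e^{-v}\big]\ge A\,e^{-\tilde v},\qquad \tilde v=\E_{\pi_t}\big[e^{\Delta/\tau}v\big]/A,
\]
under the tilted measure. Combined with $\E[v]\cdot$ identities, this should yield $\lambda\le\tau^2\log A$. I expect the cleaner route to be a bound on $W$. Take $\E[e^{\Delta/\tau}]=1+\E[e^{\Delta/\tau}v]\cdot$… but more directly, use $W(z)\ge \log$-type inequalities. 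Concretely, the defining identity gives $\E_{\pi_t}[v_y]=\E[\Delta/\tau-u]=-\E[u]\ge -\log\E[e^u]=0$; this is the wrong direction. I will instead use $W(z)\le z$, which gives $v_y\le \mu e^{\Delta_y/\tau}$. Then
\[
1=\E[e^{\Delta/\tau-v}]\ge \E\big[e^{\Delta/\tau}(1-v)\big]\ge A-\mu B,
\]
so $\mu\ge (A-1)/B$. This is the lower bound, up to the factor $A$. To get the stated $A(A-1)/B$, I will refine with $e^{-v}\ge$ a tangent line at a better point, or use $e^{-W(z)}=W(z)/z\le 1/(1+z)$-type estimates with Cauchy–Schwarz. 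For the upper bound I will use $W(z)\ge \log(1+z)$-type lower bounds together with Jensen applied to $\E[W(z_y)]=\mu$. Alternatively, I will use $e^{-W(z)}\le 1/(1+W)$ and concavity of $W\circ\exp$ to get $\mu=\E W(\mu e^{\Delta/\tau})\ge W(\mu e^{\E\Delta/\tau})=W(\mu)$. Pinning down exactly which elementary Lambert-$W$ inequalities give the stated constants is the main obstacle.

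\textbf{Asymptotics.} For binary rewards, $\Delta\in\{1-p,-p\}$ with probabilities $p,1-p$, and the normalization becomes a two-term equation in $\mu$.

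As $\tau\to\infty$, expand $W(z)\approx z$ and $e^{\Delta/\tau}\approx1+\Delta/\tau+\Delta^2/(2\tau^2)$. The normalization gives $\mu\approx \Var(\Delta)/(2\tau^2)$, hence $\lambda\to\frac12p(1-p)$. This is consistent with both bounds, since $A-1\approx\log A\approx p(1-p)/(2\tau^2)$.

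As $\tau\to0$, the arguments $z_y$ become huge, and $W(z)\approx\log z-\log\log z$. Solving $\mu=pW(\mu e^{(1-p)/\tau})+(1-p)W(\mu e^{-p/\tau})$ to leading order gives
\[
\mu\approx \frac{p(1-p)+(1-p)(-p)+\dots}{\tau}\,\cdot,
\]
and carefully $\mu\approx p(1-p)/\tau$ from balancing $\E[\Delta]/\tau=0$ with the normalization. I will verify this by substituting the ansatz $\lambda=c\tau$ and matching the dominant terms, checking that the normalization forces $c=p(1-p)$.
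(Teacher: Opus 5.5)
Your derivation of the Lambert-$W$ form is correct. The convex-relaxation argument (the constraint $\E_{\pi_t}[e^{u}]\le 1$ must be active because $A\ge 1$), together with strict monotonicity of $g$, gives existence and uniqueness more cleanly than the paper's remark about the Hessian of the Lagrangian.

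The genuine gap is \eqref{eq:lambda_bounds}. You never prove the upper bound. Your tangent-line argument only gives $\lambda\ge\tau^2(A-1)/B$, which is strictly weaker than the claimed $\tau^2A(A-1)/B$ whenever $A>1$.

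The routes you list for the upper bound cannot work:
\begin{itemize}
\item Tilted Jensen, or any lower bound on $W$ inserted into $\mu=\E_{\pi_t}[W(z_y)]$ (with $\mu=\lambda/\tau^2$), produces lower bounds on $\mu$, not upper bounds.
\item In fact $W(z)\le\log(1+z)$ for $z\ge 0$, not $\ge$.
\item The map $s\mapsto W(\mu e^{s})$ is convex, not concave: its second derivative is $w/(1+w)^3$ with $w=W(\mu e^{s})$. So the inequality $\mu\ge W(\mu)$ you obtain is vacuous.
\end{itemize}
The missing step is concavity of $W$ itself on $[0,\infty)$, applied to the identity you already wrote down. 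This gives $\mu=\E_{\pi_t}[W(\mu e^{\Delta_y/\tau})]\le W(\mu A)$. Applying the increasing map $w\mapsto we^{w}$ to both sides gives $\mu e^{\mu}\le \mu A$, i.e.\ $\lambda\le\tau^2\log A$.

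For the sharp lower bound, the right inequality is $W(z)\ge z/(1+z)$, equivalently $e^{-W(z)}\ge 1/(1+z)$. The direction you wrote already fails at $z=1$, where $e^{-W(1)}=W(1)\approx 0.567>1/2$. With $X=\mu e^{\Delta_y/\tau}$, Cauchy--Schwarz gives $\mu\ge\E[X/(1+X)]\ge(\E X)^2/\E[X(1+X)]=\mu^2A^2/(\mu A+\mu^2B)$, which rearranges to $\mu\ge A(A-1)/B$.

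This gap propagates to the asymptotics. The paper gets the $\tau\to\infty$ limit by sandwiching $\lambda$ between the two bounds in \eqref{eq:lambda_bounds}, both of which equal $\tfrac12 p(1-p)+O(\tau^{-1})$. Your expansion $W(z)\approx z$ is degenerate at first order (it just returns $A=1$). Truncating at second order requires knowing a priori that $\mu\to 0$, which is exactly what the upper bound supplies.

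For $\tau\to 0$, your displayed formula is garbled (its numerator is $0$), and ansatz matching is only formal. You can make it rigorous with the monotonicity of $g$ you already have. Viewed as a function of $\mu$, $g(\mu)=\E_{\pi_t}[W(\mu e^{\Delta_y/\tau})]/\mu$, so for fixed $c>0$,
\[
g(c/\tau)=\tfrac{\tau}{c}\bigl[pW(\tfrac{c}{\tau}e^{(1-p)/\tau})+(1-p)W(\tfrac{c}{\tau}e^{-p/\tau})\bigr]\to \frac{p(1-p)}{c}.
\]
Here use $W(z)\le z$ on the second term and $\log z-\log\log z\le W(z)\le\log z$ on the first. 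Hence $\lambda/\tau$ is eventually trapped between any $c_1<p(1-p)<c_2$. The paper's own argument for this case tacitly needs $\lambda/\tau$ to stay bounded, which again comes from the upper bound: $\lambda/\tau\le\tau\log A\le 1-p$.

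The same monotonicity device at $\mu=c/\tau^2$, with $W(z)=z-z^2+O(z^3)$, would also settle $\tau\to\infty$ without the bounds. But \eqref{eq:lambda_bounds} is part of the statement, so it still needs the concavity and Cauchy--Schwarz arguments above.
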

By \Cref{thm:pmdmean_solution}, the solution of \pmdmean\ has its action probabilities heterogeneously normalized by the Lambert-$W$ function, as opposed to the KL solution \eqref{eq:boltzmann_update} where the normalization term is the log-partition function that is independent of the action $y$. 
Given the monotonicity of $W$, actions with larger $\Delta_y$ will get their probability suppressed compared to the KL solution, while actions with smaller $\Delta_y$ will not be punished as hard. 
Therefore, \pmdmean\ update is less aggressive than \pmdpart.

More precisely, suppose the reward is binary and the average reward under $\pi_t$ is $p$. 
We consider the ratio $\pi_{t+1}/\pi_t$ on positive and negative actions when $\tau$ is small. 
For positive actions, $r(y)=1$, \pmdmean\ yields 
\begin{align}
    \frac{\pi^{\mathrm{mean}}_{t+1}(y)}{\pi_t(y)}
    =\frac{1}{p}-\frac{1-p}{p}e^{-p/\tau}(1+o(1)),
    \label{eq:ratio_pos_mean}
\end{align}
while \pmdpart\ yields 
\begin{align}
    \frac{\pi^{\mathrm{part}}_{t+1}(y)}{\pi_t(y)} 
    &= \frac{1}{p} - \frac{1-p}{p^2}e^{-1/\tau} + O(e^{-2/\tau}).
    \label{eq:ratio_pos_part}
\end{align}
While both ratios approach $1/p$ from below when $\tau\to 0$, the gap is much larger in \pmdmean\ when $p$ is small (e.g., early phase of training), hence giving a more conservative update. 
For negative actions, the separation is clearer: For $r(y)=0$, \pmdmean\ yields 
\begin{align}
    \frac{\pi^{\mathrm{mean}}_{t+1}(y)}{\pi_t(y)}
    =e^{-p/\tau}(1+o(1)),
    \label{eq:ratio_neg_mean}
\end{align}
while \pmdpart\ yields 
\begin{align}
    \frac{\pi^{\mathrm{part}}_{t+1}(y)}{\pi_t(y)} 
    =\frac{1}{p}e^{-1/\tau}+O(e^{-2/\tau}).
    \label{eq:ratio_neg_part}
\end{align}
When $p$ is small, the difference is significant, as illustrated in \Cref{fig:ratio}. 
Full derivations are provided in \Cref{sec:appendix-ratio}.

\begin{figure}[t]
    % \vspace{-0.1in}
    \begin{center}
        \begin{minipage}[c]{0.4\columnwidth}
            \centering
            \includegraphics[width=\linewidth]{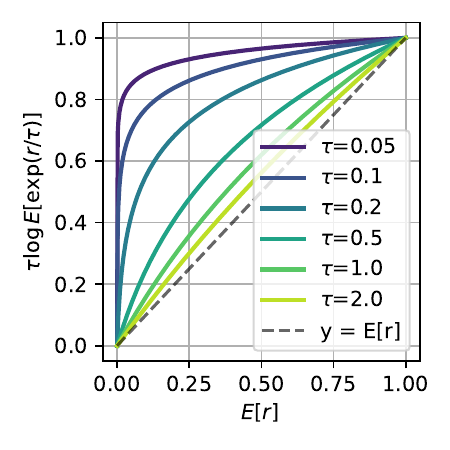}
        \end{minipage}
        \begin{minipage}[c]{0.35\columnwidth}
            \centering
            \includegraphics[width=\linewidth]{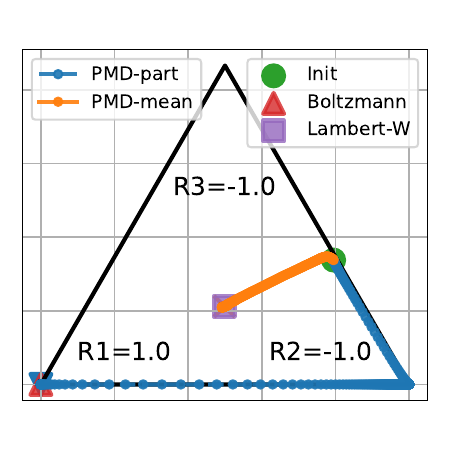}
        \end{minipage}
        \vspace{-0.1in}
        \caption{
        Left: Scaled log-partition function vs average reward assuming binary rewards. The gap is significant for moderate $\tau$.
        Right: Illustration of \pmdmean\ and \pmdpart\ converging to different subproblem solutions in the probability simplex.
        }
        \label{fig:partition_vs_mean}
    \end{center}
    \vspace{-0.1in}
\end{figure}

% ======================================================================
\subsection{\pmdmean\ as Mirror Descent with Mixed KL--$\chi^2$ Regularization}
\label{sec:mixed_div}

The Lambert-$W$ closed form is useful for analysis, while a more transparent insight is that \pmdmean\ is \emph{exactly} solving a different regularized policy improvement problem.
Let $\chi^2(p~\|~q)\coloneqq\E_{y\sim q}\big[(\frac{p(y)}{q(y)}-1)^2\big]$ denote the $\chi^2$-divergence between two distributions $p$ and $q$ over $\mathcal{Y}$.
The following proposition shows that the \pmdmean\ update is equivalent to mirror descent with an additional $\chi^2$ penalty.

\begin{proposition}[Equivalent mixed KL--$\chi^2$ subproblem]
\label{prop:mixed_subproblem}
    Fix a state $x$ (omitted for brevity).
    Let $\pi_{t+1}$ be the \pmdmean\ population solution in \Cref{thm:pmdmean_solution} and $\lambda$ be the same normalization constant.
    Then $\pi_{t+1}$ is the solution to
    \begin{align}
        \pi_{t+1}
        =&\argmax_{\pi\in\Delta(\mathcal{Y})}
        \E_{y\sim \pi}\big[r(y)\big]
        -\tau\KL(\pi~\|~\pi_t)-\frac{\lambda}{2\tau}\chi^2(\pi~\|~\pi_t).
        \label{eq:mixed_subproblem}
    \end{align}
\end{proposition}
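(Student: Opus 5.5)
The plan is to compare the first-order optimality (KKT) conditions of the mixed subproblem \eqref{eq:mixed_subproblem} with the closed form \eqref{eq:pmdmean_lambertW_form} from \Cref{thm:pmdmean_solution}.

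\textbf{Step 1 (concavity and uniqueness).} Write $\rho_y \coloneqq \pi(y)/\pi_t(y)$. The objective in \eqref{eq:mixed_subproblem} becomes
\begin{align*}
\sum_y \pi_t(y)\Big[r(y)\rho_y - \tau\rho_y\log\rho_y - \tfrac{\lambda}{2\tau}(\rho_y-1)^2\Big].
\end{align*}
Since $\lambda\ge 0$ and $\tau>0$, this is strictly concave in $\rho$. It is maximized over the affine constraint $\sum_y\pi_t(y)\rho_y=1$ with $\rho\ge 0$, so a maximizer, if it exists, is unique. Moreover, $\partial/\partial\rho_y$ of $-\tau\rho_y\log\rho_y$ tends to $+\infty$ as $\rho_y\to 0^+$. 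Hence the maximizer lies in the interior, every $\rho_y>0$, and the nonnegativity multipliers vanish. If $\mathcal{Y}$ is infinite, I would argue on finite supports, or simply note that the stationarity conditions below are sufficient for optimality by concavity.

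\textbf{Step 2 (stationarity).} Introduce a multiplier $\mu$ for normalization. Stationarity reads
\begin{align*}
r(y) - \tau(\log\rho_y+1) - \tfrac{\lambda}{\tau}(\rho_y-1) = \mu \quad\text{for all } y.
\end{align*}
Subtract the constant $\E_{\pi_t}[r]$, which can be absorbed into $\mu$, and divide by $\tau$. This gives
\begin{align*}
\log\rho_y + \tfrac{\lambda}{\tau^2}\rho_y = \tfrac{\Delta_y}{\tau} - c
\end{align*}
for a constant $c$. Now set $u_y \coloneqq \tfrac{\lambda}{\tau^2}\rho_y$ (the case $\lambda=0$ is trivial). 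Then $u_y e^{u_y} = \tfrac{\lambda}{\tau^2}e^{\Delta_y/\tau - c}$, so $\rho_y = \exp\big(\Delta_y/\tau - c - W(\tfrac{\lambda}{\tau^2}e^{\Delta_y/\tau-c})\big)$.

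\textbf{Step 3 (identify $c=0$).} This is the main point. I would rederive the stationarity condition of \eqref{eq:L_mean} over the simplex. With $s_y=\log\rho_y$ and a multiplier $\lambda'$ for $\sum_y\pi_t(y)e^{s_y}=1$, it reads $s_y - \Delta_y/\tau + \lambda' e^{s_y}=0$. This matches Step 2 with $c=0$ and $\lambda/\tau^2=\lambda'$, which is exactly how \eqref{eq:pmdmean_lambertW_form} arises in \Cref{thm:pmdmean_solution}.

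To conclude, I would set $c=0$ and use the $\lambda$ given by \Cref{thm:pmdmean_solution}. Then the $\rho_y$ from Step 2 coincides with $\pi_{t+1}(y)/\pi_t(y)$, which is normalized by the theorem. So this candidate satisfies both stationarity, with $\mu = \E_{\pi_t}[r]-\tau+\lambda/\tau$, and the constraint. By concavity it is the maximizer, and by uniqueness it equals the solution of \eqref{eq:mixed_subproblem}.

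The main obstacle is the bookkeeping that makes the constant $c$ vanish. The $\chi^2$ term contributes $-\tfrac{\lambda}{\tau}\rho_y+\tfrac{\lambda}{\tau}$, and the KL term contributes $-\tau$. Both constants must be absorbed into $\mu$ without disturbing the coefficient $\lambda/\tau^2$ in front of $\rho_y$. Since the $\lambda$ in the proposition is fixed to be the theorem's, I do not need to show that the multiplier is determined by normalization. I only need to exhibit consistent KKT multipliers, which removes any circularity.
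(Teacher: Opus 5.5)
Your proposal is correct and follows essentially the same route as the paper. Both write down the stationarity condition of the mixed subproblem. Both then note that the free normalization multiplier can be chosen (your explicit $\mu=\E_{\pi_t}[r]-\tau+\lambda/\tau$, the paper's baseline shift $b=\tau c$) so that the mixed KKT system coincides with the \pmdmean\ system $u(y)-\Delta_y/\tau+\frac{\lambda}{\tau^2}e^{u(y)}=0$, $\E_{\pi_t}[e^{u}]=1$. Your concavity, interiority and uniqueness argument in Step 1 adds rigor: it makes explicit why these KKT conditions are sufficient, which the paper assumes without stating.
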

\Cref{prop:mixed_subproblem} follows directly from the KKT conditions of both problems (see \Cref{sec:proof_mixed}). 
The $\chi^2$ penalty directly suppresses large policy ratio spikes and provides a stronger regularization compared to KL, as $\KL(p~\|~q)\leq \chi^2(p~\|~q)$ for $p,q$ with full support. 
Moreover, \Cref{thm:pmdmean_solution} shows the effective strength $\lambda/\tau$ is \emph{adaptive}.
For binary rewards, $\lambda/\tau$ remains $O(1)$ even as $\tau\to 0$, which implies \pmdmean\ still regularizes updates when the nominal KL regularization is small.

\begin{remark}[Connection to $\chi^2$ preference optimization]    
The mixed KL--$\chi^2$ regularizer has appeared in $\chi^2$-regularized preference optimization ($\chi$PO, \citealt{huang2024correcting}), which provides provable guarantees to avoid overoptimization in KL-regularized preference optimization. 
Our results show that \pmdmean\ can be interpreted as an online version of this idea, with an adaptive coefficient $\lambda$ tied to the rollout reward distribution. 
\end{remark}
\begin{remark}[Policy ratios compared to \citealt{huang2024correcting}]    
\citet{huang2024correcting} show in their Proposition 4.2 that mixed KL--$\chi^2$ regularized problem yields (in our notations)
\begin{align}\label{eq:huang_mixed_ratio}
    \exp\left(-\frac{R_{\max}}{\tau}\right)\lesssim\frac{\pi^\mathrm{mix}_{t+1}}{\pi_t}\lesssim 1+\frac{R_{\max}}{\tau},
\end{align}
while the KL-regularized problem yields 
\begin{align}\label{eq:huang_kl_ratio}
    \exp\left(-\frac{R_{\max}}{\tau}\right)\lesssim\frac{\pi^\mathrm{KL}_{t+1}}{\pi_t}\lesssim \exp\left(\frac{R_{\max}}{\tau}\right),
\end{align}
where $r\in[0, R_{\max}]$ is the bound of rewards. 
This highlights a worst-case {polynomial vs. exponential} contrast in $1/\tau$ for the upper policy ratio control. 
However, these bounds are uniform in $R_{\max}$ and can be vacuous in regimes relevant to our binary reward analysis. 
Particularly, \eqref{eq:huang_kl_ratio} neglects the partition normalizer that also grows with $1/\tau$. 
In contrast, our setting admits substantially sharper and distribution-dependent behavior for small $\tau$. 
As shown in \eqref{eq:ratio_pos_mean} and \eqref{eq:ratio_pos_part}, both ratios share the same constant upper bound $1/p$, with different exponential rates of approaching this upper bound. 
Therefore, the gap between mixed and KL divergence in our setting is more of a distinction in the distribution-dependent exponential rates, i.e., $O(e^{p/\tau})$ vs. $O(e^{1/\tau})$, instead of polynomial vs. exponential. 
\end{remark}

\begin{figure}[t]
     % \vspace{-0.1in}
  \begin{center}
    \centerline{\includegraphics[width=0.75\columnwidth]{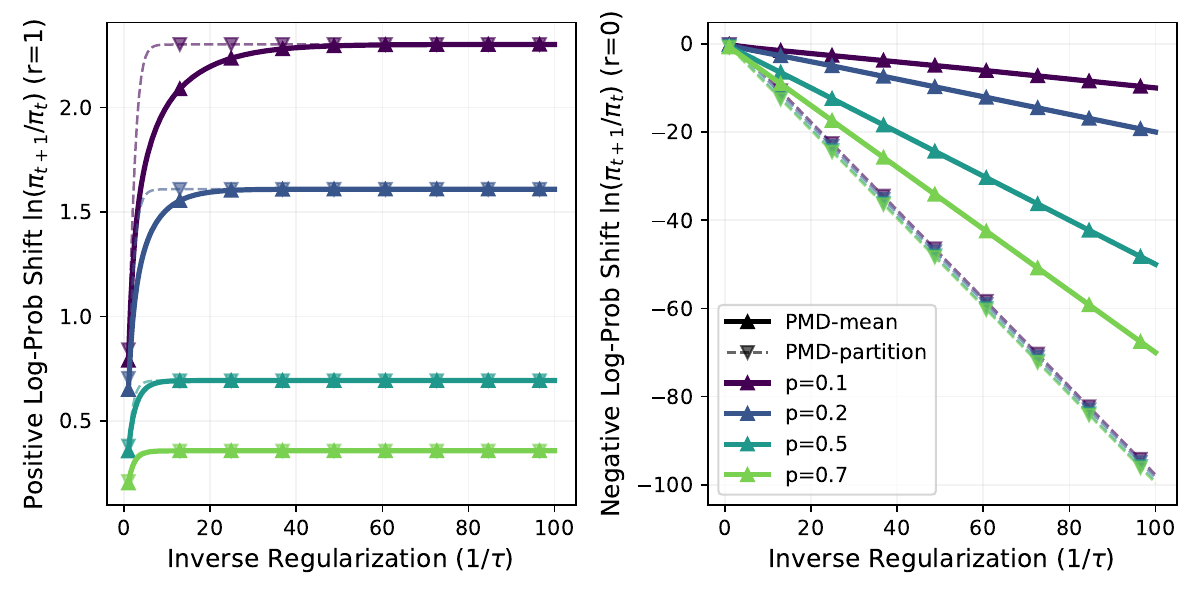}}
    \vspace{-0.1in}
    \caption{
      The (log) probability ratio of updates in \pmdmean\ is more conservative than that in \pmdpart\ for binary rewards. 
    }
    \label{fig:ratio}
  \end{center}
  \vspace{-0.2in}
\end{figure}

\section{Implications on Convergence}
\label{sec:theory}
We present an inexact-PMD convergence analysis from the regression view to illustrate the implications of the implicit regularization on convergence. 
For clarity, we still suppress $x$ and analyze one state. The extension to averaging over the prompt distribution $x\sim\mathcal{D}$ is standard.

\subsection{One-Step Policy Improvement}
\label{sec:convergence}
Recall $J(\pi)\coloneqq \E_{y\sim \pi}[r(y)]$ where $r(y)\in[0,1]$.
At iteration $t$, let $\pi^\star_{t+1}$ denote the ideal target update in \pmdpart\ or \pmdmean,
and define its target log-ratio $s^\star(y)\coloneqq \log\frac{\pi^\star_{t+1}(y)}{\pi_t(y)}$. 
Let $\Pi$ be the policy class and define, for each $\pi\in\Pi$, $s_\pi(y)\coloneqq \log\frac{\pi(y)}{\pi_t(y)}$.
The goal of \pmdpart\ and \pmdmean\ is to fit the ideal target $s^\star$ with $s_\pi$ by minimizing the population loss
\begin{align}\label{eq:population-loss}
    \mathcal{L}_t(\pi)
    &\coloneqq
    \frac{1}{2}\E_{y\sim\pi_t}\big[(s_\pi(y)-s^\star(y))^2\big].
\end{align}

In practice, $\pi^\star_{t+1}$ depends on population quantities (e.g., $\E_{\pi_t}[r]$ or $\log Z_t$) that are not exactly available, so one instead forms an estimated target update $\widetilde{\pi}^\star_{t+1}$ via finite MC samples.
Given i.i.d. samples $y_1,\dots,y_n\sim \pi_t$, we minimize the empirical loss
\begin{align}\label{eq:empirical-loss}
    \widehat{\mathcal{L}}_t(\pi)
    &\coloneqq
    \frac{1}{2n}\sum_{i=1}^n (s_\pi(y_i)-\widetilde{s}_{-i}^\star(y_i))^2,
\end{align}
where the target at $y_i$ becomes the leave-one-out (LOO) estimated version $\widetilde{s}_{-i}^\star(y_i)$: 
\begin{numcases}{\widetilde{s}_{-i}^\star(y_i)=}
    \frac{1}{\tau}\Bigl(r(y_i)-\frac{1}{n-1}\sum_{j\neq i}r(y_j)\Bigr) & \text{(\pmdmean)}\label{eq:loo-s-mean}\\
    \frac{r(y_i)}{\tau}-\log\Bigl(\frac{1}{n-1}\sum_{j\neq i}e^{r(y_j)/\tau}\Bigr) & \text{(\pmdpart)}\label{eq:loo-s-part}
\end{numcases}

Let $\widehat{\pi}_{t+1}$ be an approximate ERM solution and set $\pi_{t+1}\coloneqq \widehat{\pi}_{t+1}$. 
We make the following assumptions for analysis. 

\begin{assumption}[Realizability]
\label{asm:realizable_plugin}
    $\pi^\star_{t+1}\in\Pi$.
\end{assumption}

\begin{assumption}[Bounded optimization error]
\label{asm:opt_err}
    $\widehat{\mathcal{L}}_{t}(\widehat{\pi}_{t+1})\leq \inf_{\pi\in\Pi}\widehat{\mathcal{L}}_t(\pi)+\epsilon_{\mathrm{opt}}$.
\end{assumption}

\begin{assumption}[Bounded log-ratio]
\label{asm:bounded_logratio}
    There exist $B,B_+>0$ such that for all $\pi\in\Pi$ and $y\in\mathcal{Y}$,
    \begin{align}
        s_\pi(y)\leq B_+,
        \quad
        \abs{s_\pi(y)}\leq B.
        \label{eq:bounded_logratio_conv}
    \end{align}
\end{assumption}

\begin{assumption}[Finite policy class]
\label{asm:finite_class}
    $\abs{\Pi}<\infty$.
\end{assumption}

\Cref{asm:realizable_plugin} is standard in the sample-efficient RL literature \citep{foster2023foundations}. 
\Cref{asm:opt_err} is a generic assumption that allows focusing on statistical rates without involving overcomplicated subproblem optimization dynamics.
\Cref{asm:bounded_logratio} can be achieved by restricting the policy class via clipping. 
\Cref{asm:finite_class} is for simplicity, and one can extend it to general policy classes with $\abs{\Pi}$ replaced by other complexity metrics, e.g., covering number.  
Under these assumptions, we have the following lemma that characterizes the error of the approximate ERM solution $\widehat{\pi}_{t+1}$. 
\begin{lemma}[Empirical minimization]\label{thm:erm_loo}
    For global step $t$, suppose $y_1,\dots,y_n\sim\pi_t$ are i.i.d. samples. 
    Let $\mathcal{L}_t(\pi)$ and $\widehat{\mathcal{L}}_t(\pi)$ denote the population and empirical losses defined in \eqref{eq:population-loss} and \eqref{eq:empirical-loss}, respectively. 
    Define the target mismatch
    \begin{align*}
        \Delta_i \coloneqq \widetilde{s}^\star_{-i}(y_i)-s^\star(y_i),
        \quad
        \overline{\Delta^2}\coloneqq \frac{1}{n}\sum_{i=1}^n \Delta_i^2.
    \end{align*}
    Under \Cref{asm:realizable_plugin,asm:opt_err,asm:bounded_logratio,asm:finite_class}, for any $\delta\in(0,1)$, with probability at least $1-\delta$,
    \begin{align}
    \label{eq:erm_master_bound}
        \mathcal{L}_t(\widehat{\pi}_{t+1})
        \lesssim
        \frac{B^2\log(\abs{\Pi}/\delta)}{n} + \epsilon_{\mathrm{opt}} + \overline{\Delta^2}.
    \end{align}
\end{lemma}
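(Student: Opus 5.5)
We reduce everything to the "clean" empirical loss with the population target,
\begin{align*}
\widehat{\mathcal{L}}^\circ_t(\pi)\coloneqq \frac{1}{2n}\sum_{i=1}^n \big(s_\pi(y_i)-s^\star(y_i)\big)^2,
\end{align*}
whose expectation under $y_i\sim\pi_t$ is exactly $\mathcal{L}_t(\pi)$. The argument has three steps: (i) compare $\widehat{\mathcal{L}}_t$ with $\widehat{\mathcal{L}}^\circ_t$ deterministically, up to a multiple of $\overline{\Delta^2}$; (ii) show that the ERM output has small clean empirical loss, using realizability; (iii) transfer the clean empirical loss to the population loss by a uniform concentration bound over the finite class $\Pi$.

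\textbf{Step (i): deterministic comparison.} Since $\widetilde{s}^\star_{-i}(y_i)=s^\star(y_i)+\Delta_i$, the inequality $(a+b)^2\le 2a^2+2b^2$, used in both directions, gives for every $\pi$
\begin{align*}
\widehat{\mathcal{L}}^\circ_t(\pi)\le 2\widehat{\mathcal{L}}_t(\pi)+\overline{\Delta^2},
\qquad
\widehat{\mathcal{L}}_t(\pi)\le 2\widehat{\mathcal{L}}^\circ_t(\pi)+\overline{\Delta^2}.
\end{align*}
This step is purely algebraic. It needs no independence, which matters because the LOO targets depend on all the samples.

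\textbf{Step (ii): small clean empirical loss at the ERM.} By \Cref{asm:realizable_plugin}, $\pi^\star_{t+1}\in\Pi$ and $\widehat{\mathcal{L}}^\circ_t(\pi^\star_{t+1})=0$. The second comparison then gives $\inf_\Pi\widehat{\mathcal{L}}_t\le\widehat{\mathcal{L}}_t(\pi^\star_{t+1})\le\overline{\Delta^2}$. With \Cref{asm:opt_err}, $\widehat{\mathcal{L}}_t(\widehat\pi_{t+1})\le \overline{\Delta^2}+\epsilon_{\mathrm{opt}}$. The first comparison converts this to
\begin{align*}
\widehat{\mathcal{L}}^\circ_t(\widehat\pi_{t+1})\le 3\overline{\Delta^2}+2\epsilon_{\mathrm{opt}}.
\end{align*}

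\textbf{Step (iii): uniform concentration.} For each fixed $\pi\in\Pi$, define $Z_i^\pi\coloneqq (s_\pi(y_i)-s^\star(y_i))^2$. These are i.i.d., nonnegative, and satisfy $Z_i^\pi\le 4B^2$ by \Cref{asm:bounded_logratio} (realizability also puts $s^\star$ within $[-B,B]$). A bounded nonnegative variable has variance at most its range times its mean, so $\mathrm{Var}(Z^\pi)\le 4B^2\,\E Z^\pi = 8B^2\mathcal{L}_t(\pi)$.

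Apply Bernstein's inequality and a union bound over $|\Pi|$ policies (\Cref{asm:finite_class}). With probability at least $1-\delta$, simultaneously for all $\pi$,
\begin{align*}
\mathcal{L}_t(\pi)\le \widehat{\mathcal{L}}^\circ_t(\pi)+c\sqrt{\frac{B^2\mathcal{L}_t(\pi)\log(|\Pi|/\delta)}{n}}+c\frac{B^2\log(|\Pi|/\delta)}{n}.
\end{align*}
By AM--GM, the square-root term is at most $\tfrac12\mathcal{L}_t(\pi)+O(B^2\log(|\Pi|/\delta)/n)$. Absorbing it gives $\mathcal{L}_t(\pi)\lesssim \widehat{\mathcal{L}}^\circ_t(\pi)+B^2\log(|\Pi|/\delta)/n$ for all $\pi\in\Pi$. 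This holds uniformly, so it applies to the data-dependent $\widehat\pi_{t+1}$. Plugging in step (ii) yields \eqref{eq:erm_master_bound}.

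\textbf{Main obstacle.} The delicate point is obtaining the fast rate $1/n$ rather than $1/\sqrt{n}$. This requires the variance--mean (Bernstein) argument above, and it relies on realizability, since $\inf_\Pi \mathcal{L}_t=0$. A secondary subtlety is that the LOO targets make the summands of $\widehat{\mathcal{L}}_t$ dependent. This is why concentration is applied only to the clean loss $\widehat{\mathcal{L}}^\circ_t$, whose summands are i.i.d., and the estimation error enters only through the sample-wise quantity $\overline{\Delta^2}$.
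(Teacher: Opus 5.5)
Your proposal is correct and follows essentially the same route as the paper. Both proofs compare the LOO-target loss with the clean loss $\frac{1}{2n}\sum_i (s_\pi(y_i)-s^\star(y_i))^2$ via an elementary squared-difference inequality, and use realizability plus the optimization-error assumption to bound the clean empirical loss at $\widehat{\pi}_{t+1}$ by $O(\overline{\Delta^2}+\epsilon_{\mathrm{opt}})$. They then transfer this to the population loss with a self-bounding Bernstein bound (variance at most range times mean) and a union bound over $\Pi$. The differences are only cosmetic. You get constant $3$ rather than $2$ on $\overline{\Delta^2}$ because you bound $\widehat{\mathcal{L}}_t(\pi^\star_{t+1})$ by $\overline{\Delta^2}$ instead of its exact value $\tfrac{1}{2}\overline{\Delta^2}$, and you bound the residual by $2B$ directly instead of by $B+B_+$.
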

The proof is provided in \Cref{sec:proof_erm_loo}. 
The ERM solution error is then translated into the gap between $J(\pi^\star_{t+1})$ and $J(\widehat{\pi}_{t+1})$ that affects the one-step policy improvement. 
\begin{theorem}[One-step policy improvement]
    \label{cor:convergence_one_step}
    Under \Cref{asm:realizable_plugin,asm:opt_err,asm:bounded_logratio,asm:finite_class}, suppose the ideal population update $\pi^\star_{t+1}$ satisfies, for some $\eta_t\in(0,1]$,
    \begin{align}
        1 - J(\pi^\star_{t+1})
        \leq
        (1-\eta_t)\big(1-J(\pi_t)\big).
        \label{eq:ideal_improvement}
    \end{align}
    Let $\pi_{t+1}\coloneqq\widehat{\pi}_{t+1}$ and $\overline{\Delta^2}$ be as in \Cref{thm:erm_loo}. 
    Then for $\delta\in(0,1)$, with probability at least $1-\delta$,
    \begin{align}
        &1 - J(\pi_{t+1})
        \leq (1-\eta_t)\big(1-J(\pi_t)\big)+O\Big(e^{B_+/2}\Big(B\sqrt{\frac{\log(\abs{\Pi}/\delta)}{n}}+\sqrt{\epsilon_{\mathrm{opt}}}
        +\sqrt{\overline{\Delta^2}}\Big)\Big).
        \label{eq:one_step_improvement}
    \end{align}
\end{theorem}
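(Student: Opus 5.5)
We combine \Cref{thm:erm_loo} with a perturbation bound that converts a small $\pi_t$-weighted squared error in log-ratio space into a small gap in expected reward. First write
\begin{align*}
1-J(\pi_{t+1}) = \big(1-J(\pi^\star_{t+1})\big) + \big(J(\pi^\star_{t+1})-J(\widehat{\pi}_{t+1})\big)
\leq (1-\eta_t)\big(1-J(\pi_t)\big) + \abs{J(\pi^\star_{t+1})-J(\widehat{\pi}_{t+1})},
\end{align*}
using \eqref{eq:ideal_improvement}. It therefore suffices to show $\abs{J(\pi^\star_{t+1})-J(\widehat{\pi}_{t+1})}\lesssim e^{B_+/2}\sqrt{\mathcal{L}_t(\widehat{\pi}_{t+1})}$. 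Plugging in the high-probability bound \eqref{eq:erm_master_bound} and using $\sqrt{a+b+c}\le\sqrt a+\sqrt b+\sqrt c$ then yields exactly the three terms in \eqref{eq:one_step_improvement}.

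For the perturbation step, since $r\in[0,1]$ we have
\begin{align*}
\abs{J(\pi^\star_{t+1})-J(\widehat\pi_{t+1})}\le \sum_y\abs{\pi^\star_{t+1}(y)-\widehat\pi_{t+1}(y)} = \E_{y\sim\pi_t}\big[\abs{e^{s^\star(y)}-e^{s_{\widehat\pi}(y)}}\big].
\end{align*}
By \Cref{asm:realizable_plugin}, $\pi^\star_{t+1}\in\Pi$, so \Cref{asm:bounded_logratio} gives both $s^\star\le B_+$ and $s_{\widehat\pi}\le B_+$. We then apply the mean-value inequality in the form
\begin{align*}
\abs{e^a-e^b}\le \big(e^{a/2}+e^{b/2}\big)\,\frac{\abs{e^{a/2}-e^{b/2}}}{1}, \qquad \abs{e^{a/2}-e^{b/2}}\le \tfrac12 e^{\max(a,b)/2}\abs{a-b}.
\end{align*}
Combined with Cauchy--Schwarz, this gives
\begin{align*}
\E_{\pi_t}\abs{e^{s^\star}-e^{s_{\widehat\pi}}}\le \Big(\E_{\pi_t}\big(e^{s^\star/2}+e^{s_{\widehat\pi}/2}\big)^2\Big)^{1/2}\Big(\E_{\pi_t}\big(e^{s^\star/2}-e^{s_{\widehat\pi}/2}\big)^2\Big)^{1/2}.
\end{align*}
The first factor is at most $\sqrt{2(\E_{\pi_t}e^{s^\star}+\E_{\pi_t}e^{s_{\widehat\pi}})}=2$, because each $e^{s_\pi}$ integrates to one under $\pi_t$. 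The second factor is at most $\tfrac12 e^{B_+/2}\sqrt{2\mathcal{L}_t(\widehat\pi_{t+1})}$. This is the Hellinger-type argument, and it is what produces the factor $e^{B_+/2}$ rather than $e^{B_+}$. A cruder alternative is to bound directly by $e^{B_+}\E_{\pi_t}\abs{s^\star-s_{\widehat\pi}}$; the square-root split is what gives the sharper $e^{B_+/2}$.

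The main point requiring care is matching exponents. A naive Lipschitz bound gives $e^{B_+}$; the Hellinger decomposition, exploiting that both densities are normalized, recovers $e^{B_+/2}$. One must also confirm that $\pi^\star_{t+1}$ satisfies the log-ratio bounds, which follows from realizability. Finally, the probability-$1-\delta$ event is inherited directly from \Cref{thm:erm_loo}, so no further union bound is needed. The constant $B$ enters only through the statistical term of \eqref{eq:erm_master_bound}.
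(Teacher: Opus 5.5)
Your proof is correct and has the same skeleton as the paper's. You split off $1-J(\pi^\star_{t+1})$ via \eqref{eq:ideal_improvement}, bound the reward gap by $\E_{\pi_t}\abs{e^{s^\star}-e^{s_{\widehat\pi}}}$, use realizability to get $s^\star\le B_+$, apply Cauchy--Schwarz, and plug in \Cref{thm:erm_loo}.

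The only real difference is the inner inequality. You use the Hellinger-type factorization $\abs{e^a-e^b}=(e^{a/2}+e^{b/2})\abs{e^{a/2}-e^{b/2}}$. The paper uses the plain Lipschitz bound $\abs{e^a-e^b}\le\max\{e^a,e^b\}\abs{a-b}$ and then Cauchy--Schwarz. That gives the prefactor $\sqrt{\E_{\pi_t}[e^{2s}]}=\sqrt{1+\chi^2(\pi\,\|\,\pi_t)}$, which it bounds by $e^{B_+/2}$ via $\E_{\pi_t}[e^{2s}]\le e^{B_+}\E_{\pi_t}[e^{s}]=e^{B_+}$.

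So your remark that the Lipschitz route only yields $e^{B_+}$ is inaccurate. That happens only if $\max\{e^a,e^b\}$ is bounded in sup norm. Combined with Cauchy--Schwarz, the Lipschitz bound uses normalization just as your factorization does, and gives the same $e^{B_+/2}$.

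The two versions buy slightly different things:
\begin{itemize}
\item The paper's phrasing expresses the error-amplification factor through $\chi^2(\cdot\,\|\,\pi_t)$. This ties it directly to the paper's $\chi^2$-regularization theme.
\item Yours isolates all $B_+$-dependence in a single mean-value step on $e^{x/2}$. The other factor is the clean constant $2$.
\end{itemize}
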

% The proof is provided in \Cref{sec:proof_cor_one_step}. 

\subsection{Instantiation and Separation}
\label{sec:instantiate_eta_B}
We now specialize \Cref{cor:convergence_one_step} to the binary reward model $r(y)\in\{0,1\}$ with $p_t \coloneqq J(\pi_t) \in (0,1)$ in the small $\tau>0$ regime, and instantiate
(i) the {ideal improvement rate} $\eta_t$ in \eqref{eq:ideal_improvement}, 
(ii) the {bounded log-ratio} constants $(B,B_+)$ in \Cref{asm:bounded_logratio}, and 
(iii) the {target estimation error} $\overline{\Delta^2}$. 
% We focus on the small $\tau$ regime where the separation is clearer. 
These quantities highlight a separation between \pmdpart\ and \pmdmean\ at the early phase of training where $p_t$ is small:
\pmdpart\ has a faster \emph{ideal} convergence rate, while \pmdmean\ can be more \emph{robust} to statistical errors under finite rollouts.

\subsubsection{Ideal Convergence Rate $\eta_t$}
\label{sec:eta_instantiation}
The ideal improvement rate is a direct consequence of our analysis in \Cref{sec:pmdmean}, and reveals the behavior when the rollout sample size $n$ is large. 
\begin{proposition}[Ideal contraction for \pmdmean\ with small $\tau$]
\label{prop:eta_mean_asymp}
    Let $\pi^\star_{t+1}$ be the ideal update \eqref{eq:pmdmean_lambertW_form} with binary rewards. Then as $\tau\to 0$ we have
    \begin{align}
        \eta_t^{\mathrm{mean}}
        =1-\exp\left(-\frac{p_t}{\tau}\right)\big(1+o(1)\big).
        \label{eq:eta_mean_asymp}
    \end{align}
\end{proposition}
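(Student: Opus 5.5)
The plan is to compute $1-J(\pi^\star_{t+1})$ exactly in terms of the mass that \pmdmean\ leaves on negative actions, and then use the small-$\tau$ ratio asymptotics already recorded in \eqref{eq:ratio_neg_mean}. With binary rewards, $1-J(\pi^\star_{t+1})=\sum_{y:r(y)=0}\pi^\star_{t+1}(y)$. On the negative set $\Delta_y=-p_t$, so by \Cref{thm:pmdmean_solution} the ratio $\pi^\star_{t+1}(y)/\pi_t(y)$ takes a single common value there, namely
\begin{align*}
\rho_-\coloneqq \exp\Big(-\frac{p_t}{\tau}-\cW\Big(\frac{\lambda}{\tau^2}e^{-p_t/\tau}\Big)\Big).
\end{align*}
Hence $1-J(\pi^\star_{t+1})=\rho_-\,(1-p_t)$. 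Comparing with \eqref{eq:ideal_improvement}, the tight choice is $1-\eta_t=\rho_-$, so it suffices to prove $\rho_-=e^{-p_t/\tau}(1+o(1))$. That is precisely \eqref{eq:ratio_neg_mean}, but I will re-derive it to be self-contained.

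\textbf{Step 1: reduce to the size of the $\cW$ argument.} Since $\rho_-=e^{-p_t/\tau}\exp(-\cW(z_-))$ with $z_-=\frac{\lambda}{\tau^2}e^{-p_t/\tau}$, and $\cW(z)\sim z$ as $z\to 0$, it is enough to show $z_-\to 0$.

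\textbf{Step 2: bound $\lambda$.} The asymptotics \eqref{eq:lambda_asymptotics} give $\lambda\approx\tau p_t(1-p_t)$. If I prefer not to lean on the approximate statement, the rigorous upper bound \eqref{eq:lambda_bounds} suffices:
\begin{align*}
\lambda\le\tau^2\log A,\qquad A=p_t e^{(1-p_t)/\tau}+(1-p_t)e^{-p_t/\tau}\le e^{(1-p_t)/\tau}.
\end{align*}
This yields $\lambda\le\tau(1-p_t)$, and therefore
\begin{align*}
z_-\le \frac{1-p_t}{\tau}e^{-p_t/\tau}\to 0 \quad\text{as } \tau\to 0,
\end{align*}
for fixed $p_t\in(0,1)$. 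Consequently $\cW(z_-)=O(\tau^{-1}e^{-p_t/\tau})=o(1)$, so $\exp(-\cW(z_-))=1+o(1)$.

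\textbf{Step 3: conclude.} Combining the steps gives $\rho_-=e^{-p_t/\tau}(1+o(1))$, and hence $\eta_t^{\mathrm{mean}}=1-e^{-p_t/\tau}(1+o(1))$, which lies in $(0,1]$ for small $\tau$.

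The main obstacle is Step 2: making sure the bound on $\lambda$ is strong enough that $z_-$ vanishes. A bound like $\lambda=O(\tau^2\log A)$ with $\log A\sim 1/\tau$ is fine. A cruder bound on $\lambda$ could fail, however, since $\tau^{-2}$ fights $e^{-p_t/\tau}$. Exponential decay wins over any polynomial factor, so any polynomial-in-$1/\tau$ bound on $\lambda/\tau^2$ suffices. I would also note as a sanity check that positive actions keep ratio $\le 1/p_t$ by \eqref{eq:ratio_pos_mean}, consistent with normalization.
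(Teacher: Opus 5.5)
Your proposal is correct and follows the paper's route. The paper also writes $1-J(\pi^\star_{t+1})=(1-p_t)\rho_-$ and uses the negative-action ratio $\rho_-=e^{-p_t/\tau}(1+o(1))$ from \Cref{prop:binary_ratios}, which it obtains by showing that the Lambert-$W$ argument $\frac{\lambda}{\tau^2}e^{-p_t/\tau}$ vanishes. The only difference is that you control $\lambda$ through the rigorous upper bound $\lambda\le\tau^2\log A\le\tau(1-p_t)$ rather than the asymptotic $\lambda\approx\tau p_t(1-p_t)$; this is slightly cleaner, since only an upper bound on $\lambda$ is needed.
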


\begin{proposition}[Ideal contraction for \pmdpart]
\label{prop:eta_part_exact}
Let $\pi^\star_{t+1}$ be the ideal update \eqref{eq:boltzmann_update} with binary rewards.
Then \eqref{eq:ideal_improvement} holds with
\begin{align}
    \eta_t^{\mathrm{part}}
    % =\frac{p_t(e^{1/\tau}-1)}{1+p_t(e^{1/\tau}-1)}
    =1-\frac{1}{1-p_t+p_te^{1/\tau}}.
    \label{eq:eta_part_exact}
\end{align}
\end{proposition}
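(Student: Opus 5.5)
The plan is to compute $J(\pi^\star_{t+1})$ exactly from the Boltzmann update \eqref{eq:boltzmann_update} and read off the contraction factor. With binary rewards, $Z_t = \E_{\pi_t}[e^{r/\tau}] = (1-p_t) + p_t e^{1/\tau}$. The mass that $\pi^\star_{t+1}$ puts on positive actions is
\begin{align*}
    J(\pi^\star_{t+1}) = \sum_{y: r(y)=1} \frac{\pi_t(y)e^{1/\tau}}{Z_t} = \frac{p_t e^{1/\tau}}{1-p_t+p_te^{1/\tau}} .
\end{align*}
The mass on negative actions is therefore $1-J(\pi^\star_{t+1}) = (1-p_t)/Z_t$.

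Next, I would divide by $1-J(\pi_t) = 1-p_t$, which is positive because $p_t\in(0,1)$. This gives the identity
\begin{align*}
    \frac{1-J(\pi^\star_{t+1})}{1-J(\pi_t)} = \frac{1}{1-p_t+p_te^{1/\tau}} .
\end{align*}
So \eqref{eq:ideal_improvement} holds with equality once we set $\eta_t^{\mathrm{part}} = 1 - 1/(1-p_t+p_te^{1/\tau})$.

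Finally, I would confirm that $\eta_t\in(0,1]$ as the theorem requires. Because $e^{1/\tau}>1$ and $p_t>0$, the denominator exceeds $1$, so $\eta_t^{\mathrm{part}}\in(0,1)$.

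There is no real obstacle here; the only care needed is the bookkeeping that the negative-action mass scales uniformly by $1/Z_t$. This contrasts with the \pmdmean\ case in \Cref{prop:eta_mean_asymp}, where the Lambert-$W$ normalization makes the computation only asymptotic.
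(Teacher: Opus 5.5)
Your proposal is correct and matches the paper's proof: compute the positive-action mass $p_t e^{1/\tau}/Z_t$ under the Boltzmann update, deduce $1-J(\pi^\star_{t+1})=(1-p_t)/Z_t$, and read off $\eta_t^{\mathrm{part}}=1-1/Z_t$. Your extra check that $\eta_t^{\mathrm{part}}\in(0,1)$, using $p_t>0$ and $e^{1/\tau}>1$, is a small addition that the paper leaves implicit.
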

The proofs are provided in \Cref{sec:proof_eta_mean,sec:proof_eta_part}. 
The ideal convergence rates in \pmdpart\ and \pmdmean\ both approach $1$ when $\tau\to 0$, leading to one-step convergence. 
Meanwhile, \pmdpart\ approaches this rate faster than \pmdmean\ when $p_t<1$ is small.

% ----------------------------------------------------------------------
\subsubsection{Log-ratio Bounds $(B,B_+)$ Compatible with Realizability}
\label{sec:BB_instantiation}
In \Cref{cor:convergence_one_step}, the error of the inexact update depends on $B$ and $B_+$. 
We compute the smallest $(B,B_+)$ compatible with the realizability of the ideal target in the binary model.
\begin{proposition}[Log-ratios for \pmdmean\ with small $\tau$]
\label{prop:BB_mean_asymp}
    Consider binary rewards and the \pmdmean\ target $s^\star(y)=\log\frac{\pi^\mathrm{mean}_{t+1}(y)}{\pi_t(y)}$. Then for fixed $p_t\in(0,1)$ and $\tau\to 0$, the log-ratio bounds are 
    \begin{align*}
        &\exp(B_{+,t}^{\mathrm{mean}})
        =\frac{1}{p_t}-\frac{1-p_t}{p_t}e^{-p_t/\tau}(1+o(1)), \quad B_{t}^{\mathrm{mean}}
        =\frac{p_t}{\tau}+o(1).
    \end{align*}
\end{proposition}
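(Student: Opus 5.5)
In the binary model the \pmdmean\ target takes only two values, one on correct responses ($r=1$, with $\Delta=1-p_t$) and one on incorrect responses ($r=0$, with $\Delta=-p_t$). Realizability therefore forces the smallest admissible constants to be
\[
B_{+,t}^{\mathrm{mean}}=\max_y s^\star(y)\quad\text{and}\quad B_t^{\mathrm{mean}}=\max_y\abs{s^\star(y)}.
\]
The plan is to compute these two values asymptotically from the Lambert-$W$ form in \Cref{thm:pmdmean_solution}, i.e.\ from \eqref{eq:ratio_pos_mean} and \eqref{eq:ratio_neg_mean}, and then decide which value attains each maximum.

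\textbf{Step 1 (positive actions).} First determine $\lambda$. Let $a=\pi_{t+1}(y)/\pi_t(y)$ on positive actions and $b$ the same ratio on negative actions. Normalization gives $p_t a+(1-p_t)b=1$. The ratio for any action is $\exp(\Delta/\tau-W(u))$, and using $W(u)e^{W(u)}=u$ this equals $W(u)\,\tau^2/\lambda$, where $u=\lambda e^{\Delta/\tau}/\tau^2$.
\begin{itemize}[leftmargin=*]
\item For negative actions, \eqref{eq:lambda_asymptotics} gives $\lambda\approx\tau p_t(1-p_t)$. Hence $u=\lambda e^{-p_t/\tau}/\tau^2\to 0$, so $W(u)\approx u$ and $b=e^{-p_t/\tau}(1+o(1))$, which is \eqref{eq:ratio_neg_mean}.
\item Substituting into the normalization gives $a=\frac{1}{p_t}-\frac{1-p_t}{p_t}e^{-p_t/\tau}(1+o(1))$, which is \eqref{eq:ratio_pos_mean}.
\end{itemize}
Since $a\to 1/p_t>1$, we have $\log a>0>\log b$. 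So $B_+=\log a$, which is exactly the stated formula for $\exp(B_{+,t}^{\mathrm{mean}})$.

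\textbf{Step 2 (the bound $B$).} By Step 1, $\abs{\log b}=p_t/\tau+o(1)$, which diverges as $\tau\to 0$, while $\log a\to\log(1/p_t)$ stays bounded. Hence for small $\tau$ the negative action attains the maximum, and $B=\abs{\log b}=p_t/\tau+o(1)$.

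\textbf{Main obstacle.} The difficult step is to justify the $(1+o(1))$ factors carefully: $\log b=-p_t/\tau+\log(1+o(1))$ must hold with an additive $o(1)$ error. This requires the relative error in $b$ to vanish, i.e.\ $W(u)/u\to1$, which holds because $u\to0$ superexponentially fast.

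It also requires $\lambda$ to be pinned down well enough that the $\log(\lambda\cdot\tau^2/\lambda)$ bookkeeping does not leave an additive constant. I would handle this by working with the exact relation $b=\exp(-p_t/\tau-W(u))$ rather than the $\tau^2 W(u)/\lambda$ form. Then only $W(u)\to0$ is needed, which follows from $u\to0$. For that it is enough to know $\lambda\le\tau^2\log A$ from \eqref{eq:lambda_bounds}, together with $\log A\lesssim (1-p_t)/\tau$, so that $u\lesssim \tau e^{-p_t/\tau}\to0$. This route avoids relying on the finer asymptotic \eqref{eq:lambda_asymptotics}.
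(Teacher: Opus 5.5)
Your proposal is correct and takes essentially the same approach as the paper, which derives the statement from \Cref{prop:binary_ratios}: the negative-action ratio comes from $W(z)\approx z$ at small argument, the positive-action ratio from normalization, and $B_{+,t}^{\mathrm{mean}}$ and $B_t^{\mathrm{mean}}$ are read off by taking logarithms. Your variant, which uses the exact identity $\log\rho_-=-p_t/\tau-W(u)$ and needs only the upper bound $\lambda\le\tau^2\log A$, is a slightly cleaner way to get the additive $o(1)$; note one small arithmetic slip, since that bound gives $u\le\frac{1-p_t}{\tau}e^{-p_t/\tau}$ rather than $u\lesssim\tau e^{-p_t/\tau}$, but this still tends to $0$, so the argument is unaffected.
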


\begin{proposition}[Log-ratios for \pmdpart\ with small $\tau$]\label{prop:BB_part_exact}
    Consider binary rewards and the \pmdpart\ target $s^\star(y)=\log\frac{\pi^\mathrm{part}_{t+1}(y)}{\pi_t(y)}$.
    Then for fixed $p_t\in(0,1)$ and $\tau\to 0$, the log-ratio bounds are 
    \begin{align*}
        &\exp(B_{+,t}^{\mathrm{part}})=\frac{1}{p_t} - \frac{1-p_t}{p_t^2}e^{-1/\tau} + O(e^{-2/\tau}),
        \quad B_{t}^{\mathrm{part}} = \frac{1}{\tau}-\log\frac{1}{p_t} + o(1).
    \end{align*}
\end{proposition}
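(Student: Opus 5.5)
The argument is a direct computation with the Boltzmann update \eqref{eq:boltzmann_update} under binary rewards. Here only two log-ratio values occur, and $B_{+,t}^{\mathrm{part}}$ and $B_t^{\mathrm{part}}$ are read off from them.

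\textbf{Step 1: the two log-ratio values.} With $r\in\{0,1\}$ and $p_t=\E_{\pi_t}[r]$, the partition function is $Z_t=1-p_t+p_te^{1/\tau}$. The target is $s^\star(y)=r(y)/\tau-\log Z_t$, which takes the values
\begin{align*}
s_+ &= \frac{1}{\tau}-\log Z_t \quad (r=1), &
s_- &= -\log Z_t \quad (r=0).
\end{align*}
Since $Z_t>1$, we have $s_-<0<s_+$. The smallest constants compatible with realizability of $\pi^\star_{t+1}$ are therefore $B_{+,t}^{\mathrm{part}}=s_+$ and $B_t^{\mathrm{part}}=\max(|s_+|,|s_-|)$.

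\textbf{Step 2: expansion of $e^{s_+}$.} We have
\begin{align*}
e^{s_+}=\frac{e^{1/\tau}}{p_te^{1/\tau}+1-p_t}=\frac{1}{p_t}\cdot\frac{1}{1+\frac{1-p_t}{p_t}e^{-1/\tau}}.
\end{align*}
A geometric series expansion in $u=\frac{1-p_t}{p_t}e^{-1/\tau}\to0$ gives $\frac1{p_t}(1-u+O(u^2))=\frac1{p_t}-\frac{1-p_t}{p_t^2}e^{-1/\tau}+O(e^{-2/\tau})$. This matches \eqref{eq:ratio_pos_part}.

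\textbf{Step 3: the value of $B$.} From the expression above, $s_+=-\log p_t+O(e^{-1/\tau})$ stays bounded. On the other side, $|s_-|=\log Z_t=\frac1\tau+\log p_t+\log(1+u)=\frac1\tau-\log\frac1{p_t}+o(1)$. This diverges, so for small $\tau$ it dominates $|s_+|$. Hence $B_t^{\mathrm{part}}=\frac1\tau-\log\frac1{p_t}+o(1)$.

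There is no real obstacle. The only point requiring care is to justify that the maximum defining $B$ is attained by the negative action. This holds for $\tau$ small enough relative to $p_t$: we need $\frac1\tau>2\log\frac1{p_t}$ up to lower-order terms. This condition is implicit in the fixed-$p_t$, $\tau\to0$ asymptotics.
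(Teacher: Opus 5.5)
Your proposal is correct and is essentially the paper's argument. The paper reads $B_{+,t}^{\mathrm{part}}$ and $B_t^{\mathrm{part}}$ off the explicit binary-reward ratios $\rho^{\mathrm{part}}_\pm$ in \Cref{prop:binary_ratios}, expanding $(1+u)^{-1}$ with $u=\frac{1-p_t}{p_t}e^{-1/\tau}$ exactly as you do; your explicit check that the negative action attains the maximum defining $B_t^{\mathrm{part}}$ for small $\tau$ is a detail the paper leaves implicit.
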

The proofs follow from Proposition~\ref{prop:binary_ratios} in \Cref{sec:appendix-ratio}. 
By \Cref{prop:BB_part_exact,prop:BB_mean_asymp}, when $\tau$ is small, the factors in \pmdpart\ are worse than \pmdmean, and the gap is significant when $p_t$ is small. 

\subsubsection{Target Estimation Error}
It remains to compare the target estimation error. 
\begin{proposition}[Target estimation error for binary rewards]
    \label{prop:tgt_err_clean}
    Assume $r(y)\in\{0,1\}$ and define $p_t\coloneqq \E_{y\sim\pi_t}[r(y)]$.
    Let $p_{-i}\coloneqq \frac{1}{n-1}\sum_{j\neq i} r(y_j)$ and $a\coloneqq e^{1/\tau}-1$, then $Z_t=1+a p_t$.
    Fix $\delta\in(0,1)$ and define
    \begin{align*}
        \varepsilon_n(p_t,\delta)
        \coloneqq
        \sqrt{\frac{2p_t(1-p_t)\log(4n/\delta)}{n-1}}
        +
        \frac{2\log(4n/\delta)}{3(n-1)}.
    \end{align*}
    Then with probability at least $1-\delta$, the LOO mean satisfies $\max_i \abs{p_{-i}-p_t}\leq \varepsilon_n(p_t,\delta)$, and consequently: 
    
    (a) (\pmdmean) For $\widetilde{s}^\star_{-i}(y_i)$ in \eqref{eq:loo-s-mean} and $s^\star(y)=\log\frac{\pi^\mathrm{mean}_{t+1}(y)}{\pi_t(y)}$,
        \begin{align}
        \label{eq:tgt_err_mean_clean}
            \overline{\Delta^2}
            \lesssim
            \frac{\varepsilon_n(p_t,\delta)^2}{\tau^2} + \frac{p_t(1-p_t)^2}{\tau^2}.
            % =O\left(\frac{p_t(1-p_t)\log(4n/\delta)}{\tau^2n}\right).
        \end{align}
        
      (b) (\pmdpart) For $\widetilde{s}^\star_{-i}(y_i)$ in \eqref{eq:loo-s-part} and $s^\star(y)=\log\frac{\pi^\mathrm{part}_{t+1}(y)}{\pi_t(y)}$,
        \begin{align}\label{eq:tgt_err_part_clean}
            \overline{\Delta^2}
            \leq\biggl(\min\biggl\{\frac{a\cdot\varepsilon_n(p_t,\delta)}{1+a(p_t-\varepsilon_n(p_t,\delta))_+}, \frac{1}{\tau}\biggr\}\biggr)^2.
        \end{align}
\end{proposition}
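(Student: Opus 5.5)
Three steps: a uniform concentration bound for the LOO means, then a decomposition of $\Delta_i$ for \pmdmean, then a perturbation argument for the log-partition estimate in \pmdpart.

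\textbf{Concentration.} For each fixed $i$, $p_{-i}$ is an average of $n-1$ i.i.d. Bernoulli$(p_t)$ variables with variance $p_t(1-p_t)$ and range $1$. Bernstein's inequality gives, with probability at least $1-\delta/n$,
\[
\abs{p_{-i}-p_t}\le\sqrt{2p_t(1-p_t)\log(4n/\delta)/(n-1)}+2\log(4n/\delta)/(3(n-1)).
\]
The two-sided version costs a factor $2$ inside the log, and the slack $4n$ versus $2n$ covers it. A union bound over $i=1,\dots,n$ then gives $\max_i\abs{p_{-i}-p_t}\le\varepsilon_n$ with probability at least $1-\delta$. The LOO construction matters only because it makes $p_{-i}$ independent of $y_i$; the argument itself just needs a uniform bound over $i$.

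\textbf{Part (a).} Write $\Delta_i=(\widetilde s^\star_{-i}(y_i)-\Delta_{y_i}/\tau)+(\Delta_{y_i}/\tau-s^\star(y_i))$.
\begin{itemize}
\item The first term equals $(p_t-p_{-i})/\tau$, so its square is at most $\varepsilon_n^2/\tau^2$.
\item The second term is the bias from regressing onto the mean-baseline target instead of the actual \pmdmean\ solution. By \Cref{thm:pmdmean_solution}, $\Delta_y/\tau-s^\star(y)=\cW\bigl(\tfrac{\lambda}{\tau^2}e^{\Delta_y/\tau}\bigr)$.
\end{itemize}
Using $(u+v)^2\le 2u^2+2v^2$, it remains to bound this $\cW$ term for the two reward values.

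For $r=0$, the argument is $\lambda\tau^{-2}e^{-p_t/\tau}$. By \eqref{eq:lambda_asymptotics} this is roughly $p_t(1-p_t)\tau^{-1}e^{-p_t/\tau}$, and using $\cW(z)\le z$ the term is negligible. For $r=1$, \eqref{eq:ratio_pos_mean} gives $s^\star=\log(1/p_t)+o(1)$, so the bias is $(1-p_t)/\tau-\log(1/p_t)$.

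Squaring and averaging over the empirical sample is delicate: a naive bound of order $(1-p_t)^2/\tau^2$ would also need a multiplicative frequency $p_t$ of positive samples. I would get this from the same Bernstein event, using $\frac1n\sum_i r(y_i)\lesssim p_t+\varepsilon_n$ and absorbing the excess into the first term. Alternatively, I would bound the bias uniformly through $\cW(z)\le\log(1+z)$ together with the upper bound $\lambda\le\tau^2\log A$ from \eqref{eq:lambda_bounds}. \textbf{This is the step I expect to be the main obstacle}, because the constants and the $p_t$ factor must be tracked carefully.

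\textbf{Part (b).} Here $\Delta_i=\log Z_t-\log(1+a p_{-i})$, since $\frac1{n-1}\sum_{j\ne i}e^{r(y_j)/\tau}=1+a p_{-i}$. The derivative of $\log(1+ap)$ is $a/(1+ap)$, so the mean value theorem with $\abs{p_{-i}-p_t}\le\varepsilon_n$ gives
\[
\abs{\Delta_i}\le\frac{a\varepsilon_n}{1+a(p_t-\varepsilon_n)_+},
\]
because every intermediate point $\xi$ satisfies $\xi\ge(p_t-\varepsilon_n)_+$. Separately, both $p_{-i}$ and $p_t$ lie in $[0,1]$, so both logs lie in $[0,\log(1+a)]=[0,1/\tau]$, which gives $\abs{\Delta_i}\le 1/\tau$. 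Taking the minimum, squaring, and averaging over $i$ yields \eqref{eq:tgt_err_part_clean}.
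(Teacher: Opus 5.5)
Your concentration step and part (b) are correct and coincide with the paper's proof: Bernstein for each $p_{-i}$ plus a union bound, then the mean value theorem with $\xi_i\ge(p_t-\varepsilon_n)_+$, together with $\tau\log(1+ap)\in[0,1]$. Part (a) also follows the paper's route. You split $\Delta_i=(p_t-p_{-i})/\tau+\cW\bigl(\tfrac{\lambda}{\tau^2}e^{\Delta_{y_i}/\tau}\bigr)$, make $w_-$ negligible via $\cW(z)\le z$, use $w_+\lesssim(1-p_t)/\tau$, and control the empirical positive frequency $\widehat p_t=\frac1n\sum_i r(y_i)$ by $\abs{\widehat p_t-p_t}\le\varepsilon_n$. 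That last bound holds on the concentration event because $\widehat p_t=\frac1n\sum_i p_{-i}$.

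The step you flag, however, fails as proposed. The excess contribution is $(\widehat p_t-p_t)\,w_+^2\le\varepsilon_n(1-p_t)^2/\tau^2$, which is \emph{linear} in $\varepsilon_n$. It cannot be absorbed into $\varepsilon_n^2/\tau^2$ when $\varepsilon_n<1$, nor into $p_t(1-p_t)^2/\tau^2$ unless $\varepsilon_n\lesssim p_t$. Your fallback via $\cW(z)\le\log(1+z)$ and $\lambda\le\tau^2\log A$ only bounds the size of $w_+$ (roughly $(1-p_t)/\tau+\log(1/\tau)$), not how often it occurs. So it does not supply the factor $p_t$ either.

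The paper's proof slips at exactly the same point. It writes the combined bound as $\frac{p_t(1-p_t)^2}{\tau^2}(1+\varepsilon_n)$, but $\abs{\widehat p_t-p_t}\le\varepsilon_n$ only gives $\frac{(p_t+\varepsilon_n)(1-p_t)^2}{\tau^2}$. What the argument (yours or the paper's) actually establishes is
\begin{align*}
\overline{\Delta^2}\lesssim\frac{\varepsilon_n^2}{\tau^2}+\frac{(p_t+\varepsilon_n)(1-p_t)^2}{\tau^2},
\end{align*}
and the extra term is genuine. Take $p_t=\mu/n$ with $\mu$ small.
\begin{itemize}
\item With probability about $\mu e^{-\mu}$, exactly one sample is positive.
\item The concentration event still holds, since $\varepsilon_n\ge 1/(n-1)$.
\item For that sample, $\tau\Delta_1=p_t+\tau w_+\to 1$ as $\tau\to0$, so $\tau^2\,\overline{\Delta^2}\ge(1-o(1))/n$.
\item Meanwhile, $\tau^2$ times the right-hand side of \eqref{eq:tgt_err_mean_clean} is $O\bigl(\mu/n+\log^2(4n/\delta)/n^2\bigr)$.
\end{itemize}
For large $n$ the ratio of the two sides is of order $1/\mu$. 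This configuration has probability exceeding $\delta$ once $\delta<\mu e^{-\mu}/2$, so no universal constant in $\lesssim$ works. Careful bookkeeping therefore cannot close this step. The conclusion of (a) must be weakened to the display above, for instance by adding a term $\varepsilon_n/\tau^2$.
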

The proof is provided in \Cref{sec:proof_tgt_err_clean}. 
By \Cref{prop:tgt_err_clean}, the target estimation error of \pmdmean\ consists of two parts. 
One part scales as $O\bigl(\frac{p_t(1-p_t)}{\tau^2 n}\bigr)$, while the other part $O\bigl(\frac{p_t(1-p_t)^2}{\tau^2}\bigr)$ is irreducible due to the systematic mismatch between the estimated target $\widetilde{s}_{-i}^\star$ the ideal target $s^\star$ that contains the Lambert-$W$ term. 
In particular, for positive actions, the estimated target is systematically larger than the ideal target, resulting in more aggressive improvement on these actions. 
Meanwhile, for negative actions, the estimated target is closer to the ideal, thus inherits the more conservative shrinkage at rate $\exp(-p_t/\tau)$. 

For \pmdpart, there is a critical regime where $n=\Theta(\frac{1}{p_t})$. 
When $n$ is smaller than this threshold, the bound almost scales as $O\bigl(\min\{\frac{1}{\tau^2}, \frac{{e^{2/\tau}}p_t}{n}\}\bigr)$, which is larger than the error of \pmdmean. 
When $n$ exceeds the threshold, the bound scales as $O\bigl(\frac{1-p_t}{p_t n}\bigr)$. 
This suggests that for \pmdpart, the rollout sample size should be large enough, especially at the early phase of training where $p_t$ is small. 

\Cref{fig:target_estimation,fig:target_actions} show simulations of target estimation errors, where \pmdpart\ suffers from larger error when $p_t$ and the rollout sample size $n$ are small. 
This explains that while \pmdpart\ has a faster ideal convergence rate (almost in one step for small $\tau$), it can be very unstable in practice when the rollouts are limited. 
Meanwhile, \pmdmean\ suffers less from estimation error in this regime. 

\begin{figure}[htb!]
    % \vspace{-0.1in}
    \begin{center}
    \centerline{\includegraphics[width=0.75\columnwidth]{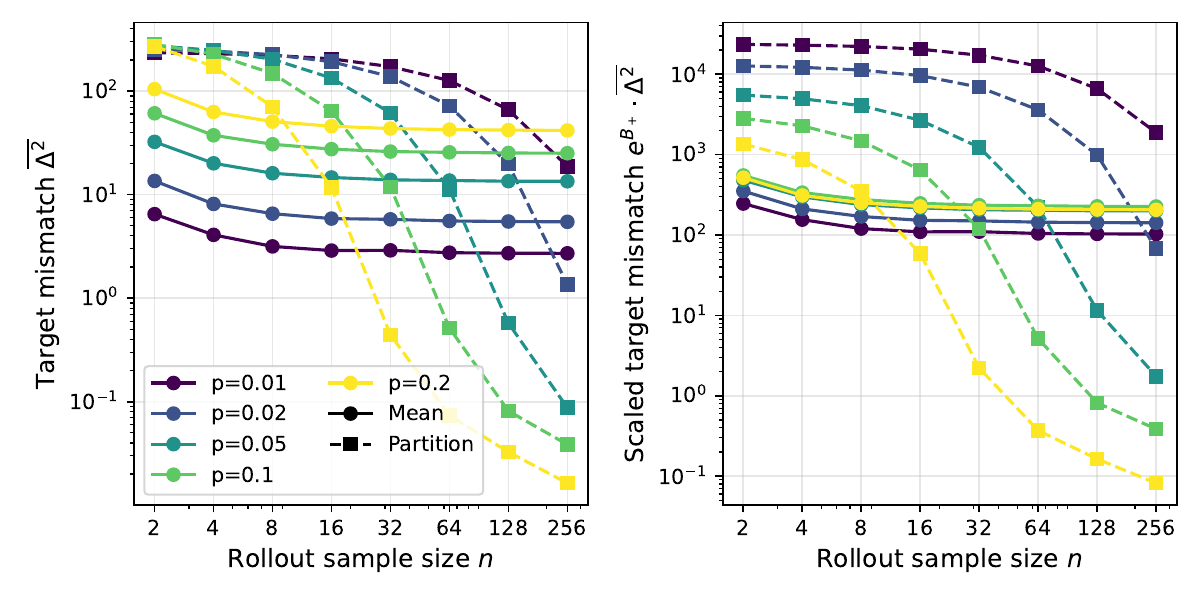}}
    \vspace{-0.1in}
    \caption{
      Target estimation error of \pmdmean\ and \pmdpart\ under $\tau=0.05$ and $p_t$ ranges from $0.01$ to $0.2$. 
      Left: the target estimation error $\overline{\Delta^2}$. Right: The scaled estimation error with corresponding prefactor $e^{B_+}$ in \eqref{eq:one_step_improvement}. The plot shows the average from $100$ random seeds. 
      When the rollout sample size $n$ is small, the error of \pmdpart\ is much larger for small $p_t$. 
      % As the $n$ increases, the estimation error of \pmdpart\ vanishes while \pmdmean\ reaches a floor. 
    }
    \label{fig:target_estimation}
    \end{center}
    \vspace{-0.15in}
\end{figure}

\subsubsection{Refined Analysis for \pmdmean}
% \begin{remark}
While the gap between the ideal target and the estimated target of \pmdmean\ does not vanish as $n\to\infty$, the minimizer of the empirical loss \eqref{eq:empirical-loss} recovers the ideal target policy in this limit, as the constraint $\mathbb{E}_{\pi_t}[e^{s_\pi}]=1$ pulls back the log-ratios from exactly fitting the advantages. 
In that large $n$ regime, \Cref{prop:tgt_err_clean} is overly pessimistic. 
We provide a refined analysis that eliminates the error floor. 
\begin{lemma}[Refined analysis for \pmdmean]\label{lem:pmdmean_refined}
    Suppose \Cref{asm:realizable_plugin,asm:opt_err,asm:bounded_logratio,asm:finite_class} hold, $r(y)\in\{0,1\}$ and define $p_t\coloneqq \E_{y\sim\pi_t}[r(y)]$.
    Let $\varepsilon_n(p_t,\delta)$ be as in \Cref{prop:tgt_err_clean}.
    Then for any $\delta\in(0,1)$, with probability at least $1-\delta$, empirical \pmdmean\ yields $\widehat{\pi}_{t+1}$ such that 
    \begin{align*}
        \mathcal{L}_t(\widehat{\pi}_{t+1})
        \lesssim
        \frac{\log(\abs{\Pi}/\delta)}{\tau^2n} +\epsilon_{\mathrm{opt}} +\frac{p_t\cdot\varepsilon_n(p_t,\delta) + \varepsilon_n(p_t,\delta)^2}{\tau^2}.
    \end{align*}
\end{lemma}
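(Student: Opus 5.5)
The plan is to avoid comparing against the pointwise ideal target $s^\star$ and to exploit the normalization constraint $\E_{\pi_t}[e^{s_\pi}]=1$ satisfied by every $\pi\in\Pi$. Write $u(y)\coloneqq\Delta_y/\tau=(r(y)-p_t)/\tau$ for the population advantage target. The first step is to decompose the empirical loss \eqref{eq:empirical-loss} as
\begin{align*}
\widehat{\mathcal{L}}_t(\pi)=\frac{1}{2n}\sum_i (s_\pi(y_i)-u(y_i))^2-\frac{1}{n}\sum_i (s_\pi(y_i)-u(y_i))e_i+\frac{1}{2n}\sum_i e_i^2,
\end{align*}
with $e_i\coloneqq \widetilde{s}^\star_{-i}(y_i)-u(y_i)=(p_t-p_{-i})/\tau$. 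The last term does not depend on $\pi$. On the event of \Cref{prop:tgt_err_clean}, $|e_i|\le\varepsilon_n/\tau$ for all $i$.

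Next, compare with the population objective $\mathcal{M}(\pi)\coloneqq\frac12\E_{\pi_t}[(s_\pi-u)^2]$. By \eqref{eq:L_mean}, $\pi^\star_{t+1}$ is the minimizer of $\mathcal{M}$ over the simplex. The key structural fact is a strong-convexity (Pythagorean) inequality: for any $\pi\in\Pi$,
\[
\mathcal{M}(\pi)-\mathcal{M}(\pi^\star_{t+1})\ge \mathcal{L}_t(\pi)=\tfrac12\E_{\pi_t}[(s_\pi-s^\star)^2].
\]
To prove it, expand $(s_\pi-u)^2=(s_\pi-s^\star)^2+2(s_\pi-s^\star)(s^\star-u)+(s^\star-u)^2$. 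By the KKT conditions behind \Cref{thm:pmdmean_solution}, $s^\star-u=-\frac{\lambda}{\tau^2}e^{s^\star}$. The cross term is then $-\frac{2\lambda}{\tau^2}\E_{\pi_t}[(s_\pi-s^\star)e^{s^\star}]=-\frac{2\lambda}{\tau^2}\E_{\pi^\star}[s_\pi-s^\star]=\frac{2\lambda}{\tau^2}\KL(\pi^\star\|\pi)\ge0$, using that $\pi$ is normalized.

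The statistical step uses the quadratic loss $\ell_\pi(y)=\frac12(s_\pi(y)-u(y))^2$, which involves no target estimation error. The excess losses $\ell_\pi-\ell_{\pi^\star}$ are bounded by $O((B+1/\tau)^2)$, and their variance is controlled by $(B+1/\tau)^2\E[(s_\pi-s^\star)^2]$. Bernstein's inequality with a union bound over $\Pi$ then gives, with probability $1-\delta$, a fast rate of the form
\[
\mathcal{M}(\pi)-\mathcal{M}(\pi^\star)\le 2\big(\widehat{\mathcal{M}}(\pi)-\widehat{\mathcal{M}}(\pi^\star)\big)+O(\log(|\Pi|/\delta)/(\tau^2n)).
\]
Here I take $B\lesssim 1/\tau$, consistent with \Cref{prop:BB_mean_asymp}, to absorb constants into the $1/\tau^2$ factor. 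This is the same mechanism as \Cref{thm:erm_loo}, which I would invoke to avoid redoing it. Approximate ERM (\Cref{asm:opt_err}) with realizability (\Cref{asm:realizable_plugin}) gives $\widehat{\mathcal{L}}_t(\widehat\pi)\le\widehat{\mathcal{L}}_t(\pi^\star)+\epsilon_{\mathrm{opt}}$. By the decomposition, this yields
\[
\widehat{\mathcal{M}}(\widehat\pi)-\widehat{\mathcal{M}}(\pi^\star)\le\epsilon_{\mathrm{opt}}+\tfrac1n\sum_i(s_{\widehat\pi}-s^\star)(y_i)e_i.
\]

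The main obstacle is bounding the cross term $\frac1n\sum_i (s_{\widehat\pi}-s^\star)(y_i)\,e_i$ without paying the irreducible $p_t(1-p_t)^2/\tau^2$ floor. The plan is to write $e_i=(p_t-\hat p)/\tau+(\hat p-p_{-i})/\tau$, where $\hat p$ is the full-sample mean and $\hat p-p_{-i}=(\hat p-r(y_i))/(n-1)$ is $O(1/n)$. For the constant part, the normalization $\E_{\pi_t}[e^{s_\pi}]=1$ makes $\E_{\pi_t}[s_\pi-s^\star]$ small: it equals $-\KL(\pi_t\|\pi)+\KL(\pi_t\|\pi^\star)$, which is bounded by $O(\sqrt{\mathcal{L}_t}\cdot\|s^\star\|)$-type terms. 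Rather than chase this, I would use Cauchy--Schwarz on the empirical average, $|\frac1n\sum (s_{\widehat\pi}-s^\star)e_i|\le \sqrt{\widehat{\mathcal{L}}'}\cdot\varepsilon_n/\tau$, together with the fact that for binary rewards $s_\pi-s^\star$ is mostly supported where the advantage is large (on positive actions, mass $\approx p_t$). That gives a linear term of order $p_t\varepsilon_n/\tau^2$. AM--GM, $ab\le \frac14 a^2+b^2$, absorbs the quadratic part into the left-hand side via the strong-convexity inequality, leaving $\varepsilon_n^2/\tau^2$. Combining these, and converting empirical to population $\mathcal{L}_t$ via the same Bernstein step, gives the claim. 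If the $p_t\varepsilon_n$ localization fails, the fallback is to bound that term crudely by $B\varepsilon_n/\tau\lesssim p_t\varepsilon_n/\tau^2$ using $B^{\mathrm{mean}}\approx p_t/\tau$ from \Cref{prop:BB_mean_asymp}.
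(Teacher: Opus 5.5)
Your proposal is correct. Its skeleton matches the paper's. Your Pythagorean inequality is exactly \Cref{lem:pmdmean_quad_growth}. Your fast-rate step is the paper's application of \Cref{lem:bernstein_excess_self_bounding} to $Z_i=\ell_\pi(y_i)-\ell^\star(y_i)$ with $v=2(B+1/\tau)^2$. This step cannot be borrowed from \Cref{thm:erm_loo}: there the summands are nonnegative clean losses, whereas your $Z_i$ can be negative and their variance control needs the Pythagorean inequality.

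The genuine difference is how you handle the leave-one-out perturbation. The paper bounds $2\sup_{\pi\in\Pi}\abs{\widehat{\mathcal{L}}_t(\pi)-\widehat{\mathcal{L}}^{\mathrm{mean}}_t(\pi)}$. This forces it to control $\frac1n\sum_i\abs{s_\pi(y_i)-\Delta_{y_i}/\tau}\le B+\frac1\tau(2p_t+\varepsilon_n+\frac1n)$, giving $\frac{\varepsilon_n}{\tau}(B+\frac{p_t}{\tau})+\frac{\varepsilon_n^2}{\tau^2}$. It reaches the stated form only after substituting $B=p_t/\tau$. You instead keep the difference $\widehat{\mathcal{L}}_t(\widehat{\pi}_{t+1})-\widehat{\mathcal{L}}_t(\pi^\star_{t+1})$. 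There the $e_i^2$ terms and the terms involving $u=\Delta_y/\tau$ cancel, and only $\frac1n\sum_i(s_{\widehat{\pi}_{t+1}}-s^\star)(y_i)\,e_i$ survives. Your fallback bound $2B\varepsilon_n/\tau$ on this term already recovers the paper's result, a bit more cleanly.

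Your Cauchy--Schwarz route gives more, but drop the claim that $s_\pi-s^\star$ is ``mostly supported'' on positive actions. It is unjustified for an arbitrary ERM output, and it is unnecessary. Cauchy--Schwarz with $\max_i\abs{e_i}\le\varepsilon_n/\tau$, followed by AM--GM, directly gives $\eta\cdot\frac1n\sum_i(s_{\widehat{\pi}_{t+1}}-s^\star)^2(y_i)+O(\varepsilon_n^2/(\eta\tau^2))$.

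Absorbing the first term needs one ingredient the paper never uses. This is a uniform upper-tail Bernstein bound over $\Pi$: $\frac{1}{2n}\sum_i(s_\pi-s^\star)^2(y_i)\le\frac32\mathcal{L}_t(\pi)+O(B^2\log(\abs{\Pi}/\delta)/n)$, valid because the summands are nonnegative and at most $2B^2$. Then $\mathcal{L}_t(\pi)\le\mathcal{L}^{\mathrm{mean}}_t(\pi)-\mathcal{L}^{\mathrm{mean}}_t(\pi^\star_{t+1})$ lets you absorb into the population excess, not the empirical left-hand side.

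The result is $\mathcal{L}_t(\widehat{\pi}_{t+1})\lesssim\frac{(B+1/\tau)^2\log(\abs{\Pi}/\delta)}{n}+\epsilon_{\mathrm{opt}}+\frac{\varepsilon_n^2}{\tau^2}$. This implies the statement, removes the $p_t\varepsilon_n/\tau^2$ term, and needs only $B\lesssim1/\tau$. The paper's cruder uniform bound trades that sharpness for one fewer concentration step and no absorption argument.
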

The complete proof is provided in Appendix~\ref{sec:refined}. 
\Cref{lem:pmdmean_refined} shows that \pmdmean\ now shares the same $O\bigl(\frac{\log(\abs{\Pi}/\delta)}{\tau^2n}\bigr)$ term as \pmdpart\ in \eqref{eq:erm_master_bound} (as $B_t^\mathrm{part}=O\bigl(\frac{1}{\tau}\bigr)$), while the $\overline{\Delta^2}$ term now becomes $O\bigl(\frac{p_t}{\tau^2}\sqrt{\frac{p_t}{n}}+\frac{1}{\tau^2n^2}\bigr)$, which vanishes as $n\to \infty$. 
This is better than the $O\bigl(\frac{1}{\tau^2}\bigr)$ error of \pmdpart\ in the small $n$ and $p_t$ regime. 
On the other hand, when $n=\Omega\bigl(\frac{1}{p_t}\bigr)$, the error of \pmdpart\ is $O(1)$, while the error in \pmdmean\ is $O\bigl(\frac{p_t^2}{\tau^2}\bigr)$. 
This suggests an adaptive regularization scheme of \pmdmean\ that scales $\tau$ with per-prompt pass rate $p_t$, which we leave for future investigation.

% ======================================================================
\section{Experiments}
\label{sec:experiments}

We conduct experiments on math reasoning RL to validate the practical performance of PMD. 
Our implementation is based on verl \citep{sheng2025hybridflow}. 

We train on the DAPO-Math-17k dataset \citep{yu2025dapo} with base models Qwen2.5-7B \citep{qwen2.5} and Qwen3-30B-A3B-Base \citep{yang2025qwen3}. 
The 7B models are trained for 495 global steps (15 epochs), while the 30B models are trained for $300$ global steps. 
We apply the same prompt formatting as in \citet{yu2025dapo}. 
The reward is binary $\pm 1$ based on the answer correctness only. 

We set the global batch size as 512 prompts with group size 16 and sampling temperature 1 for rollout. 
The maximum response length is 8192 for Qwen2.5-7B and 20480 for Qwen3-30B-A3B-Base. 
We train with a mini-batch size of 32 prompts (512 sequences) and learning rate $1\times 10^{-6}$.

We evaluate on AIME 2024 and AIME 2025. 
For each problem, we sample 32 solutions and report the average score. 
We mainly use GRPO \citep{shao2024deepseekmath} as the baseline. 
For efficiency comparison, we also include the on-policy gradient by setting the global batch size (rollout prompts) as 32 so that it equals the mini-batch size. 
More implementation details are provided in \Cref{sec:appendix-experiment}. 

\subsection{Main Results}
The main results are shown in \Cref{tab:main-results}. 
\pmdmean\ significantly outperforms the GRPO baseline: $\tau=0.005$ achieves +2.6\% AIME24 and +9.0\% AIME25 absolute gains on 7B model, and $\tau=0.1$ achieves +14.6\% AIME24 and +8.1\% AIME25 absolute gains on 30B MoE model. 

\begin{table}[t]
% \vspace{-0.1in}
    \caption{Overall evaluation scores (Avg@32). Staleness indicates the number of ministeps using rollouts from the same old policy.}
    % \vspace{-0.15in}
    \label{tab:main-results}
    \begin{center}\small
        % \begin{footnotesize}
            % \begin{sc}
                \begin{tabular}{lcccc}
                \toprule
                \textbf{Method ($\tau$)} & \textbf{Staleness} & \textbf{AIME 24} & \textbf{AIME 25} & \textbf{Average} \\
                \midrule
                \multicolumn{5}{c}{\textbf{Qwen2.5-7B}} \\
                \midrule
                GRPO (-) & 16 & 17.08 & 10.52 & 13.80 \\
                On-policy (-) & 1 & 18.65 & \underline{18.33} & \underline{18.49} \\
                \pmdmean\ (0.005) & 16 & \underline{19.69} & \textbf{19.48} & \textbf{19.58} \\
                \pmdmean\ (0.01) & 16 & 17.60 & 17.50 & 17.55 \\
                \pmdmean\ (0.02) & 16 & \textbf{22.50} & 16.67 & \textbf{19.58} \\
                \midrule
                \multicolumn{5}{c}{\textbf{Qwen3-30B-A3B-Base}} \\
                \midrule
                GRPO (-) & 16 & 36.56 & 27.92 & 32.24 \\
                \pmdmean\ (0.01) & 16 & \underline{50.00} & \underline{35.10} & \underline{42.55} \\
                \pmdmean\ (0.1) & 16 & \textbf{50.83} & \textbf{37.19} & \textbf{44.01} \\
                \bottomrule
                \end{tabular}
            % \end{sc}
        % \end{footnotesize}
    \end{center}
    % \vspace{-0.15in}
\end{table}

\begin{table}[tb]
    % \vspace{-0.1in}
    \caption{Comparing efficiency of on-policy gradient and \pmdmean. Timing is in milliseconds per token. While the actor update cost is comparable, a larger global batch size (high staleness) amortizes the inference cost and reduces overall training time.}
    \label{tab:efficiency_comparison}
    % \vspace{-0.15in}
    \begin{center}\small
        % \begin{sc}
            % \begin{scriptsize}
                \begin{tabular}{lccc}
                \toprule
                \textbf{Method} & \textbf{Overall (ms/token)} & \textbf{Generation} & \textbf{Actor Update} \\
                \midrule
                On-policy & 0.0569 & 0.0512 & \textbf{0.0057} \\
                \pmdmean & \textbf{0.0126} & \textbf{0.0062} & 0.0064 \\
                \bottomrule
                \end{tabular}
            % \end{scriptsize}
        % \end{sc}
    \end{center}
    % \vspace{-0.15in}
\end{table}

\textbf{Efficiency from larger global batch size.}
As shown in \Cref{tab:main-results,tab:efficiency_comparison}, compared to the on-policy gradient with staleness 1, the off-policy \pmdmean\ achieves comparable performance with $4.6\times$ speedup by leveraging a larger global rollout batch size that amortizes the inference cost.

\textbf{Stability.}
As shown in \Cref{fig:main_result}, \pmdmean\ remains stable during training, while \pmdpart\ is highly unstable and could collapse even with a much larger $\tau$. 

\begin{figure}[htb!]
    \vspace{-0.1in}
    \begin{center}
    
    \centerline{\includegraphics[width=0.5\columnwidth]{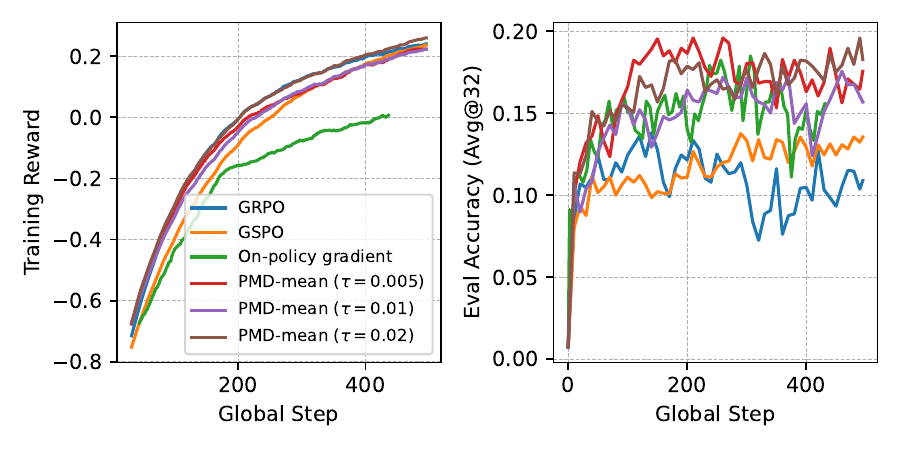}
    \includegraphics[width=0.5\columnwidth]{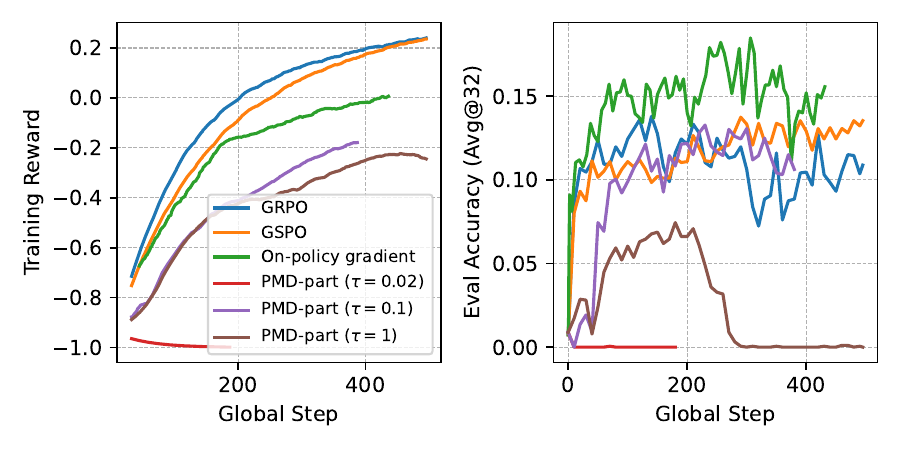}}
    % \vspace{-0.1in}
    % \centerline{\includegraphics[width=0.5\columnwidth]{figs/fig6.pdf}}
    % \vspace{-0.15in}
    \caption{
        Training curves (smoothed) of \pmdmean\ (upper) and \pmdpart\ (lower) with baselines for Qwen2.5-7B on DAPO-Math-17k (left) and the averaged evaluation accuracy on AIME 2024 and AIME 2025 (right). 
        The global step of on-policy gradient is divided by 16 to match other algorithms. 
    }
    \label{fig:main_result}
    \end{center}
    \vspace{-0.15in}
\end{figure}

\textbf{Policy ratios.}
We record the log policy ratios between the actor policy $\pi_\theta$ in the last mini-step and the old rollout policy $\pi_t$ of that global step, using it as an approximation of $\log\frac{\pi_{t+1}}{\pi_t}$. 
As shown in \Cref{fig:min_ratio}, the trend validates our theory in \Cref{sec:BB_instantiation} that the policy decrease in \pmdmean\ is weaker than \pmdpart, and becomes stronger as training proceeds and accuracy improves. 

\begin{figure}[tb]
      % \vspace{-0.1in}
  \begin{center}
    \centerline{\includegraphics[width=0.6\columnwidth]{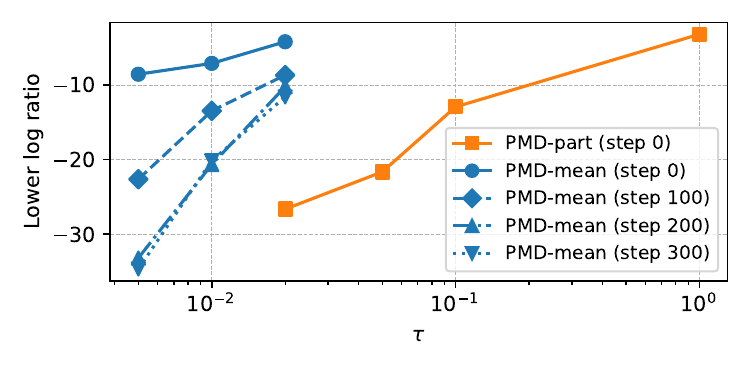}}
    \vspace{-0.15in}
    \caption{
        The minimum of log-ratios $\log\frac{\pi_{t+1}}{\pi_t}$ in \pmdmean\ and \pmdpart, estimated from the last update mini-batch. 
    }
    \label{fig:min_ratio}
  \end{center}
  \vspace{-0.25in}
\end{figure}

% \subsection{Comparison beyond Standard GRPO}
\textbf{Beyond standard GRPO.} 
Standard GRPO faces stability issues in training large MoE models. 
We further compare \pmdmean\ with GSPO \citep{zheng2025group}, which is an advanced variant of GRPO that incorporates sequence-level importance sampling (IS) with clipping and geometric mean normalization to resolve this MoE stability issue and achieves superior performance to GRPO. 
The results are shown in \Cref{tab:gspo}. 
As shown, \pmdmean\ outperforms GSPO on the Qwen2.5-7B and achieves comparable performance on the Qwen3-30B-A3B-Base MoE model. 
More detailed results are provided in \Cref{sec:appendix-experiments-supp}.

\begin{table}[htb!]
    % \vspace{-0.1in}
    \caption{Comparison of \pmdmean\ and GSPO (Avg@32). }
    \label{tab:gspo}
    % \vspace{-0.15in}
    \begin{center}\small
        % \begin{scriptsize}
            % \begin{sc}
                \begin{tabular}{lcccc}
                \toprule
                \textbf{Method} & \textbf{Model} & \textbf{AIME 24} & \textbf{AIME 25} & \textbf{Average} \\
                \midrule
                GSPO & 7B & 15.52 & 11.98 & 13.75 \\
                \pmdmean\ & 7B & \textbf{19.69} & \textbf{19.48} & \textbf{19.58} \\
                \midrule
                GSPO & 30B & \textbf{53.33} & 34.58 & {43.96} \\
                \pmdmean\ & 30B & {50.83} & \textbf{37.19} & \textbf{44.01} \\
                \bottomrule
                \end{tabular}
            % \end{sc}
        % \end{scriptsize}
    \end{center}
    % \vspace{-0.3in}
\end{table}

% ======================================================================
\section{Related Work}
\label{sec:related}

\textbf{RL for Post-Training of LLMs.}
Contemporary LLM post-training and alignment frameworks predominantly utilize reinforcement learning with human or AI feedback (RLHF/RLAIF, \citealt{ziegler2019fine,ouyang2022training,bai2022constitutional}) or verifiable rewards (RLVR, \citealt{lambert2024tulu}).
This paradigm has demonstrated particular efficacy for mathematical reasoning, coding, and logical tasks, subsequently inspiring large-scale RL methodologies and architectural designs for increasingly complex agentic capabilities \citep{jaech2024openai,guo2025deepseek,google2025gemini,team2025kimik20}.

Policy gradient methods \citep{williams1992simple,sutton1999policy}, especially TRPO and PPO \citep{schulman2015trust,schulman2017proximal}, have established themselves as foundational approaches in reinforcement learning. 
However, within LLM post-training contexts, maintaining parameterized value networks (critic models) introduces substantial estimation biases and computational overhead. 
To mitigate these limitations, recent methods such as GRPO \citep{shao2024deepseekmath} and RLOO \citep{ahmadian2024back} eliminate the critic component and instead leverage group-based relative baselines estimated from multiple Monte Carlo samples per prompt. 
More sophisticated approaches, such as DAPO \citep{yu2025dapo}, further refine performance through advanced optimization techniques.

These methods fundamentally depend on sampling distributions that closely match the current policy distribution, necessitating off-policy correction mechanisms to maintain training stability. 
While PPO/GRPO implement token-level importance sampling (IS) with clipping, more recent algorithms such as GSPO \citep{zheng2025group} and CISPO \citep{chen2025minimax} employ sequence-level IS or detached clipping IS to enhance stability when training mixture-of-expert (MoE) models. 
Additional research addresses the training-inference distribution mismatch at the infrastructure level, proposing methodological refinements including truncated IS \citep{yao2025offpolicy} and masked IS \citep{liu-li-2025-rl-collapse}. 
Although partially effective, these techniques substantially increase algorithmic complexity and incorporate numerous empirical adjustments that resist theoretical analysis. 
In contrast, our investigation focuses on the minimalist PMD algorithm, which offers greater analytical transparency while delivering competitive empirical performance.

\textbf{Policy Mirror Descent.}
The mirror descent framework \citep{nemirovski1983problem} provides a classical formulation for policy optimization in reinforcement learning \citep{geist2019theory,tomarmirror}, with extensive literature analyzing its theoretical iteration and sample complexities \citep{xiao2022convergence,zhan2023policy,lan2023policy,yuan2023linear,alfano2023novel,xu2024sample}. 
However, these analyses predominantly address tabular settings or function approximation scenarios with abundant samples, rather than the practical constraints of LLM post-training where rollouts are necessarily limited. 
Our work establishes a novel connection between practical LLM post-training methodologies and the choice of Bregman divergence, demonstrating that \pmdmean\ implicitly optimizes an adaptive mixed KL--$\chi^2$ regularizer. 
This mixed regularization approach shares conceptual similarities with $\chi^2$PO \citep{huang2024correcting}, which employs mixed KL--$\chi^2$ divergence to mitigate overfitting in KL-regularized DPO \citep{rafailov2023direct} under distribution shift conditions. 
While alternative regularization schemes have been explored in offline preference learning contexts \citep{wang2023beyond}, our investigation specifically addresses online policy optimization challenges.

The literature offers various regression-based approaches to approximate the ideal KL solution in \pmdpart. 
\citet{richemond2024offline} propose incorporating a value network to estimate the log-partition function, which dates back to \citet{nachum2017bridging}. 
\citet{gao2024rebel} develop a technique for fitting pairwise relative rewards that eliminates the partition term entirely. 
\citet{bartoldson2025trajectory} approximate the log-partition term in the loss using the group average of all other terms (with stop-grad). 
In contrast, \pmdmean\ \citep{team2025kimik15} implements a simpler strategy by directly approximating the log-partition function with the mean reward. 
Our analysis focuses on characterizing the theoretical properties of this practical approximation and establishing its mathematical foundations.

% ======================================================================
\section{Conclusion}
\label{sec:conclusion}
This paper presents a comprehensive analysis of \pmdmean, a practical algorithm for large-scale LLM post-training that has been deployed in leading language models. 
The analysis provides an exact characterization of the algorithm's population update through the Lambert-$W$ function, establishing its mathematical equivalence to solving mirror descent subproblems with an adaptive mixed KL--$\chi^2$ regularizer. 
This theoretical framework reveals a concrete mechanism underlying the algorithm's stability: the induced $\chi^2$ term systematically constrains large probability ratio changes, effectively preventing overly aggressive policy updates that often lead to training instability.

The investigation deliberately focuses on the principled form of \pmdmean\ to enable clearer theoretical analysis and understanding. 
Advanced techniques such as oversampling strategies and importance sampling corrections for addressing training/inference engine mismatches could potentially enhance \pmdmean's performance further, and these directions are reserved for future research. 
By elucidating the fundamental mathematical properties of \pmdmean, this work contributes to the development of theoretically grounded yet practically effective RL algorithms for LLM post-training, potentially enabling simpler, more robust, and scalable approaches to this increasingly critical task.

\bibliography{ref}
\bibliographystyle{bibstyle}

%%%%%%%%%%%%%%%%%%%%%%%%%%%%%%%%%%%%%%%%%%%%%%%%%%%%%%%%%%%%%%%%%%%%%%%%%%%%%%%
%%%%%%%%%%%%%%%%%%%%%%%%%%%%%%%%%%%%%%%%%%%%%%%%%%%%%%%%%%%%%%%%%%%%%%%%%%%%%%%
% APPENDIX
%%%%%%%%%%%%%%%%%%%%%%%%%%%%%%%%%%%%%%%%%%%%%%%%%%%%%%%%%%%%%%%%%%%%%%%%%%%%%%%
%%%%%%%%%%%%%%%%%%%%%%%%%%%%%%%%%%%%%%%%%%%%%%%%%%%%%%%%%%%%%%%%%%%%%%%%%%%%%%%
\newpage
\appendix
\onecolumn

\section{Experimental Details}
\label{sec:appendix-experiment}
We train on the DAPO-Math-17k\footnote{\url{https://huggingface.co/datasets/BytedTsinghua-SIA/DAPO-Math-17k}} dataset (deduplicated). 
The base models include Qwen2.5-7B\footnote{\url{https://huggingface.co/Qwen/Qwen2.5-7B}} and Qwen3-30B-A3B-Base\footnote{\url{https://huggingface.co/Qwen/Qwen3-30B-A3B-Base}}. 

\subsection{Prompt Template}
Our prompt template follows \citet{yu2025dapo}, with all questions \texttt{problem\_statement} processed in the following form. 
\begin{tcblisting}{listing only, title=Chain-of-Thought (CoT) Prompt Template}
Solve the following math problem step by step. The last line of your response should be of the form Answer: $Answer (without quotes) where $Answer is the answer to the problem.

{problem_statement}

Remember to put your answer on its own line after "Answer:".
\end{tcblisting}

\subsection{Hyperparameters}
We summarize key hyperparameters in \Cref{tab:hyperparams}. 
\begin{table}[htb]
    \centering
    \caption{Key hyperparameters for 7B dense model and 30B MoE model experiments.}
    \label{tab:hyperparams}
    \small
    \begin{tabular}{lcc}
        \toprule
        \textbf{Parameter} & \textbf{7B Dense} & \textbf{30B MoE} \\
        \midrule
        trainer.nnodes & 4 & 8 \\
        trainer.n\_gpus\_per\_node & 8 & 8 \\
        distributed strategy & FSDP & Megatron \\
        model.path & Qwen/Qwen2.5-7B & Qwen/Qwen3-30B-A3B-Base \\
        \midrule
        % data.train\_files & dapo-math-17k.parquet & dapo-math-17k.parquet \\
        % data.val\_files & AIME24 + AIME25 & AIME24 + AIME25 \\
        data.train\_batch\_size (prompts) & 512 & 512 \\
        data.gen\_batch\_size (prompts) & 512 & 512 \\
        data.max\_prompt\_length & 2048 & 2048 \\
        data.max\_response\_length & 8192 & 20480 \\
        \midrule
        rollout.name & vLLM & vLLM \\
        rollout.n (group size) & 16 & 16 \\
        rollout.temperature & 1.0 & 1.0 \\
        rollout.top\_p & 1.0 & 1.0 \\
        rollout.max\_model\_len & 10240 & 22528 \\
        val\_kwargs.n (avg@k) & 32 & 32 \\
        val\_kwargs.temperature & 1.0 & 1.0 \\
        val\_kwargs.top\_p & 0.7 & 0.7 \\
        % \midrule
        % algorithm.adv\_estimator & RLOO & RLOO \\
        % policy\_loss.loss\_mode & opmd & opmd \\
        % policy\_loss.pmd\_tau & 0.005 & 0.1 \\
        % algorithm.partition\_tau & 0.005 & 0.1 \\
        \midrule
        actor.ppo\_epochs & 1 & 1 \\
        actor.ppo\_mini\_batch\_size (prompts) & 32 & 32 \\
        actor.clip\_ratio\_low / high (GRPO) & 0.2 / 0.2 & 0.2 / 0.2 \\
        actor.clip\_ratio\_low / high (GSPO) & 3e-4 / 4e-4 & 3e-4 / 4e-4 \\
        optim.lr & 1e-6 & 1e-6 \\
        optim.betas & [0.9, 0.999] & [0.9, 0.999] \\
        optim.weight\_decay & 0.01 & 0.01 \\
        grad clip & 1.0 & 1.0 \\
        \midrule
        tensor\_model\_parallel\_size & 1 & 2 \\
        pipeline\_model\_parallel\_size & 1 & 2 \\
        expert\_model\_parallel\_size & N/A & 8 \\
        \bottomrule
    \end{tabular}
\end{table}

\subsection{Implementation Details}
Our implementation is based on verl\footnote{\url{https://github.com/verl-project/verl}} \citep{sheng2025hybridflow}. 
For GRPO and GSPO, we disable explicit KL penalty with respect to the base reference model, following the DAPO recipe \citep{yu2025dapo}. 
For each prompt $x$, we sample $K$ responses $y_1,\dots,y_K\sim \pi_t(\cdot\mid x)$ from the old policy $\pi_t(\cdot\mid x)\coloneqq\pi_{\theta_t}(\cdot\mid x)$, and get rewards $r_i\coloneqq r(x,y_i)\in\{-1,+1\}$ based on the correctness of answers. 
Let $\pi_\theta$ be the trainable policy. We have $\log \pi_\theta(y\mid x)= \sum_{j=1}^{\abs{y}}\log \pi_\theta(y_j\mid x,y_{<j})$. 

Define the token probability ratio and geometric normalized sequence probability ratio as follows: 
\begin{align*}
    \rho_{i,j}(\theta)\coloneqq \frac{\pi_\theta(y_{i,j}\mid x,y_{i,<j})}{\pi_t(y_{i,j}\mid x,y_{i,<j})}, 
    \quad
    s_{i}(\theta)\coloneqq \left(\frac{\pi_\theta(y_i\mid x)}{\pi_t(y_i\mid x)}\right)^{\frac{1}{\abs{y_i}}}. 
\end{align*}
Moreover, define the following advantages: 
\begin{align*}
    \widehat{A}^{\mathrm{grpo}}_i\coloneqq \frac{r_i-\mathrm{mean}(r_1,\dots,r_K)}{\mathrm{std}(r_1,\dots,r_K)}, 
    \quad
    \widehat{A}^{\mathrm{loo}}_i\coloneqq r_i-\frac{1}{K-1}\sum_{j\neq i} r_j,
    \quad
    \widehat{A}^{\mathrm{part}}_i\coloneqq r_i-\tau\log\left(\frac{1}{K-1}\sum_{j\neq i} e^{r_j/\tau}\right).
\end{align*}

The GRPO \citep{shao2024deepseekmath} loss is defined as
\begin{align*}
    \mathcal{L}_\mathrm{GRPO}(\theta)=
    -\mathbb{E}_{x\sim \mathcal{D}}\mathbb{E}_{y_1,\dots,y_K\sim\pi_t(\cdot\mid x)}\left[\frac{1}{K}\sum_{i=1}^K\frac{1}{\abs{y_i}}\sum_{j=1}^{\abs{y_i}} \min\Big(\rho_{i,j}(\theta)\widehat{A}^{\mathrm{grpo}}_i, \mathrm{clip}(\rho_{i,j}(\theta),1-\epsilon,1+\epsilon)\widehat{A}^{\mathrm{grpo}}_i\Big)\right],
\end{align*}
where $\epsilon=0.2$ and we discard the KL penalty term. 

When the global batch size is set to be the same as the mini-batch size, GRPO reduces to the on-policy gradient with advantage estimator $\widehat{A}^\mathrm{grpo}_i$. 
Empirically, we find that using $\widehat{A}^\mathrm{loo}_i$ yields similar performance, and thus we use the (length-normalized) RLOO \citep{ahmadian2024back} loss for on-policy gradient experiments. 
\begin{align*}
    \mathcal{L}_{\mathrm{RLOO}}(\theta) = -\mathbb{E}_{x\sim \mathcal{D}}\mathbb{E}_{y_1,\dots,y_K\sim\pi_t(\cdot\mid x)} \left[\frac{1}{K}\sum_{i=1}^K\frac{1}{\abs{y_i}}\widehat{A}^{\mathrm{loo}}_i \log \pi_\theta(y_i\mid x)\right].
\end{align*}

The GSPO \citep{zheng2025group} loss is defined as follows: 
\begin{align*}
    \mathcal{L}_{\mathrm{GSPO}}(\theta) = -\mathbb{E}_{x\sim \mathcal{D}}\mathbb{E}_{y_1,\dots,y_K\sim\pi_t(\cdot\mid x)}\left[\frac{1}{K}\sum_{i=1}^K \min\Big(s_{i}(\theta)\widehat{A}^{\mathrm{grpo}}_i, \mathrm{clip}(s_i(\theta),1-\epsilon_\mathrm{low},1+\epsilon_\mathrm{high})\widehat{A}^{\mathrm{grpo}}_i\Big)\right],
\end{align*}
where $\epsilon_\mathrm{low}=3\times 10^{-4}$ and $\epsilon_\mathrm{high}=4\times 10^{-4}$ as suggested. 

We implement \pmdmean\ \citep{team2025kimik15} and \pmdpart\ using the following losses. 
\begin{align*}
    \mathcal{L}_{\mathrm{mean}}(\theta) = \mathbb{E}_{x\sim \mathcal{D}}\mathbb{E}_{y_1,\dots,y_K\sim\pi_t(\cdot\mid x)} \left[\frac{1}{K}\sum_{i=1}^K\frac{\tau}{\abs{y_i}}\left(\log\frac{\pi_\theta(y_i\mid x)}{\pi_t(y_i\mid x)}-\frac{\widehat{A}^{\mathrm{loo}}_i}{\tau}\right)^2 \right],\\
    \mathcal{L}_{\mathrm{part}}(\theta) = \mathbb{E}_{x\sim \mathcal{D}}\mathbb{E}_{y_1,\dots,y_K\sim\pi_t(\cdot\mid x)} \left[\frac{1}{K}\sum_{i=1}^K\frac{\tau}{\abs{y_i}}\left(\log\frac{\pi_\theta(y_i\mid x)}{\pi_t(y_i\mid x)}-\frac{\widehat{A}^{\mathrm{part}}_i}{\tau}\right)^2 \right].
\end{align*}
The factor $\tau$ in the loss ensures the gradient norm does not differ too much when tuning $\tau$, and length normalization is consistent with the loss aggregation mode in other methods. 

\subsection{Supplementary Results}\label{sec:appendix-experiments-supp}
We provide supplementary experimental results in \Cref{fig:results-7B,fig:results-30B}. 

\begin{figure}[ht]
  % \vskip 0.2in
  \begin{center}
    \centerline{\includegraphics[width=0.75\columnwidth]{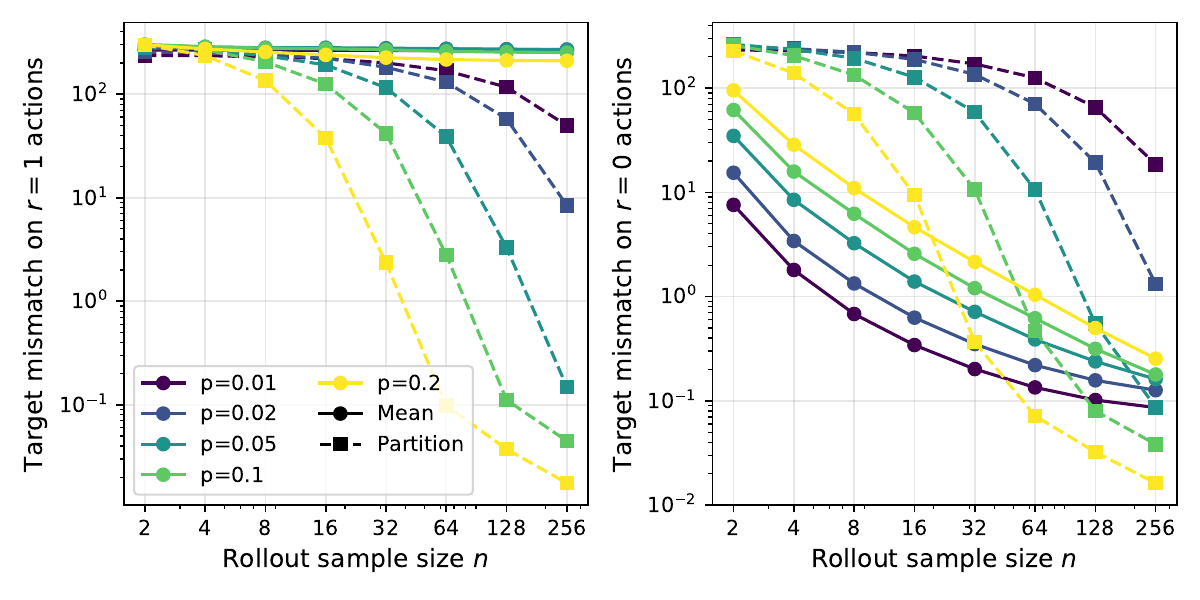}}
    \caption{
      Target estimation error for positive and negative actions in \pmdmean\ and \pmdpart\ under $\tau=0.05$ and $p_t$ ranges from $0.01$ to $0.2$. 
      Left: positive actions. Right: negative actions. 
      The plot shows the average from $100$ random seeds. 
      The error in \pmdmean\ mainly comes from a systematic mismatch between positive targets. 
    }
    \label{fig:target_actions}
  \end{center}
\end{figure}

\begin{figure}[ht]
  % \vskip 0.2in
  \begin{center}
    \centerline{\includegraphics[width=\columnwidth]{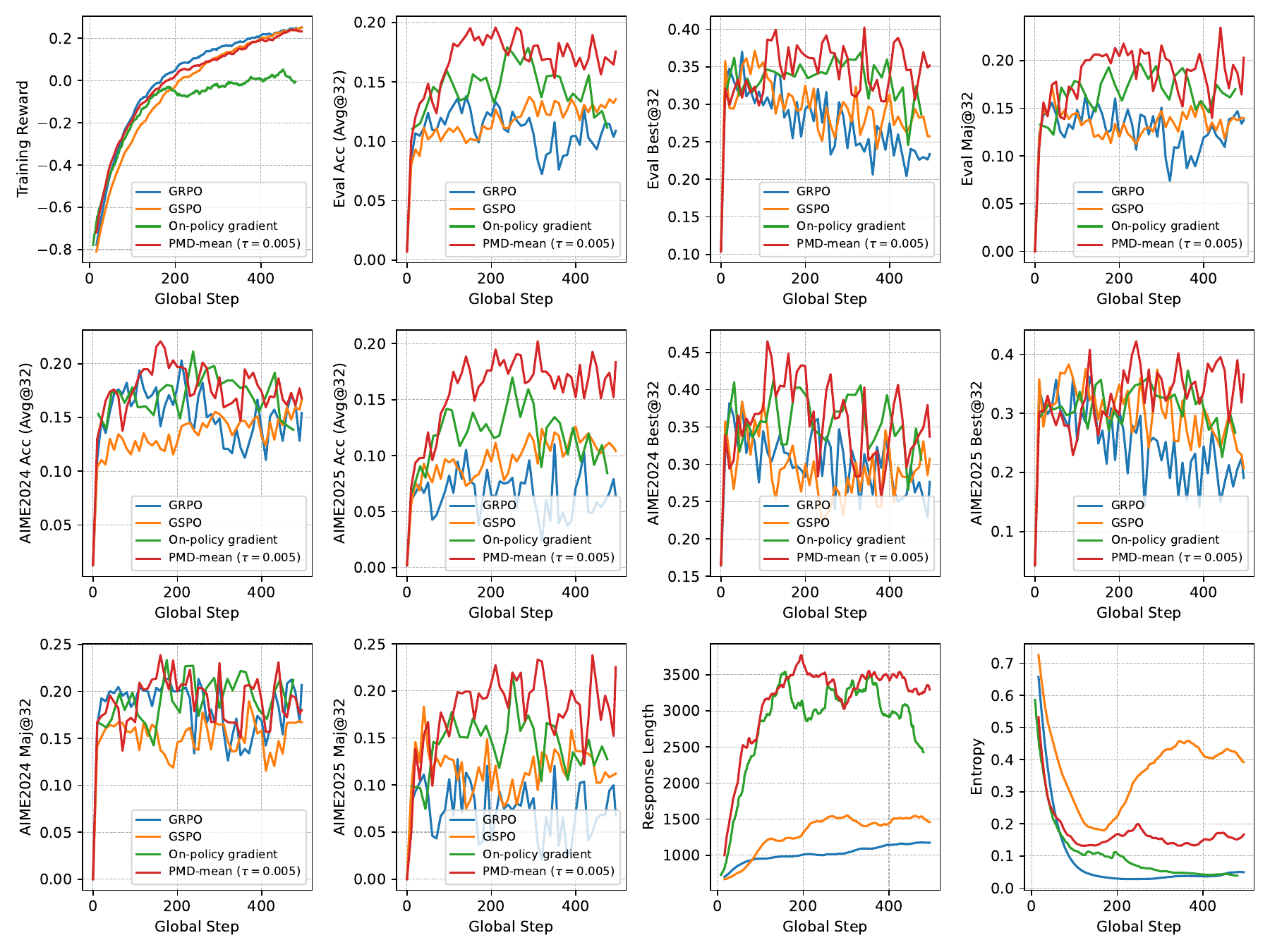}}
    \caption{
      Qwen2.5-7B training results for 15 epochs (495 global steps). Training reward, response length, and entropy are smoothed by EMA with an effective window size of 50. 
      \pmdmean\ achieves superior performance not only in Pass@1 (measured in Avg@32) but also Pass@32 and Maj@32 (accuracy of majority voting answer). 
    }
    \label{fig:results-7B}
  \end{center}
\end{figure}

\begin{figure}[ht]
  % \vskip 0.2in
  \begin{center}
    \centerline{\includegraphics[width=\columnwidth]{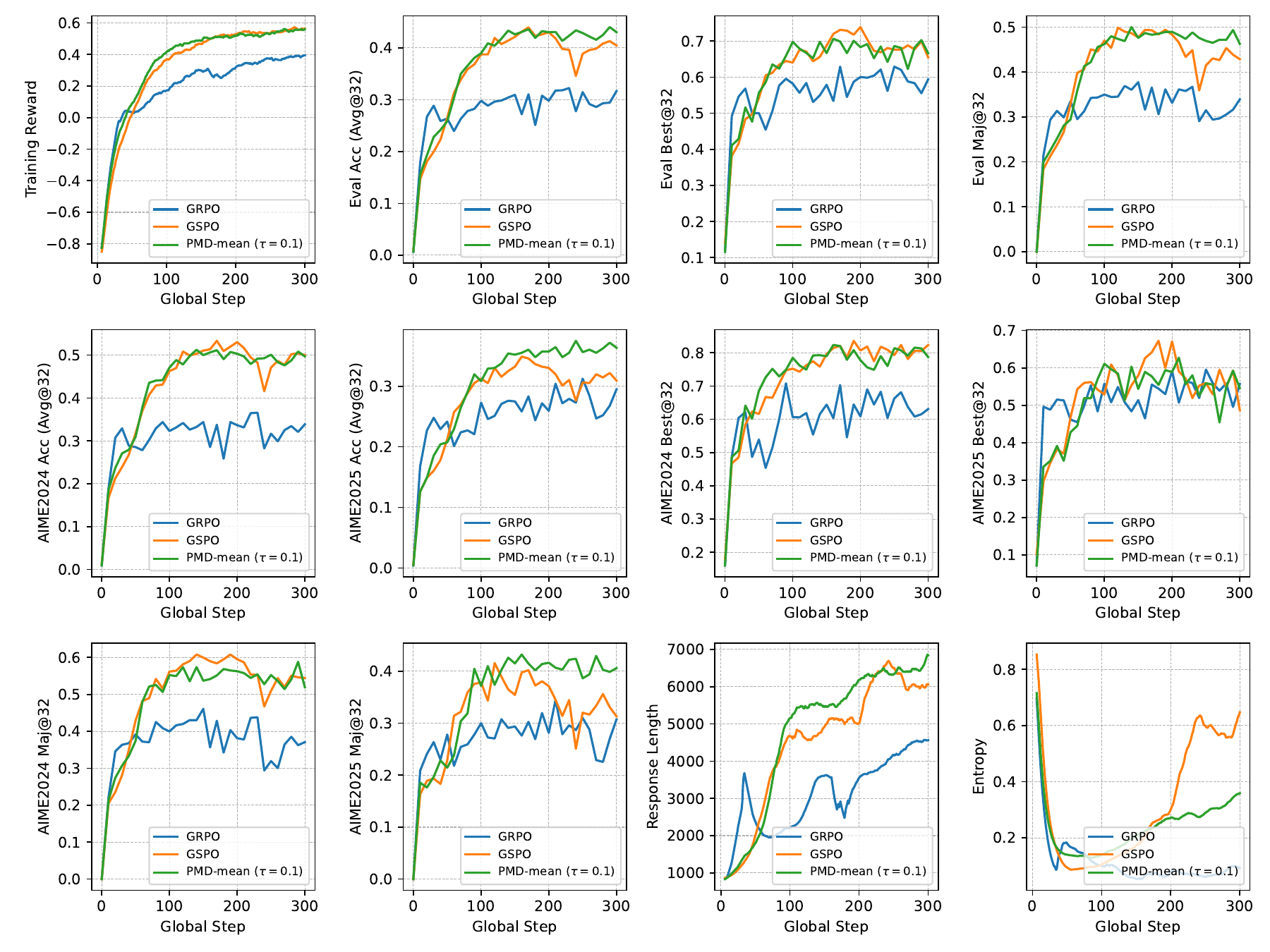}}
    \caption{
      Qwen3-30B-A3B-Base training results for 300 global steps. Training reward, response length, and entropy are smoothed by EMA with an effective window size of 20. 
    }
    \label{fig:results-30B}
  \end{center}
\end{figure}

\section{Missing Proofs in Section~\ref{sec:pmdmean}}
\subsection{Proof of \Cref{thm:pmdmean_solution}}
\label{sec:proof_thm1}
\begin{proof}[Proof of Theorem~\ref{thm:pmdmean_solution}]
    Define $u(y)=\log\frac{\pi(y)}{\pi_t(y)}$, then the Lagrangian of $\mathcal{L}_{\mathrm{mean}}$ (in the policy space) is written as
    \begin{align*}
        L(u,\lambda)=\frac{1}{2}\sum_{y\in\mathcal{Y}}\pi_t(y)(\Delta_y-\tau u(y))^2 + \lambda\cdot\left(\sum_{y\in\mathcal{Y}}\pi_t(y)e^{u(y)}-1\right),
    \end{align*}
    The KKT conditions yield 
    \begin{align*}
        \begin{cases}-\tau\pi_t(y)(\Delta_y-\tau u(y)) + \lambda\pi_t(y)e^{u(y)}=0,~\forall y\in\mathcal{Y},\\
        \sum_{y\in\mathcal{Y}}\pi_t(y)e^{u(y)}=1.
        \end{cases}
    \end{align*}
    Assume $\pi_t(y)>0$ for all $y\in\mathcal{Y}$ (which is true for LLMs without top-p/top-k constraints), then any stationary point of $\mathcal{L}_{\mathrm{mean}}$ should satisfy 
    \begin{align}
        -\tau(\Delta_y-\tau u(y))+\lambda e^{u(y)}=0
        &\iff \tau(\Delta_y-\tau u(y))e^{-u(y)}=\lambda\notag\\
        &\iff \left(\frac{\Delta_y}{\tau}-u(y)\right)e^{\frac{\Delta_y}{\tau}-u(y)}=\frac{\lambda}{\tau^2}e^{\frac{\Delta_y}{\tau}}\notag\\
        &\iff \frac{\Delta_y}{\tau}-u(y)=W\left(\frac{\lambda}{\tau^2}e^{\frac{\Delta_y}{\tau}}\right)\notag\\
        &\iff u(y)=\frac{\Delta_y}{\tau}-W\left(\frac{\lambda}{\tau^2}e^{\frac{\Delta_y}{\tau}}\right).\label{eq:KKT-mean}
    \end{align}
    The RHS of \eqref{eq:KKT-mean} is monotonically decreasing in $\lambda$, and $\mathbb{E}_{\pi_t}[\text{RHS}]=0$ when $\lambda=0$. Moreover, by Jensen’s inequality, 
    \begin{align*}
    1=\mathbb{E}_{y\sim\pi_t}[e^{u(y)}]\geq e^{\mathbb{E}_{y\sim\pi_t}[u(y)]}\implies\mathbb{E}_{y\sim\pi_t}[u(y)]\leq 0,
    \end{align*}
    thus there must be a unique $\lambda\geq 0$ (and hence $u$) such that the KKT conditions are satisfied. Moreover, the Hessian of Lagrangian in $u$ is positive definite under the assumption that $\pi_t(y)>0$, thus the point is indeed the minimizer of $\mathcal{L}_{\mathrm{mean}}$. 
    
    To characterize $\lambda$, we invoke several properties of $W$ for all $z\geq 0$: (1) $\cW(z)$ is concave and monotonically increasing; (2) $\cW(z)\geq\frac{z}{1+z}$; (3) $e^{\cW(z)}=\frac{z}{\cW(z)}$.
    Moreover, by the feasibility constraint, we have 
    \begin{align*}
        1
        &=\sum_{y\in\mathcal{Y}}\pi_t(y)\frac{\exp\left(\frac{\Delta_y}{\tau}\right)}{\exp\left(W\left( \frac{\lambda}{\tau^2} \exp\left(\frac{\Delta_y}{\tau}\right)\right)\right)}\\
        &=\sum_{y\in\mathcal{Y}}\pi_t(y)\frac{\exp\left(\frac{\Delta_y}{\tau}\right)W\left(\frac{\lambda}{\tau^2} \exp\left(\frac{\Delta_y}{\tau}\right)\right)}{\frac{\lambda}{\tau^2} \exp\left(\frac{\Delta_y}{\tau}\right)}\\
        &=\frac{\tau^2}{\lambda}\mathbb{E}_{y\sim\pi_t}\left[W\left(\frac{\lambda}{\tau^2} \exp\left(\frac{\Delta_y}{\tau}\right)\right)\right].
    \end{align*}
    For convenience, let $x\coloneqq\lambda/\tau^2\geq 0$, $A\coloneqq\mathbb{E}_{\pi_t}[e^{\Delta_y/\tau}]$, $B\coloneqq\mathbb{E}_{\pi_t}[e^{2\Delta_y/\tau}]$. Then the target is to show 
    \begin{align*}
        \frac{A(A-1)}{B}\leq x\leq\log A.
    \end{align*}
    
    For the upper bound, by concavity and Jensen’s inequality, 
    \begin{align*}
        x=\mathbb{E}_{\pi_t}[\cW(xe^{\Delta_y/\tau})]
        \leq \cW(\mathbb{E}[xe^{\Delta_y/\tau}])
        =\cW(xA).
    \end{align*}
    Since $W$ is increasing, above inequality implies 
    \begin{align*}
        xe^x\leq \cW(xA)e^{\cW(xA)}=xA\implies e^x\leq A\implies x\leq\log A,
    \end{align*}
    thus proving the upper bound.
    
   On the other hand, by lower bound on $\cW$, 
    \begin{align*}
        x&=\mathbb{E}_{\pi_t}[\cW(xe^{\Delta_y/\tau})]\\
        &\geq\mathbb{E}_{\pi_t}\left[\frac{xe^{\Delta_y/\tau}}{1+xe^{\Delta_y/\tau}}\right]\\
        &\geq
        \frac{\left(\mathbb{E}_{\pi_t}[xe^{\Delta_y/\tau}]\right)^2}{\mathbb{E}_{\pi_t}[xe^{\Delta_y/\tau}(1+xe^{\Delta_y/\tau})]}\\
        &=\frac{(xA)^2}{xA+x^2B},
    \end{align*}
    where the second inequality is from Cauchy-Schwarz. Solving the inequality yields the lower bound.

    For binary rewards $r\in\{0,1\}$ with $p=\E_{\pi_t}[r(y)]$, we have $\E_{\pi_t}[\Delta_y^2]=\Var(r)=p(1-p)$ and
    \begin{align}\label{eq:expression-x}
        x=p \cW(xe^{(1-p)/\tau}) + (1-p)\cW(xe^{-p/\tau}).
    \end{align}
    For large $\tau\to\infty$, we have 
    \begin{align*}
        &A=\mathbb{E}_{\pi_t}\left[1+\frac{\Delta_y}{\tau} + \frac{\Delta_y^2}{2\tau^2} + O(\tau^{-3})\right]=1+\frac{p(1-p)}{2\tau^2} + O(\tau^{-3}),\\
        &B=\mathbb{E}_{\pi_t}\left[1+\frac{2\Delta_y}{\tau} + \frac{2\Delta_y^2}{\tau^2} + O(\tau^{-3})\right]=1+\frac{2p(1-p)}{\tau^2} + O(\tau^{-3}).
    \end{align*}
    Thus, the lower bounds on $x=\lambda/\tau^2$ yield
    \begin{align*}
        A(A-1)\leq xB
        &\implies\left(\frac{p(1-p)}{2\tau^2}\right)\left(1+ O(\tau^{-2})\right)\leq \left(1+O(\tau^{-2})\right)\frac{\lambda}{\tau^2}\\
        &\implies \frac{p(1-p)}{2}-O(\tau^{-1})\leq\lambda.
    \end{align*}
    On the other hand, the upper bound yields
    \begin{align*}
        \lambda
        &\leq\tau^2\log A\\
        &=\tau^2\log\left(1+\frac{p(1-p)}{2\tau^2}+O(\tau^{-3})\right)\\
        &\leq\tau^2\left(\frac{p(1-p)}{2\tau^2}+O(\tau^{-3})\right)\\
        &=\frac{p(1-p)}{2}+O(\tau^{-1}).
    \end{align*}
    Combine the two bounds, we have $\lambda=\frac{1}{2}p(1-p)+O(\tau^{-1})$ as $\tau\to\infty$. 

    For small $\tau\to 0$, the bounds \eqref{eq:lambda_bounds} are too loose. 
    We define $v=\frac{\lambda}{\tau}=\tau x$ and show that $v\to p(1-p)$. 
    Firstly, the Lambert-$W$ function satisfies 
    \begin{align*}
        \log z-\log\log z\leq \cW(z)\leq \log z
    \end{align*}
    for $z>e$. 
    Moreover, for $z\geq 0$, $\cW(z)=\frac{z}{e^{\cW(z)}}\leq z$. 
    Since $e^{-p/\tau}$ decays faster than any polynomial in $\tau^{-1}$, we have 
    \begin{align*}
        0\leq (1-p)\tau \cW(xe^{-p/\tau})
        \leq (1-p)\tau xe^{-p/\tau}
        =(1-p)ve^{-p/\tau}\to 0.
    \end{align*}
    Therefore, the second term in \eqref{eq:expression-x} vanishes. 
    For the remaining dominant term, let $z_1=xe^{(1-p)/\tau}=\frac{v}{\tau}e^{(1-p)/\tau}$, then $z_1\to\infty$ when $\tau\to 0$, and $\log z_1=\frac{1-p}{\tau}+\log\frac{v}{\tau}$. 
    In this case, our bound on the Lambert-$W$ function gives 
    \begin{align*}
        \tau \cW(z_1)
        =(1-p) + o(1),
    \end{align*}
    and thus 
    \begin{align*}
        v=\tau x
        =\tau\left(p \cW(z_1) + o(1)\right)
        =p(1-p) + o(1)
        \implies 
        \lambda=\tau p(1-p)(1+o(1)).
    \end{align*}
\end{proof}

\subsection{Policy Ratio}\label{sec:appendix-ratio}
We formally state the policy ratios of \pmdmean\ and \pmdpart\ in \Cref{eq:ratio_pos_mean,eq:ratio_pos_part,eq:ratio_neg_mean,eq:ratio_neg_part} in the following proposition. 
\begin{proposition}[Binary-reward ratios for small $\tau$]
\label{prop:binary_ratios}
    Assume $r(y)\in\{0,1\}$ and let $p=\E_{\pi_t}[r(y)]\in(0,1)$.
    Consider the ratios $\rho(y)\coloneqq\pi_{t+1}(y)/\pi_t(y)$.
    
    \noindent\textbf{\pmdmean.}
    As $\tau\to 0$, for any $y$ with $r(y)=1$,
    \begin{align*}
        \rho^{\mathrm{mean}}_+(y)
        =\frac{1}{p}-\frac{1-p}{p}e^{-p/\tau}(1+o(1)),
    \end{align*}
    and for any $y$ with $r(y)=0$,
    \begin{align*}
        \rho^{\mathrm{mean}}_-(y)=e^{-p/\tau}(1+o(1)).
    \end{align*}
    
    % \smallskip
    \noindent\textbf{\pmdpart.}
    For any $\tau>0$, the partition update satisfies
    \begin{align*}
        \rho^{\mathrm{part}}_+(y)=\frac{1}{p+(1-p)e^{-1/\tau}},
        \quad
        \rho^{\mathrm{part}}_-(y)=\frac{e^{-1/\tau}}{p+(1-p)e^{-1/\tau}},
    \end{align*}
    and in particular as $\tau\to 0$,
    \begin{align*}
        \rho^{\mathrm{part}}_+(y)=\frac{1}{p}-\frac{1-p}{p^2}e^{-1/\tau}+O(e^{-2/\tau}),
        \quad
        \rho^{\mathrm{part}}_-(y)=\frac{1}{p}e^{-1/\tau}+O(e^{-2/\tau}).
    \end{align*}
\end{proposition}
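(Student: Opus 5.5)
The \pmdpart\ part is immediate, and I will do it first. For binary rewards, $Z_t=\E_{\pi_t}[e^{r/\tau}]=1-p+pe^{1/\tau}$. By \eqref{eq:boltzmann_update}, $\rho^{\mathrm{part}}_+=e^{1/\tau}/Z_t$ and $\rho^{\mathrm{part}}_-=1/Z_t$. Dividing numerator and denominator by $e^{1/\tau}$ gives the stated exact formulas. Expanding $1/(p+\epsilon)$ with $\epsilon=(1-p)e^{-1/\tau}$, namely $\frac1p-\frac{\epsilon}{p^2}+O(\epsilon^2)$, gives both asymptotic expansions.

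For \pmdmean, I will use \eqref{eq:pmdmean_lambertW_form} with $\Delta_+=1-p$ and $\Delta_-=-p$. The key simplification is to use normalization instead of tracking the positive-action Lambert-$W$ term directly. The constraint $p\rho_++(1-p)\rho_-=1$ gives
\[
\rho_+=\frac{1}{p}-\frac{1-p}{p}\rho_- .
\]
So the positive-action asymptotic follows at once from the negative-action one, and the whole task reduces to showing $\rho_-=e^{-p/\tau}(1+o(1))$.

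For the negative action, I will write $\rho_-=e^{-p/\tau}e^{-W(z_0)}$ with $z_0=\frac{\lambda}{\tau^2}e^{-p/\tau}$. It suffices to show $z_0\to0$, since then $W(z_0)\le z_0\to0$ and $e^{-W(z_0)}=1+o(1)$. From \Cref{thm:pmdmean_solution}, $\lambda=\tau p(1-p)(1+o(1))$, which gives $z_0=\frac{p(1-p)(1+o(1))}{\tau}e^{-p/\tau}\to0$, because $e^{-p/\tau}$ beats any polynomial in $1/\tau$. If I want to avoid depending on the exact asymptotic constant, the crude upper bound $\lambda\le\tau^2\log A$ from \eqref{eq:lambda_bounds} also works. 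Here $A=pe^{(1-p)/\tau}+(1-p)e^{-p/\tau}$, so $\log A\le(1-p)/\tau+O(1)$. Then $z_0\le\log A\cdot e^{-p/\tau}=O(\tau^{-1}e^{-p/\tau})\to0$.

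Finally, substituting $\rho_-=e^{-p/\tau}(1+o(1))$ into the normalization identity gives the stated $\rho^{\mathrm{mean}}_+$, with the $(1+o(1))$ factor carried through. The only point needing care is the claim $z_0\to0$. The upper bound on $\lambda$ from \Cref{thm:pmdmean_solution} makes it essentially immediate, so I expect no real obstacle beyond checking that the $o(1)$ terms are uniform for fixed $p\in(0,1)$.
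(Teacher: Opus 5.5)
Your proposal is correct and follows essentially the same route as the paper: exact formulas for \pmdpart, and for \pmdmean\ reducing everything to $\rho_-$ through the normalization identity $p\rho_++(1-p)\rho_-=1$, with $\rho_-=e^{-p/\tau}e^{-W(z_0)}=\frac{1}{x}W(xe^{-p/\tau})$ and $z_0\to 0$ (the paper instead invokes $W(z)=z+O(z^2)$ together with $\lambda\sim\tau p(1-p)$). Your fallback, which uses only $\lambda\le\tau^2\log A\le\tau(1-p)$ to get $z_0\to0$, is slightly more self-contained, because it does not depend on the sharper small-$\tau$ asymptotic for $\lambda$.
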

\begin{proof}[Proof of \Cref{prop:binary_ratios}]
    Throughout, let $x\coloneqq\lambda/\tau^2$.
    For \pmdmean, start from the Lambert-$W$ form \eqref{eq:pmdmean_lambertW_form} and use $e^{-\cW(z)}=\cW(z)/z$ to rewrite the ratio as
    \begin{align}\label{eq:ratio_W_form}
        \frac{\pi^{\mathrm{mean}}_{t+1}(y)}{\pi_t(y)}
        =\exp\Big(\frac{\Delta_y}{\tau}-\cW(xe^{\Delta_y/\tau})\Big)
        =\frac{1}{x}W\big(xe^{\Delta_y/\tau}\big).
    \end{align}
    In the binary case, $\Delta=1-p$ when $r=1$ and $\Delta=-p$ when $r=0$, so defining
    \begin{align*}
        \rho^{\mathrm{mean}}_+ \coloneqq \frac{1}{x}W\Big(xe^{(1-p)/\tau}\Big),
        \quad
        \rho^{\mathrm{mean}}_- \coloneqq \frac{1}{x}W\Big(xe^{-p/\tau}\Big),
    \end{align*}
    we have the normalization identity
    \begin{align}
        \label{eq:norm_binary}
        1=\sum_y\pi^{\mathrm{mean}}_{t+1}(y)=p\rho^{\mathrm{mean}}_+ + (1-p)\rho^{\mathrm{mean}}_-.
    \end{align}
    
    By \eqref{eq:lambda_asymptotics}, $\lambda\sim \tau p(1-p)$ as $\tau\to 0$, hence
    \begin{align*}
        x=\frac{\lambda}{\tau^2}\sim \frac{p(1-p)}{\tau}=\Theta\left(\frac{1}{\tau}\right).
    \end{align*}
    Therefore $xe^{-p/\tau}\to 0$ as $\tau\to 0$. Using the Taylor expansion
    $\cW(z)=z+O(z^2)$ as $z\to 0$, we obtain
    \begin{align*}
        \cW(xe^{-p/\tau}) = xe^{-p/\tau}(1+o(1)),
    \end{align*}
    and plugging into \eqref{eq:ratio_W_form} yields
    \begin{align*}
        \rho^{\mathrm{mean}}_- = \frac{1}{x}\cW(xe^{-p/\tau}) = e^{-p/\tau}(1+o(1)).
    \end{align*}
    
    For $\rho^{\mathrm{mean}}_+$, substituting the above into \eqref{eq:norm_binary} gives
    \begin{align*}
        \rho^{\mathrm{mean}}_+
        =\frac{1-(1-p)\rho^{\mathrm{mean}}_-}{p}
        =\frac{1}{p}-\frac{1-p}{p}e^{-p/\tau}(1+o(1)).
    \end{align*}
    This proves the \pmdmean\ claims.

    For \pmdpart, the update is explicit:
    \begin{align*}
        \pi^{\mathrm{part}}_{t+1}(y)
        =\frac{\pi_t(y)e^{r(y)/\tau}}{p e^{1/\tau}+(1-p)}.
    \end{align*}
    Hence
    \begin{align*}
        \rho^{\mathrm{part}}_+ = \frac{e^{1/\tau}}{p e^{1/\tau}+(1-p)} = \frac{1}{p+(1-p)e^{-1/\tau}},
        \quad
        \rho^{\mathrm{part}}_- = \frac{1}{p e^{1/\tau}+(1-p)} = \frac{e^{-1/\tau}}{p+(1-p)e^{-1/\tau}}.
    \end{align*}
    Expanding $(1+u)^{-1}=1-u+O(u^2)$ with $u=\frac{1-p}{p}e^{-1/\tau}$ yields
    \begin{align*}
        \rho^{\mathrm{part}}_+ = \frac{1}{p}-\frac{1-p}{p^2}e^{-1/\tau}+O(e^{-2/\tau}),
        \quad
        \rho^{\mathrm{part}}_- = \frac{1}{p}e^{-1/\tau}+O(e^{-2/\tau}).
    \end{align*}
\end{proof}

\subsection{Proof of Proposition~\ref{prop:mixed_subproblem}}
\label{sec:proof_mixed}
\begin{proof}[Proof of Proposition~\ref{prop:mixed_subproblem}]
    Fix a state and omit $x$. Let $u(y)\coloneqq\log\frac{\pi(y)}{\pi_t(y)}$. 
    Then the simplex constraint $\sum_y\pi(y)=1$ is equivalent to the single equality constraint
    \begin{align}
        \label{eq:u_norm}
        \E_{y\sim\pi_t}\big[e^{u(y)}\big]=1.
    \end{align}
    Hence the mixed subproblem \eqref{eq:mixed_subproblem} is equivalent to
    \begin{align*}
        \max_{u\colon\E_{\pi_t}[e^u]=1}\E_{\pi_t}\Big[e^u r - \tau e^u u - \frac{\lambda}{2\tau}(e^u-1)^2\Big].
    \end{align*}
    Introduce a Lagrange multiplier $\nu\in\mathbb{R}$ for the constraint \eqref{eq:u_norm} and define
    \begin{align*}
        \mathcal{L}(u,\nu)
        \coloneqq\E_{\pi_t}\Big[e^u r - \tau e^u u - \frac{\lambda}{2\tau}(e^u-1)^2\Big] +\nu\big(\E_{\pi_t}[e^u]-1\big).
    \end{align*}
    Stationarity w.r.t. $u(y)$ gives, for all $y$,
    \begin{align*}
        0=\pi_t(y)e^{u(y)}\Big(r(y)-\tau(u(y)+1)-\frac{\lambda}{\tau}(e^{u(y)}-1)+\nu\Big).
    \end{align*}
    Dividing $\pi_t(y)e^{u(y)}>0$ and rearranging the terms give
    \begin{align}\label{eq:u_kkt_shift}
        u(y)-\frac{r(y)}{\tau}+\frac{\lambda}{\tau^2}e^{u(y)}=c,
        \quad
        c\coloneqq\frac{\nu+\lambda/\tau-\tau}{\tau},
    \end{align}
    where $c$ is a constant independent of $y$.
    
    Finally, adding a constant baseline to $r$ does not change the optimizer over the simplex, thus by choosing $b\coloneqq\tau c$ and writing $\Delta_y\coloneqq r(y)-\E_{\pi_t}[r]$ (which differs from $r$ by a constant),
    we may rewrite \eqref{eq:u_kkt_shift} equivalently as
    \begin{align}\label{eq:u_kkt_final}
        u(y)-\frac{\Delta_y}{\tau}+\frac{\lambda}{\tau^2}e^{u(y)}=0, \quad \E_{\pi_t}[e^{u(y)}]=1.
    \end{align}
    These are exactly the same KKT conditions obtained for the \pmdmean\ population objective in \Cref{sec:proof_thm1}. Therefore the \pmdmean\ solution $\pi_{t+1}$ also solves the mixed subproblem \eqref{eq:mixed_subproblem} with the same $\lambda$.
\end{proof}

\section{Missing Proofs in Section~\ref{sec:convergence}}
We first state a simple self-bounding lemma from Bernstein's inequality.

\begin{lemma}[Bernstein with self-bounding variance]\label{lem:bernstein_excess_self_bounding}
    Let $Z_1,\dots,Z_n$ be i.i.d. random variables such that $\E[Z_i]=\mu\geq 0$, $\abs{Z_i}\leq R$, and $\E[Z_i^2]\leq v\mu$ for some $v>0$. 
    Then for any $\delta\in(0,1)$, with probability at least $1-\delta$,
    \begin{align}\label{eq:bernstein_excess_self_bounding}
        \mu\leq 2\cdot \frac{1}{n}\sum_{i=1}^n Z_i +\frac{(2v+\frac{4}{3}R)\log(1/\delta)}{n}.
    \end{align}
\end{lemma}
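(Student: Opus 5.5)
The plan is to apply Bernstein's inequality to the centered variables $X_i \coloneqq \mu - Z_i$, bounding their variance by the self-bounding condition and then solving the resulting inequality for $\mu$.

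\textbf{Range and variance of $X_i$.} We have $\E[X_i]=0$. Since $\abs{Z_i}\le R$ and $\mu\le \E\abs{Z_i}\le R$, we get $X_i \le \mu + R \le 2R$, so the upper deviation is controlled by $2R$. For the variance, $\Var(X_i)=\Var(Z_i)\le \E[Z_i^2]\le v\mu$.

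\textbf{Bernstein bound.} The one-sided Bernstein inequality for i.i.d.\ variables with $X_i\le b$ and variance $\sigma^2$ gives, with probability at least $1-\delta$,
\begin{align*}
    \mu-\frac1n\sum_{i=1}^n Z_i \le \sqrt{\frac{2\sigma^2\log(1/\delta)}{n}}+\frac{b\log(1/\delta)}{3n}.
\end{align*}
We take $\sigma^2=v\mu$ and $b=2R$, and write $L\coloneqq\log(1/\delta)/n$ and $\bar Z\coloneqq\frac1n\sum_{i=1}^n Z_i$.

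\textbf{Removing the square root.} This step is the main obstacle. Apply AM--GM in the form $\sqrt{2v\mu L}\le \frac{\mu}{2}+vL$. The bound then becomes
\begin{align*}
    \mu\le \bar Z+\frac{\mu}{2}+vL+\frac{2R}{3}L.
\end{align*}
Rearranging gives $\mu/2\le \bar Z+(v+\tfrac{2}{3}R)L$. Multiplying by $2$ yields
\begin{align*}
    \mu\le 2\bar Z+\Bigl(2v+\tfrac{4}{3}R\Bigr)L,
\end{align*}
which is exactly \eqref{eq:bernstein_excess_self_bounding}.

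The constants work out precisely with $b=2R$, which is why the bound on the range of $X_i$ matters. If one instead used the cruder two-sided range $\abs{X_i}\le 2R$, the same argument still applies to the one-sided tail and gives the same constants.
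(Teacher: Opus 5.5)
Your proposal is correct and follows the paper's proof essentially step for step: Bernstein's inequality with $\Var(Z_i)\le \E[Z_i^2]\le v\mu$, then $\sqrt{2v\mu L}\le \mu/2+vL$, then rearranging and multiplying by $2$. Your explicit justification of the $\frac{2R}{3}L$ term, via the one-sided bound $\mu-Z_i\le 2R$ in the form of Bernstein with deviation $\sqrt{2\sigma^2L}+bL/3$, is a detail the paper leaves implicit, and it is right.
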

\begin{proof}
    By Bernstein's inequality for bounded variables, with probability at least $1-\delta$,
    \begin{align*}
        \mu\leq\frac{1}{n}\sum_{i=1}^n Z_i+\sqrt{\frac{2\Var(Z_i)\log(1/\delta)}{n}}+\frac{2R\log(1/\delta)}{3n}.
    \end{align*}
    Using $\Var(Z_i)\leq \E[Z_i^2]\leq v\mu$ yields
    \begin{align*}
        \mu\leq
        \frac{1}{n}\sum_{i=1}^n Z_i +\sqrt{\frac{2v\mu\log(1/\delta)}{n}} +\frac{2R\log(1/\delta)}{3n}.
    \end{align*}
    Apply $\sqrt{ab}\leq \frac{1}{2}a+\frac{1}{2}b$ with
    $a=\mu$ and $b=\frac{2v\log(1/\delta)}{n}$:
    \begin{align*}
        \sqrt{\frac{2v\mu\log(1/\delta)}{n}}\leq\frac{\mu}{2}+\frac{v\log(1/\delta)}{n}.
    \end{align*}
    Substitute and rearrange to obtain \eqref{eq:bernstein_excess_self_bounding}.
\end{proof}

\subsection{Proof of \Cref{thm:erm_loo}}
\label{sec:proof_erm_loo}
\begin{proof}[Proof of \Cref{thm:erm_loo}]
    Fix iteration $t$. For each $\pi\in\Pi$, define the residual
    \begin{align*}
        f_\pi(y)\coloneqq s_\pi(y)-s^\star(y).
    \end{align*}
    By \Cref{asm:realizable_plugin}, $s^\star=s_{\pi^\star_{t+1}}$ for some $\pi^\star_{t+1}\in\Pi$. Hence by \Cref{asm:bounded_logratio},
    for all $\pi\in\Pi$ and $y\in\mathcal{Y}$, $s_\pi(y)\in[-B,B_+]$, 
    \begin{align*}
        \abs{f_\pi(y)} 
        \leq B + B_+
        \eqqcolon M.
    \end{align*}
    Define the clean empirical loss
    \begin{align*}
        \widehat{\mathcal{L}}^{\mathrm{clean}}_t(\pi)
        \coloneqq\frac{1}{2n}\sum_{i=1}^n f_\pi(y_i)^2
        =\frac{1}{n}\sum_{i=1}^n X_i(\pi),
        \quad
        X_i(\pi)\coloneqq \frac{1}{2} f_\pi(y_i)^2.
    \end{align*}
    Then $0\leq X_i(\pi)\leq \frac{1}{2} M^2$ and $\E[X_i(\pi)]=\mathcal{L}_t(\pi)$.
    Applying \Cref{lem:bernstein_excess_self_bounding} with $v=R=\frac{1}{2} M^2$ and a union bound over $\pi\in\Pi$, we get that with probability at least $1-\delta$, for all $\pi\in\Pi$, 
    \begin{align}
    \label{eq:uniform_clean_fast_rate}
        \mathcal{L}_t(\pi)
        \leq 2\widehat{\mathcal{L}}^{\mathrm{clean}}_t(\pi) +\frac{5M^2\log(\abs{\Pi}/\delta)}{3n}
        < 2\widehat{\mathcal{L}}^{\mathrm{clean}}_t(\pi)+\frac{5M^2\log(2\abs{\Pi}/\delta)}{3n}.
    \end{align}
    
    Next, relate $\widehat{\mathcal{L}}^{\mathrm{clean}}_t(\widehat{\pi}_{t+1})$ to the target mismatch $\overline{\Delta^2}$.
    For each $i$, write $\Delta_i=\widetilde{s}^\star_{-i}(y_i)-s^\star(y_i)$ so that
    \begin{align*}
        s_{\widehat{\pi}_{t+1}}(y_i)-\widetilde{s}^\star_{-i}(y_i) =f_{\widehat{\pi}_{t+1}}(y_i)-\Delta_i.
    \end{align*}
    Using \Cref{asm:realizable_plugin,asm:opt_err}, we get 
    \begin{align*}
        \widehat{\mathcal{L}}_t(\widehat{\pi}_{t+1})
        \leq\inf_{\pi\in\Pi}\widehat{\mathcal{L}}_t(\pi)+\epsilon_{\mathrm{opt}}
        \leq\widehat{\mathcal{L}}_t(\pi^\star_{t+1})+\epsilon_{\mathrm{opt}}
        =\frac{1}{2n}\sum_{i=1}^n \Delta_i^2+\epsilon_{\mathrm{opt}}
        =\frac{1}{2}\overline{\Delta^2}+\epsilon_{\mathrm{opt}}.
    \end{align*}
    On the other hand, the pointwise inequality $(u-v)^2\ge \frac{1}{2} u^2-v^2$ implies
    \begin{align*}
        \widehat{\mathcal{L}}_t(\widehat{\pi}_{t+1})
        =\frac{1}{2n}\sum_{i=1}^n \big(f_{\widehat{\pi}_{t+1}}(y_i)-\Delta_i\big)^2
        \geq\frac{1}{4n}\sum_{i=1}^n f_{\widehat{\pi}_{t+1}}(y_i)^2-\frac{1}{2n}\sum_{i=1}^n \Delta_i^2
        =\frac{1}{2}\widehat{\mathcal{L}}^{\mathrm{clean}}_t(\widehat{\pi}_{t+1}) -\frac{1}{2}\overline{\Delta^2}.
    \end{align*}
    Combining the last two displays yields
    \begin{align*}
        \widehat{\mathcal{L}}^{\mathrm{clean}}_t(\widehat{\pi}_{t+1})
        \leq 2\overline{\Delta^2}+2\epsilon_{\mathrm{opt}}.
    \end{align*}
    Finally, apply \eqref{eq:uniform_clean_fast_rate} at $\pi=\widehat{\pi}_{t+1}$ and substitute the above bound, we get
    \begin{align*}
        \mathcal{L}_t(\widehat{\pi}_{t+1})
        \leq 2\widehat{\mathcal{L}}^{\mathrm{clean}}_t(\pi)+\frac{5M^2\log(2\abs{\Pi}/\delta)}{3n}
        \leq 4\overline{\Delta^2}+4\epsilon_{\mathrm{opt}}+\frac{5M^2\log(2\abs{\Pi}/\delta)}{3n},
    \end{align*}
    which proves \eqref{eq:erm_master_bound} as $M\leq 2B$.
\end{proof}

\subsection{Proof of \Cref{cor:convergence_one_step}}
\label{sec:proof_cor_one_step}
\begin{proof}[Proof of \Cref{cor:convergence_one_step}]
    Since $r\in[0,1]$, for any two policies $p,q$ we have $\abs{J(p)-J(q)}\leq \TV(p,q)$.
    Moreover,
    \begin{align*}
        \TV(\pi_{t+1},\pi^\star_{t+1})
        &=\frac{1}{2}\E_{y\sim\pi_t}\left[\abs{e^{s_{\pi_{t+1}}(y)}-e^{s^\star(y)}}\right]\\
        &\leq \frac{1}{2}\E_{y\sim\pi_t}\left[\max\bigl\{e^{s_{\pi_{t+1}}(y)},e^{s^\star(y)}\bigr\}\cdot\abs{s_{\pi_{t+1}}(y)-s^\star(y)}\right]\\
        &\leq \frac{1}{2}\Big(\sqrt{\E_{\pi_t}\big[e^{2s_{\pi_{t+1}}(y)}\big]}+\sqrt{\E_{\pi_t}\big[e^{2s^\star(y)}\big]}\Big)\cdot
        \sqrt{\E_{\pi_t}\left[(s_{\pi_{t+1}}(y)-s^\star(y))^2\right]}\\
        &= \frac{1}{2}\Big(\sqrt{1+\chi^2(\pi_{t+1}~\|~\pi_t)}+\sqrt{1+\chi^2(\pi^\star_{t+1}~\|~\pi_t)}\Big)\cdot \sqrt{2\mathcal{L}_t(\pi_{t+1})},
    \end{align*}
    where we used $\abs{e^a-e^b}\leq \max\{e^a,e^b\}\abs{a-b}$ and Cauchy--Schwarz.
    Since $s_\pi\leq B_+$ and $\E_{\pi_t}[e^{s_\pi}]=1$, we have $\E_{\pi_t}[e^{2s_\pi}]\leq e^{B_+}$, hence the prefactor is at most $e^{B_+/2}$ up to constants.
    Combining the assumption on the ideal convergence rate \eqref{eq:ideal_improvement} and the bound on $\mathcal{L}_t(\pi_{t+1})$ in \Cref{thm:erm_loo}, we obtain the result \eqref{eq:one_step_improvement}.
\end{proof}

\subsection{Proof of Proposition~\ref{prop:eta_mean_asymp}}\label{sec:proof_eta_mean}
\begin{proof}[Proof of Proposition~\ref{prop:eta_mean_asymp}]
    By \eqref{eq:ratio_neg_mean} in \Cref{sec:closed_form}, for $r(y)=0$,
    \begin{align*}
        \frac{\pi^\mathrm{mean}_{t+1}(y)}{\pi_t(y)}=\exp\left(-\frac{p_t}{\tau}\right)\big(1+o(1)\big).
    \end{align*}
    Therefore, the total probability mass on the negative set contracts as
    \begin{align*} 
        1-p_{t+1}^\star
        &=\E_{y\sim\pi_t}\left[\frac{\pi_{t+1}^\star(y)}{\pi_t(y)}\cdot \mathbf{1}\{r(y)=0\}\right]\\
        &=(1-p_t)\exp\left(-\frac{p_t}{\tau}\right)\big(1+o(1)\big),
    \end{align*}
    which yields \eqref{eq:eta_mean_asymp}.
\end{proof}

\subsection{Proof of Proposition~\ref{prop:eta_part_exact}}\label{sec:proof_eta_part}
\begin{proof}[Proof of Proposition~\ref{prop:eta_part_exact}]
    Under \eqref{eq:boltzmann_update}, the total probability mass on $r=1$ is
    \begin{align*}
        p_{t+1}^\star
        =\frac{p_t e^{1/\tau}}{p_t e^{1/\tau}+(1-p_t)},
    \end{align*}
    which implies $1-p_{t+1}^\star = \frac{1-p_t}{p_t e^{1/\tau}+(1-p_t)}$.
    This is exactly \eqref{eq:ideal_improvement} with \eqref{eq:eta_part_exact}.
\end{proof}

\subsection{Proof of \Cref{prop:tgt_err_clean}}
\label{sec:proof_tgt_err_clean}

\begin{lemma}[LOO mean concentration]
\label{lem:loo_mean_bernoulli}
    Let $U_1,\dots,U_n$ be i.i.d. Bernoulli random variables with mean $p$ and define $p_{-i}=\frac{1}{n-1}\sum_{j\neq i}U_j$.
    Then for any $\delta\in(0,1)$, with probability at least $1-\delta$,
    \begin{align*}
        \max_{i\in[n]} \abs{p_{-i}-p}
        \leq\sqrt{\frac{2p(1-p)\log(4n/\delta)}{n-1}}+\frac{2\log(4n/\delta)}{3(n-1)}.
    \end{align*}
\end{lemma}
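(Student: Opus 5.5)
The proof applies Bernstein's inequality to each leave-one-out mean separately, then takes a union bound over the $n$ indices and the two tails. Fix $i\in[n]$. The variables $\{U_j\}_{j\neq i}$ are $n-1$ i.i.d.\ Bernoulli($p$) random variables, and $p_{-i}$ is their empirical mean. Each centered summand $U_j-p$ satisfies $\abs{U_j-p}\leq 1$ and has variance $p(1-p)$. The two-sided Bernstein inequality for bounded variables with range bound $1$ then gives, for any $\delta'\in(0,1)$,
\begin{align*}
    \Pr\Bigl(\abs{p_{-i}-p} > \sqrt{\tfrac{2p(1-p)\log(2/\delta')}{n-1}}+\tfrac{2\log(2/\delta')}{3(n-1)}\Bigr)\leq \delta'.
\end{align*}
Here each one-sided tail has probability at most $\delta'/2$. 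This follows by inverting the standard bound $\exp\bigl(-\tfrac{m\epsilon^2}{2\sigma^2+2\epsilon/3}\bigr)$ with $m=n-1$ and $\sigma^2=p(1-p)$. That is the same form of Bernstein used in the proof of \Cref{lem:bernstein_excess_self_bounding}.

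Next we choose the per-index failure probability $\delta'=\delta/(2n)$, so that $\log(2/\delta')=\log(4n/\delta)$, which matches the stated bound. A union bound over the $n$ events $\{\abs{p_{-i}-p}>\varepsilon\}$ gives total failure probability at most $n\cdot\delta/(2n)=\delta/2\leq\delta$. Hence with probability at least $1-\delta$ the deviation bound holds simultaneously for all $i$, which is exactly the bound on $\max_i\abs{p_{-i}-p}$. The choice $4n$ rather than $2n$ leaves a factor of $2$ of slack. The proof does not need that slack, but it is harmless.

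The only point needing care is that the $p_{-i}$ are not independent across $i$. This does not matter, because the union bound requires no independence: each marginal event is controlled separately. For the inversion, I would solve $\tfrac{m\epsilon^2}{2\sigma^2+2\epsilon/3}=L$ as a quadratic in $\epsilon$. Using $\sqrt{a+b}\leq\sqrt a+\sqrt b$, its positive root is at most $\sqrt{2\sigma^2L/m}+\tfrac{2L}{3m}$. This yields the claimed additive form.
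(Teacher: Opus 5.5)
Your proof is correct and follows essentially the same route as the paper: Bernstein's inequality for $\sum_{j\neq i}(U_j-p)$ at each fixed $i$, then a union bound over the $n$ indices and both tails, which needs no independence across the $p_{-i}$. The only difference is how the failure probability is split. The paper gives $\delta/(2n)$ to each of the $2n$ one-sided events, while you use a two-sided bound at level $\delta/(2n)$ per index, which makes the slack in $\log(4n/\delta)$ explicit. Your inversion of the Bernstein tail via $\sqrt{a+b}\leq\sqrt{a}+\sqrt{b}$ is also correct.
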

\begin{proof}
    For any $i$, by Bernstein's inequality for $\sum_{j\neq i} (U_j-p)$,
    with probability at least $1-\delta/(2n)$,
    \begin{align*}
        \abs{p_{-i}-p}
        \leq\sqrt{\frac{2p(1-p)\log(\frac{2n}{\delta})}{n-1}}+\frac{2\log(\frac{2n}{\delta})}{3(n-1)}.
    \end{align*}
    Applying a union bound over $i\in[n]$ and both signs gives the stated bound.
\end{proof}

\begin{proof}[Proof of \Cref{prop:tgt_err_clean}]
    Applying \Cref{lem:loo_mean_bernoulli} to $U_i=r(y_i)$, we obtain $\max_i\abs{p_{-i}-p_t}\leq \varepsilon_n(p_t,\delta)$ with probability at least $1-\delta$.
    
    \noindent\textbf{\pmdpart.} 
    For \pmdpart, write $a=e^{1/\tau}-1$ and $Z_t=1+a p_t$.
    Since $r(y_i)\in\{0,1\}$, we have
    \begin{align*}
        \Delta_i
        =\log(1+a p_t)-\log(1+a p_{-i})
        =\frac{a}{1+a\xi_i}(p_t-p_{-i})
    \end{align*}
    for some $\xi_i$ between $p_t$ and $p_{-i}$ by the mean value theorem. 
    Therefore, 
    \begin{align*}
        \abs{\Delta_i}
        \leq\frac{a}{1+a(p_t-\abs{p_{-i}-p_t})_+}\abs{p_{-i}-p_t}.
    \end{align*}
    Meanwhile, there is always $\tau\log(1+ap)\in[0, 1]$, and thus $\abs{\Delta_i}\leq\frac{1}{\tau}$. 
    In the event that $\max_{i\in[n]}\abs{p_{-i}-p_t}\leq \varepsilon_n(p_t,\delta)$, combining the inequalities, squaring and averaging over $i$ yields \eqref{eq:tgt_err_part_clean}.

    \noindent\textbf{\pmdmean.}
    Recall that $\widetilde{s}^\star_{-i}(y_i)$ is defined in \eqref{eq:loo-s-mean} and $s^\star(y)=\log\frac{\pi^\mathrm{mean}_{t+1}(y)}{\pi_t(y)}$.
    By \eqref{eq:pmdmean_lambertW_form}, we have
    \begin{align*}
        s^\star(y)
        =\frac{r(y)-p_t}{\tau}-\cW\Bigl(\frac{\lambda}{\tau^2}\exp\Bigl(\frac{r(y)-p_t}{\tau}\Bigr)\Bigr).
    \end{align*}
    Therefore, for each $i\in[n]$,
    \begin{align*}
        \Delta_i
        =\widetilde{s}^\star_{-i}(y_i)-s^\star(y_i)
        =\frac{p_t-p_{-i}}{\tau} + \cW\Bigl(\frac{\lambda}{\tau^2}\exp\Bigl(\frac{r(y_i)-p_t}{\tau}\Bigr)\Bigr).
    \end{align*}
    Using $(a+b)^2\leq 2a^2+2b^2$ and averaging over $i$ gives
    \begin{align*}
        \overline{\Delta^2}
        \leq \frac{2\max_i\abs{p_{-i}-p_t}^2}{\tau^2} + \frac{2}{n}\sum_{i=1}^n \cW\Bigl(\frac{\lambda}{\tau^2}\exp\Bigl(\frac{r(y_i)-p_t}{\tau}\Bigr)\Bigr)^2.
    \end{align*}
    On the event $\max_i\abs{p_{-i}-p_t}\leq \varepsilon_n(p_t,\delta)$, the first term is bounded by $\frac{2\varepsilon_n(p_t,\delta)^2}{\tau^2}$.
    For the second term, since $r(y_i)\in\{0,1\}$ it takes only two values:
    \begin{align*}
        w_+\coloneqq\cW\Bigl(\frac{\lambda}{\tau^2}\exp\Bigl(\frac{1-p_t}{\tau}\Bigr)\Bigr),
        \quad
        w_-\coloneqq\cW\Bigl(\frac{\lambda}{\tau^2}\exp\Bigl(-\frac{p_t}{\tau}\Bigr)\Bigr).
    \end{align*}
    Writing $\widehat{p}_t\coloneqq \frac{1}{n}\sum_{i=1}^n r(y_i)$, we have
    \begin{align*}
        \frac{1}{n}\sum_{i=1}^n\cW\Bigl(\frac{\lambda}{\tau^2}\exp\Bigl(\frac{r(y_i)-p_t}{\tau}\Bigr)\Bigr)^2
        =\widehat{p}_t w_+^2 + (1-\widehat{p}_t) w_-^2.
    \end{align*}
    By the asymptotics used in the proof of \Cref{thm:pmdmean_solution} in \Cref{sec:proof_thm1}, as $\tau\to 0$ with fixed $p_t\in(0,1)$ we have $\tau w_+=(1-p_t)+o(1)$ and $\tau w_-=o(1)$, and hence for sufficiently small $\tau$,
    \begin{align*}
        w_+^2 \lesssim \frac{(1-p_t)^2}{\tau^2},
        \quad
        w_-^2 = o\Bigl(\frac{1}{\tau^2}\Bigr).
    \end{align*}
    Since $\max_i\abs{p_{-i}-p_t}\leq\varepsilon_n(p_t,\delta)$, we have 
    \begin{align*}
        \abs{\widehat{p}_t-p_t}
        =\abs{\frac{1}{n}\sum_{i=1}^n(p_{-i} - p_t)}
        \leq\frac{1}{n}\sum_{i=1}^n\abs{p_{-i} - p_t}
        \leq\varepsilon_n(p_t,\delta). 
    \end{align*}
    Combining the above bounds yields
    \begin{align*}
        \frac{1}{n}\sum_{i=1}^n\cW\Bigl(\frac{\lambda}{\tau^2}\exp\Bigl(\frac{r(y_i)-p_t}{\tau}\Bigr)\Bigr)^2
        \lesssim \frac{p_t(1-p_t)^2}{\tau^2}\left(1+\varepsilon_n(p_t,\delta)\right).
    \end{align*}
    Substituting back proves \eqref{eq:tgt_err_mean_clean}.
\end{proof}

\section{Refined Analysis for \pmdmean}\label{sec:refined}
To refine the analysis of \pmdmean, we first connect the population squared loss $\mathcal{L}_t$ in \eqref{eq:population-loss} with ideal target $s^\star(y)=\log\frac{\pi_{t+1}^\mathrm{mean}(y)}{\pi_t(y)}$ with the population objective of \pmdmean\ in \eqref{eq:L_mean}. 
\begin{lemma}[Connection of losses for \pmdmean]\label{lem:pmdmean_quad_growth}
    Fix a global step $t$ and write $\Delta_y\coloneqq r(y)-\E_{y^\prime\sim\pi_t}[r(y^\prime)]$.
    Define the \pmdmean\ population objective, i.e., bandit specialization of \eqref{eq:L_mean}:
    \begin{align}\label{eq:L_mean_t}
        \mathcal{L}^{\mathrm{mean}}_t(\pi)
        \coloneqq\frac{1}{2}\E_{y\sim\pi_t}\Bigl[\Bigl(s_\pi(y)-\frac{\Delta_y}{\tau}\Bigr)^2\Bigr].
    \end{align}
    Let $\pi^\star_{t+1}$ be the ideal \pmdmean\ update and $s^\star=s_{\pi^\star_{t+1}}$.
    Then for any $\pi\in\Pi$,
    \begin{align}\label{eq:pmdmean_quad_growth}
        \mathcal{L}^{\mathrm{mean}}_t(\pi)-\mathcal{L}^{\mathrm{mean}}_t(\pi^\star_{t+1}) =\mathcal{L}_t(\pi) + \frac{\lambda}{\tau^2}\KL(\pi^\star_{t+1}~\|~\pi)\geq\mathcal{L}_t(\pi),
    \end{align}
    where $\lambda\geq 0$ is the KKT dual multiplier in \eqref{eq:u_kkt_final}.
\end{lemma}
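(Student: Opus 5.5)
The plan is to expand the square around $s^\star$ and use the KKT condition \eqref{eq:u_kkt_final}, written for the \pmdmean\ population problem, to identify the cross term.

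Write $s_\pi - \Delta_y/\tau = (s_\pi - s^\star) + (s^\star - \Delta_y/\tau)$. Expanding the square gives
\begin{align*}
\mathcal{L}^{\mathrm{mean}}_t(\pi)-\mathcal{L}^{\mathrm{mean}}_t(\pi^\star_{t+1})
=\mathcal{L}_t(\pi)+\E_{y\sim\pi_t}\Bigl[(s_\pi(y)-s^\star(y))\Bigl(s^\star(y)-\frac{\Delta_y}{\tau}\Bigr)\Bigr].
\end{align*}

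By \eqref{eq:u_kkt_final}, applied with $u=s^\star$, we have $s^\star(y)-\Delta_y/\tau=-\frac{\lambda}{\tau^2}e^{s^\star(y)}$. The cross term therefore equals
\begin{align*}
\frac{\lambda}{\tau^2}\,\E_{y\sim\pi_t}\bigl[e^{s^\star(y)}\bigl(s^\star(y)-s_\pi(y)\bigr)\bigr].
\end{align*}
Since $e^{s^\star(y)}\pi_t(y)=\pi^\star_{t+1}(y)$ and $s^\star-s_\pi=\log\frac{\pi^\star_{t+1}}{\pi}$, this is exactly $\frac{\lambda}{\tau^2}\KL(\pi^\star_{t+1}~\|~\pi)$. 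Here every $\pi\in\Pi$ is a full-support probability distribution, so the KL divergence is finite and well defined.

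Nonnegativity of the KL divergence, together with $\lambda\ge 0$ from \Cref{thm:pmdmean_solution}, then gives the final inequality.

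The only point requiring care is that the multiplier $\lambda$ in \eqref{eq:u_kkt_final} is the same as in \Cref{thm:pmdmean_solution}, with the correct sign convention. This holds because \eqref{eq:KKT-mean} rearranges directly to $u-\Delta_y/\tau+\frac{\lambda}{\tau^2}e^u=0$. I expect this identification to be the main, though mild, obstacle. The rest of the argument is algebra, valid for every $\pi\in\Pi$ with no realizability assumption needed.
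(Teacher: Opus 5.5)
Your proposal is correct and follows the paper's proof essentially line by line. You expand the square around $s^\star$, substitute the KKT identity $s^\star-\Delta_y/\tau=-\frac{\lambda}{\tau^2}e^{s^\star}$ into the cross term, and recognize $\E_{\pi_t}[e^{s^\star}(s^\star-s_\pi)]$ as $\KL(\pi^\star_{t+1}~\|~\pi)$. Your check that \eqref{eq:KKT-mean} rearranges to \eqref{eq:u_kkt_final} with the same $\lambda\ge 0$ is correct, and it is a useful detail that the paper leaves implicit.
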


Using \Cref{lem:pmdmean_quad_growth}, we can refine the ERM analysis for \pmdmean\ and eliminate the error floor in target estimation error. 
\begin{lemma}[Refined ERM for \pmdmean]\label{lem:erm_pmdmean_constrained_refined}
    Suppose \Cref{asm:realizable_plugin,asm:opt_err,asm:bounded_logratio,asm:finite_class} hold, $r(y)\in\{0,1\}$ and define $p_t\coloneqq \E_{y\sim\pi_t}[r(y)]$.
    Let $\varepsilon_n(p_t,\delta)$ be as in \Cref{prop:tgt_err_clean}.
    Then for any $\delta\in(0,1)$, with probability at least $1-\delta$,
    \begin{align}\label{eq:erm_pmdmean_constrained_refined}
        \mathcal{L}_t(\widehat{\pi}_{t+1})
        \lesssim
        \frac{(B+\frac{1}{\tau})^2\log(\abs{\Pi}/\delta)}{n} +\epsilon_{\mathrm{opt}} +\frac{\varepsilon_n(p_t,\delta)}{\tau}\Bigl(B+\frac{p_t}{\tau}\Bigr) +\frac{\varepsilon_n(p_t,\delta)^2}{\tau^2}.
    \end{align}
\end{lemma}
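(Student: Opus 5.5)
The plan is to run the argument of \Cref{thm:erm_loo}, but with the excess \pmdmean\ objective in place of the clean residual loss, and to use \Cref{lem:pmdmean_quad_growth} to transfer the result back to $\mathcal{L}_t$. For $\pi\in\Pi$, define the centered advantage $g(y)\coloneqq \Delta_y/\tau=(r(y)-p_t)/\tau$ and the excess-loss random variable
\begin{align*}
    Z_i(\pi)\coloneqq \tfrac12\bigl(s_\pi(y_i)-g(y_i)\bigr)^2-\tfrac12\bigl(s^\star(y_i)-g(y_i)\bigr)^2 .
\end{align*}
Then $\E[Z_i(\pi)]=\mathcal{L}^{\mathrm{mean}}_t(\pi)-\mathcal{L}^{\mathrm{mean}}_t(\pi^\star_{t+1})\eqqcolon\mu(\pi)$, and $\mu(\pi)\ge\mathcal{L}_t(\pi)\ge0$ by \Cref{lem:pmdmean_quad_growth}.

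Rewrite $Z_i=\tfrac12 f_\pi(y_i)\bigl(f_\pi(y_i)+2(s^\star(y_i)-g(y_i))\bigr)$, where $f_\pi=s_\pi-s^\star$. Since $|f_\pi|\le 2B$ and $|s^\star-g|\le B+1/\tau$, we get $|Z_i|\lesssim B(B+1/\tau)$. We also get $\E[Z_i^2]\lesssim (B+1/\tau)^2\,\E[f_\pi^2]=2(B+1/\tau)^2\mathcal{L}_t(\pi)\le 2(B+1/\tau)^2\mu(\pi)$. This is exactly the self-bounding variance condition of \Cref{lem:bernstein_excess_self_bounding} with $v,R\asymp(B+1/\tau)^2$. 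Applying that lemma with a union bound over $\Pi$ gives, uniformly,
\begin{align*}
    \mathcal{L}_t(\pi)\le\mu(\pi)\le 2\cdot\tfrac1n\textstyle\sum_i Z_i(\pi)+O\bigl((B+1/\tau)^2\log(|\Pi|/\delta)/n\bigr).
\end{align*}

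Next I would control the empirical excess at $\widehat\pi_{t+1}$. Since $\widehat{\mathcal{L}}_t$ uses the LOO target $\widetilde s^\star_{-i}(y_i)=g(y_i)+e_i$ with $e_i\coloneqq(p_t-p_{-i})/\tau$, expanding the squares gives
\begin{align*}
    \tfrac1n\textstyle\sum_i Z_i(\pi)=\widehat{\mathcal{L}}_t(\pi)-\widehat{\mathcal{L}}_t(\pi^\star_{t+1})+\tfrac1n\textstyle\sum_i e_i f_\pi(y_i).
\end{align*}
At $\pi=\widehat\pi_{t+1}$, \Cref{asm:opt_err} together with realizability (\Cref{asm:realizable_plugin}) bounds the first difference by $\epsilon_{\mathrm{opt}}$. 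For the cross term, I would not simply bound $|f|\le 2B$, since that gives $\varepsilon_n B/\tau$, which matches only the first half of the target bound. Instead, I would split $f_\pi=s_\pi-g+(g-s^\star)$ and use that $\sum_i e_i$ is small. The variability of $p_{-i}$ across $i$ is tied to $r(y_i)$: indeed $p_{-i}=(n\widehat p_t-r(y_i))/(n-1)$. Hence $e_i=(p_t-\widehat p_t)/\tau+O(1/(n\tau))$, which produces terms of order $\varepsilon_n|s_\pi|/\tau\le\varepsilon_n B/\tau$ and $\varepsilon_n\cdot\tfrac1n\sum_i|g(y_i)|/\tau$. The empirical mean $\tfrac1n\sum_i|r(y_i)-p_t|\approx 2p_t(1-p_t)+O(\varepsilon_n)$ supplies the $p_t\varepsilon_n/\tau^2$ term, and the $O(\varepsilon_n)$ fluctuation supplies $\varepsilon_n^2/\tau^2$. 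Alternatively, the cross term can be absorbed with Young's inequality, $e_if\le \tfrac14 f^2+e_i^2$, using the clean empirical loss as in \Cref{thm:erm_loo}. Combining everything on the intersection of the concentration event of \Cref{lem:loo_mean_bernoulli} and the Bernstein event (rescaling $\delta$) yields \eqref{eq:erm_pmdmean_constrained_refined}.

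The main obstacle is the cross term $\tfrac1n\sum_i e_if(y_i)$. One must avoid the crude $\varepsilon_n^2/\tau^2$-type floor from $\overline{\Delta^2}$ while still obtaining the $p_t$-dependent factor. I expect the $g(y_i)$ component to need care, because $g$ has magnitude $(1-p_t)/\tau$ on positive samples, and I would lean on the Bernoulli structure of $r$ to get the factor $p_t$.

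Finally, \Cref{lem:pmdmean_refined} follows from \eqref{eq:erm_pmdmean_constrained_refined} by plugging in $B=B_t^{\mathrm{mean}}=p_t/\tau+o(1)\lesssim1/\tau$ from \Cref{prop:BB_mean_asymp}. With this choice, $(B+1/\tau)^2\lesssim1/\tau^2$ and $\tfrac{\varepsilon_n}{\tau}(B+p_t/\tau)\lesssim p_t\varepsilon_n/\tau^2$.
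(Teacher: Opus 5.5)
Your proposal is correct, and its skeleton is the paper's. You use the excess variables $Z_i(\pi)$, the self-bounding variance $\E[Z_i^2]\lesssim(B+\frac1\tau)^2\mathcal{L}_t(\pi)\le(B+\frac1\tau)^2\mu(\pi)$ via \Cref{lem:pmdmean_quad_growth}, \Cref{lem:bernstein_excess_self_bounding} with a union bound over $\Pi$, and realizability plus \Cref{asm:opt_err} for the empirical excess on the LOO concentration event.

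The difference is how you handle the LOO perturbation, and there your route is better than you realize. The paper bounds $\sup_{\pi\in\Pi}|\widehat{\mathcal{L}}_t(\pi)-\widehat{\mathcal{L}}^{\mathrm{mean}}_t(\pi)|$ and pays it twice, at $\widehat\pi_{t+1}$ and at $\pi^\star_{t+1}$. Its residual $a_i=s_\pi(y_i)-\Delta_{y_i}/\tau$ has a $1/\tau$-sized part. That forces the estimate $\frac1n\sum_i|r(y_i)-p_t|\le 2p_t+\varepsilon_n+\frac1n$, which is where the $p_t/\tau$ and $\varepsilon_n^2/\tau^2$ terms come from. Your identity $\widehat\mu(\pi)=\widehat{\mathcal{L}}_t(\pi)-\widehat{\mathcal{L}}_t(\pi^\star_{t+1})+\frac1n\sum_i e_if_\pi(y_i)$ is exact, since the $e_i^2$ terms cancel. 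It also replaces $a_i$ by $f_\pi=s_\pi-s^\star$, which realizability bounds by $2B$.

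So the ``main obstacle'' you flag does not exist. On the concentration event, $\widehat\mu(\widehat\pi_{t+1})\le\epsilon_{\mathrm{opt}}+2B\varepsilon_n/\tau\le\epsilon_{\mathrm{opt}}+2\cdot\frac{\varepsilon_n}{\tau}(B+\frac{p_t}{\tau})$. An upper bound only has to be dominated by the right-hand side; it need not reproduce each summand. This already gives the lemma, in fact the slightly sharper bound $\mathcal{L}_t(\widehat\pi_{t+1})\lesssim\frac{(B+1/\tau)^2\log(|\Pi|/\delta)}{n}+\epsilon_{\mathrm{opt}}+\frac{B\varepsilon_n}{\tau}$, with no $\varepsilon_n^2/\tau^2$ term. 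The $p_t$-dependence you wanted to extract from the Bernoulli structure enters through $B$ itself: with $B=B^{\mathrm{mean}}_t\approx p_t/\tau$ you recover the $p_t\varepsilon_n/\tau^2$ of \Cref{lem:pmdmean_refined}.

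Your subsequent split $f_\pi=(s_\pi-g)+(g-s^\star)$ reintroduces exactly the $1/\tau$-sized residual your identity had removed, and just reconstructs the paper's estimate. It is valid, since $e_i=(p_t-\widehat p_t)/\tau+O(1/(n\tau))$ and $\frac1n\sum_i|r(y_i)-p_t|=2p_t(1-p_t)+O(\varepsilon_n)$ are both correct, but it is unnecessary.

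The Young's-inequality alternative does not close as written. \Cref{lem:bernstein_excess_self_bounding} doubles the empirical term, so the absorbed $\tfrac14 f^2$ returns as $\widehat{\mathcal{L}}^{\mathrm{clean}}_t(\widehat\pi_{t+1})$. Upper-tail concentration only gives this as at most $(1+\eta)\mathcal{L}_t(\widehat\pi_{t+1})$ plus a remainder, i.e., with coefficient at least one. To make that route work you would need a smaller Young constant together with an explicit upper-tail bound on the clean empirical loss.
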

By substituting $B=\frac{p_t}{\tau}$ for \pmdmean, we obtain \Cref{lem:pmdmean_refined} in \Cref{sec:theory}.

\subsection{Proof of \Cref{lem:pmdmean_quad_growth}}
\label{sec:proof_pmdmean_quad_growth}

\begin{proof}[Proof of \Cref{lem:pmdmean_quad_growth}]
    Denote $g(y)\coloneqq \Delta_y/\tau$ for brevity.
    For any $\pi\in\Pi$, we have
    \begin{align*}
        \mathcal{L}^{\mathrm{mean}}_t(\pi)-\mathcal{L}^{\mathrm{mean}}_t(\pi^\star_{t+1})
        &=\frac{1}{2}\E_{\pi_t}\big[(s_\pi-g)^2-(s^\star-g)^2\big]\\
        &=\frac{1}{2}\E_{\pi_t}\big[(s_\pi-s^\star)^2\big]
        +\E_{\pi_t}\big[(s_\pi-s^\star)(s^\star-g)\big]\\
        &=\mathcal{L}_t(\pi)+\E_{\pi_t}\big[(s_\pi-s^\star)(s^\star-g)\big].
    \end{align*}
    By the KKT conditions \eqref{eq:u_kkt_final} with $u=s^\star$,
    \begin{align*}
        s^\star(y)-g(y)=-\frac{\lambda}{\tau^2}e^{s^\star(y)}.
    \end{align*}
    Combining the identities and $e^{s^\star(y)}=\pi^\star_{t+1}(y)/\pi_t(y)$, we get
    \begin{align*}
        \E_{\pi_t}\big[(s_\pi-s^\star)(s^\star-g)\big]
        &=-\frac{\lambda}{\tau^2}\E_{\pi_t}\big[(s_\pi-s^\star)e^{s^\star}\big]\\
        &=-\frac{\lambda}{\tau^2}\E_{y\sim\pi^\star_{t+1}}\Bigl[\log\frac{\pi(y)}{\pi^\star_{t+1}(y)}\Bigr]\\
        &=\frac{\lambda}{\tau^2}\KL(\pi^\star_{t+1}~\|~\pi)
        \geq 0,
    \end{align*}
    which proves \eqref{eq:pmdmean_quad_growth}.
\end{proof}

\subsection{Proof of \Cref{lem:erm_pmdmean_constrained_refined}}
\label{sec:proof_erm_pmdmean_constrained_refined}
\begin{proof}[Proof of \Cref{lem:erm_pmdmean_constrained_refined}]
    Recall the \pmdmean\ leave-one-out target \eqref{eq:loo-s-mean}:
    \begin{align*}
        \widetilde{s}^\star_{-i}(y_i)=\frac{1}{\tau}\big(r(y_i)-p_{-i}\big),
        \quad
        p_{-i}\coloneqq \frac{1}{n-1}\sum_{j\neq i} r(y_j).
    \end{align*}
    Also recall $p_t=\E_{y\sim\pi_t}[r(y)]$ and $\Delta_{y_i}=r(y_i)-p_t$.

    % \paragraph{Step 1: Decompose the empirical targets.}
    We first decompose the empirical targets as 
    \begin{align}\label{eq:target_decomp_pmdmean}
        \widetilde{s}^\star_{-i}(y_i)
        =\frac{\Delta_{y_i}}{\tau} +\Delta^{\mathrm{loo}}_i,
        \quad
        \Delta^{\mathrm{loo}}_i \coloneqq \frac{p_t-p_{-i}}{\tau}.
    \end{align}
    Define the empirical loss with the population baseline target:
    \begin{align}\label{eq:ghost_emp_loss_mean}
        \widehat{\mathcal{L}}^{\mathrm{mean}}_t(\pi) \coloneqq\frac{1}{2n}\sum_{i=1}^n\Bigl(s_\pi(y_i)-\frac{\Delta_{y_i}}{\tau}\Bigr)^2.
    \end{align}
    By \eqref{eq:target_decomp_pmdmean}, for each $\pi\in\Pi$,
    \begin{align*}
        \widehat{\mathcal{L}}_t(\pi)-\widehat{\mathcal{L}}^{\mathrm{mean}}_t(\pi)
        &=\frac{1}{2n}\sum_{i=1}^n\Bigl[\bigl(a_i-\Delta^{\mathrm{loo}}_i\bigr)^2-a_i^2\Bigr]\\
        &=\frac{1}{2n}\sum_{i=1}^n\Bigl[(\Delta^{\mathrm{loo}}_i)^2-2a_i\cdot\Delta^{\mathrm{loo}}_i\Bigr],
    \end{align*}
    where $a_i\coloneqq s_\pi(y_i)-\frac{\Delta_{y_i}}{\tau}$. 
    Thus we have 
    \begin{align}
    \label{eq:loo_perturb_identity}
        \abs{\widehat{\mathcal{L}}_t(\pi)-\widehat{\mathcal{L}}^{\mathrm{mean}}_t(\pi)}
        \leq
        \frac{1}{n}\sum_{i=1}^n \abs{a_i}\cdot \abs{\Delta^{\mathrm{loo}}_i}
        +\frac{1}{2n}\sum_{i=1}^n(\Delta^{\mathrm{loo}}_i)^2.
    \end{align}
    
    Let $\mathcal{E}$ be the event from \Cref{prop:tgt_err_clean} that
    $\max_i \abs{p_{-i}-p_t}\leq \varepsilon_n(p_t,\delta/2)$.
    On $\mathcal{E}$ we have $\abs{\Delta^{\mathrm{loo}}_i}\leq \varepsilon_n(p_t,\delta/2)/\tau$, hence
    \begin{align}\label{eq:loo_perturb_on_E}
        \sup_{\pi\in\Pi}
        \abs{\widehat{\mathcal{L}}_t(\pi)-\widehat{\mathcal{L}}^{\mathrm{mean}}_t(\pi)}
        \leq
        \frac{\varepsilon_n}{\tau}\cdot
        \sup_{\pi\in\Pi}\frac{1}{n}\sum_{i=1}^n \abs{a_i}
        +\frac{\varepsilon_n^2}{2\tau^2},
    \end{align}
    where we write $\varepsilon_n\coloneqq \varepsilon_n(p_t,\delta/2)$.
    
    We now bound the $\frac{1}{n}\sum_{i=1}^n\abs{a_i}$ term. 
    By \Cref{asm:bounded_logratio}, $\abs{s_\pi(y)}\leq B$ for all $\pi\in\Pi, y\in\mathcal{Y}$.
    Also, since $r(y_i)\in\{0,1\}$,
    \begin{align*}
        \abs{\Delta_{y_i}}
        =\abs{r(y_i)-p_t}
        =
        \begin{cases}
        p_t,& r(y_i)=0,\\
        1-p_t,& r(y_i)=1.
        \end{cases}
    \end{align*}
    Thus for any $\pi$ and any $i$,
    \begin{align*}
        \abs{a_i}
        =\abs{s_\pi(y_i)-\frac{\Delta_{y_i}}{\tau}}
        \leq \abs{s_\pi(y_i)}+\frac{\abs{\Delta_{y_i}}}{\tau}
        \leq B+\frac{\abs{r(y_i)-p_t}}{\tau}.
    \end{align*}
    Averaging over $i$ yields
    \begin{align}\label{eq:avg_ai_bound}
        \frac{1}{n}\sum_{i=1}^n \abs{a_i}
        \leq B+\frac{1}{\tau}\cdot \frac{1}{n}\sum_{i=1}^n \abs{r(y_i)-p_t}.
    \end{align}
    Let $\bar{r}\coloneqq \frac{1}{n}\sum_{i=1}^n r(y_i)$. For binary rewards,
    \begin{align*}
        \frac{1}{n}\sum_{i=1}^n \abs{r(y_i)-p_t}
        = p_t+(1-2p_t)\bar{r}
        \leq 2p_t+\abs{\bar{r}-p_t}.
    \end{align*}
    Moreover, for any fixed $i$,
    \begin{align*}
        \bar{r}=\frac{(n-1)p_{-i}+r(y_i)}{n},
    \end{align*}
    hence
    \begin{align*}
        \abs{\bar{r}-p_t}
        \leq
        \frac{n-1}{n}\abs{p_{-i}-p_t}+\frac{1}{n}\abs{r(y_i)-p_t}
        \leq
        \max_{j} \abs{p_{-j}-p_t}+\frac{1}{n}.
    \end{align*}
    On $\mathcal{E}$ this gives $\abs{\bar{r}-p_t}\leq \varepsilon_n+\frac{1}{n}$, and therefore
    \begin{align}\label{eq:avg_abs_centered_r}
        \frac{1}{n}\sum_{i=1}^n \abs{r(y_i)-p_t}
        \leq
        2p_t+\varepsilon_n+\frac{1}{n}.
    \end{align}
    Combining \eqref{eq:avg_ai_bound} and \eqref{eq:avg_abs_centered_r} and substituting into \eqref{eq:loo_perturb_on_E} yields
    \begin{align}
        \label{eq:loo_perturb_final}
        \sup_{\pi\in\Pi}
        \abs{\widehat{\mathcal{L}}_t(\pi)-\widehat{\mathcal{L}}^{\mathrm{mean}}_t(\pi)}
        &\lesssim\frac{\varepsilon_n}{\tau}\Bigl(B+\frac{p_t}{\tau} + \frac{\varepsilon_n}{\tau} + \frac{1}{n\tau}\Bigr)\nonumber\\
        &\lesssim\frac{\varepsilon_n}{\tau}\Bigl(B+\frac{p_t}{\tau}\Bigr)
        +\frac{\varepsilon_n^2}{\tau^2}.
    \end{align}
    
    We now bound the population excess risk of \pmdmean. 
    Recall that 
    \begin{align*}
        &\mathcal{L}^{\mathrm{mean}}_t(\pi)
        \coloneqq\frac{1}{2}\E_{y\sim\pi_t}\Bigl[\Bigl(s_\pi(y)-\frac{\Delta_y}{\tau}\Bigr)^2\Bigr],\\
        &\widehat{\mathcal{L}}^{\mathrm{mean}}_t(\pi) \coloneqq\frac{1}{2n}\sum_{i=1}^n\Bigl(s_\pi(y_i)-\frac{\Delta_{y_i}}{\tau}\Bigr)^2.
    \end{align*}
    For $\pi\in\Pi$, define the pointwise loss
    \begin{align*}
        &\ell_\pi(y)\coloneqq \frac{1}{2}\Bigl(s_\pi(y)-\frac{\Delta_y}{\tau}\Bigr)^2,\quad 
        \ell^\star(y)\coloneqq \frac{1}{2}\Bigl(s^\star(y)-\frac{\Delta_y}{\tau}\Bigr)^2.
    \end{align*}
    Let
    \begin{align*}
        Z_i(\pi)\coloneqq \ell_\pi(y_i)-\ell^\star(y_i),
        \quad
        \mu(\pi)\coloneqq \E_{\pi_t}[Z_i(\pi)]
        =
        \mathcal{L}^{\mathrm{mean}}_t(\pi)-\mathcal{L}^{\mathrm{mean}}_t(\pi^\star_{t+1}),
        \quad
        \widehat{\mu}(\pi)\coloneqq \frac{1}{n}\sum_{i=1}^n Z_i(\pi).
    \end{align*}
    By \Cref{asm:bounded_logratio} and $\abs{\Delta_y}\leq 1$, we have
    \begin{align*}
        0
        \leq \ell_\pi(y)
        \leq \frac{1}{2} M_\tau^2,
    \end{align*}
    where $M_\tau\coloneqq B+\frac{1}{\tau}$, and hence $\abs{Z_i(\pi)}\leq \frac{1}{2} M_\tau^2$. 
    Moreover, 
    \begin{align*}
        Z_i^2(\pi)
        &=\frac{1}{4}(s^\star(y)-s_\pi(y))^2\left(s^\star(y)+s_\pi(y) - \frac{2\Delta_y}{\tau}\right)^2\\
        &\leq (s^\star(y)-s_\pi(y))^2 M_\tau^2,
    \end{align*}
    thus we have
    \begin{align*}
        \E[Z_i^2(\pi)]
        &\leq M_\tau^2\E[(s_\pi-s^\star)^2]\\
        &=2M_\tau^2\mathcal{L}_t(\pi)\\
        &\leq 2M_\tau^2\mu(\pi),
    \end{align*}
    where the last inequality comes from \Cref{lem:pmdmean_quad_growth}. 
    Apply \Cref{lem:bernstein_excess_self_bounding} with $v=2M_\tau^2$ and $R=\frac{1}{2} M_\tau^2$, then with probability at least $1-\delta^\prime$, for a fixed $\pi$, 
    \begin{align*}
        \mu(\pi)
        \leq 2\widehat{\mu}(\pi)+ O\left(\frac{M_\tau^2\log(1/\delta^\prime)}{n}\right).
    \end{align*}
    A union bound over $\Pi$ with $\delta^\prime=\frac{\delta}{2\abs{\Pi}}$ yields that with probability at least $1-\frac{\delta}{2}$, for all $\pi\in\Pi$,
    \begin{align}\label{eq:uniform_excess_mean}
        \mathcal{L}_t(\pi)
        \leq\mathcal{L}^{\mathrm{mean}}_t(\pi)-\mathcal{L}^{\mathrm{mean}}_t(\pi^\star_{t+1})
        \lesssim
        \widehat{\mathcal{L}}^{\mathrm{mean}}_t(\pi)-\widehat{\mathcal{L}}^{\mathrm{mean}}_t(\pi^\star_{t+1})
        +\frac{M_\tau^2\log(\abs{\Pi}/\delta)}{n}.
    \end{align}

    We now bound the empirical excess risk. 
    \begin{align*}
        &\widehat{\mathcal{L}}^{\mathrm{mean}}_t(\widehat{\pi}_{t+1}) -\widehat{\mathcal{L}}^{\mathrm{mean}}_t(\pi^\star_{t+1})\\
        =&\widehat{\mathcal{L}}^{\mathrm{mean}}_t(\widehat{\pi}_{t+1}) 
        - \widehat{\mathcal{L}}_t(\widehat{\pi}_{t+1}) + \underbrace{\widehat{\mathcal{L}}_t(\widehat{\pi}_{t+1})  - \widehat{\mathcal{L}}_t(\pi^\star_{t+1})}_{\leq \epsilon_\mathrm{opt}} + \widehat{\mathcal{L}}_t(\pi^\star_{t+1}) 
        -\widehat{\mathcal{L}}^{\mathrm{mean}}_t(\pi^\star_{t+1})\\
        \leq & 
        \epsilon_{\mathrm{opt}}
        +2\sup_{\pi\in\Pi}\abs{\widehat{\mathcal{L}}_t(\pi)-\widehat{\mathcal{L}}^{\mathrm{mean}}_t(\pi)}\\
        \lesssim& \epsilon_{\mathrm{opt}}
        +\frac{\varepsilon_n}{\tau}\Bigl(B+\frac{p_t}{\tau}\Bigr)
        +\frac{\varepsilon_n^2}{\tau^2},
    \end{align*}
    where the first inequality uses \Cref{asm:opt_err,asm:realizable_plugin} and the second inequality uses \eqref{eq:loo_perturb_final}. 
    Combining the above inequality with \eqref{eq:uniform_excess_mean}, we get 
    \begin{align*}
        \mathcal{L}_t(\pi)
        \lesssim \frac{M_\tau^2\log(\abs{\Pi}/\delta)}{n} + \epsilon_{\mathrm{opt}}
        +\frac{\varepsilon_n}{\tau}\Bigl(B+\frac{p_t}{\tau}\Bigr)
        +\frac{\varepsilon_n^2}{\tau^2},
    \end{align*}
    which is \eqref{eq:erm_pmdmean_constrained_refined}. 
\end{proof}

\end{document}